\documentclass[11pt]{article}

\usepackage[a4paper,margin=30mm]{geometry}
\usepackage{amsmath,amssymb,amsthm,mathtools}
\usepackage{microtype}
\usepackage[authoryear,round]{natbib}
\usepackage{authblk}
\usepackage[hidelinks]{hyperref}
\providecommand{\doi}[1]{%
  \href{https://doi.org/#1}{\nolinkurl{doi:#1}}}

\numberwithin{equation}{section}
\allowdisplaybreaks
\newtheorem{theorem}{Theorem}[section]
\newtheorem{lemma}[theorem]{Lemma}
\newtheorem{proposition}[theorem]{Proposition}
\newtheorem{corollary}[theorem]{Corollary}
\theoremstyle{definition}
\newtheorem{definition}[theorem]{Definition}
\theoremstyle{remark}
\newtheorem{remark}[theorem]{Remark}

\newcommand{\R}{\mathbb R}
\newcommand{\X}{\mathcal X}
\newcommand{\Hk}{\mathcal H_k}
\newcommand{\EI}{\operatorname{EI}}
\newcommand{\dist}{\operatorname{dist}}

\title{Simple-regret rates and minimax optimality of  \\
  fixed-prior expected improvement in \\ Mat\'ern and squared-exponential RKHSs}

 \author[1]{Emmanuel Vazquez}

\author[2]{S\'ebastien Petit} \affil[1]{Universit\'e Paris-Saclay,
  CNRS, CentraleSup\'elec,
  Laboratoire des Signaux et Syst\`emes, 91190 Gif-sur-Yvette, France\\
  \href{mailto:emmanuel.vazquez@centralesupelec.fr}%
  {emmanuel.vazquez@centralesupelec.fr}} \affil[2]{Laboratoire
  national de m\'etrologie et d'essais (LNE),
  1 rue Gaston Boissier, 75724 Paris Cedex 15, France\\
  \href{mailto:sebastien.petit@lne.fr}{sebastien.petit@lne.fr}}

\date{}

\hypersetup{ pdftitle={Simple-regret rates and minimax optimality of fixed-prior
  expected improvement in Mat\'ern and squared-exponential RKHSs},
  pdfauthor={Emmanuel Vazquez; S\'ebastien Petit},
  pdfsubject={Convergence analysis of expected improvement},
  pdfkeywords={Bayesian optimization, expected improvement, RKHS,
    posterior standard deviation, Kolmogorov widths} }

\begin{document}
\maketitle

\begin{abstract}
We study the expected improvement (EI) policy for minimizing a deterministic
objective function $f$ on a nonempty compact set $\X\subset\mathbb R^d$.
We assume that $f$ belongs to the RKHS $\Hk$ of a continuous
positive-semidefinite kernel $k$ on $\X$.  Function values are observed
exactly, and EI is computed
from a fixed zero-mean Gaussian-process model with covariance $\sigma^2k$.
After an initial design, the policy queries a point whose EI is at least
a fixed positive fraction of its maximum.

We identify the normalized posterior standard deviation at a candidate point
$x$ with the norm of the corresponding innovation in the canonical feature
space, namely the component of $k(x,\cdot)$ orthogonal to the span of the
preceding evaluation representers.  Sequential separation radii bound the
ranked innovation norms along arbitrary query sequences.  We estimate these
radii using Gram determinants and Kolmogorov widths for subspaces of
different dimensions, then combine the estimates with a one-step regret
inequality to obtain finite-budget bounds for simple regret.

After $N$ post-initial queries, simple regret is $O(N^{-\nu/d})$ for
isotropic Mat\'ern kernels of smoothness $\nu>0$.  For the isotropic
squared-exponential kernel, simple regret is
$O(\exp[-c_1\min\{N,\allowbreak N^{1/d}\log(eN)\}])$ for some $c_1>0$.
With exact EI maximization, it is
$O(\exp[-c_2N^{1/d}\allowbreak\log(eN)])$ for some $c_2>0$.  For every fixed
$B\geq0$, these bounds are uniform over the RKHS ball of radius $B$.  The
zero prior mean may be replaced by a known
$\mu_0\in\mathcal C(\X)$ when
$f-\mu_0\in\Hk$.

If $\X$ has nonempty interior and $B>0$, exact~EI is minimax-rate optimal
over the radius-$B$ RKHS ball for Mat\'ern kernels, even among randomized
strategies whose final recommendation need not be a query point.  For the squared-exponential
kernel, it is minimax-rate optimal up to constants in the exponent among
deterministic methods whose final recommendation may be any point of $\X$.
\end{abstract}

\medskip
\noindent\textbf{Keywords:}
Bayesian optimization; expected improvement; simple regret; minimax rates;
reproducing-kernel Hilbert spaces; sequential separation radii; Kolmogorov
widths.

\section{Introduction}
\label{sec:introduction}

Let $\X\subset\mathbb R^d$, $d\ge 1$, be nonempty and compact, and let
$k:\X\times\X\to\mathbb R$ be continuous and positive semidefinite.
Write $\Hk$ for the reproducing-kernel Hilbert space (RKHS) of $k$.  We seek to
minimize a deterministic function $f\in\Hk$ satisfying
$\lVert f\rVert_{\Hk}\leq B$, where $B\geq0$.  Function values are
observed exactly.

Starting from an initial design of $n_0\geq1$ query points
$x_1,\ldots,x_{n_0}$, we use expected improvement (EI) as a sampling
criterion to select $x_{n_0+1},x_{n_0+2},\ldots$  To define EI, we
use a fixed zero-mean Gaussian-process model $Z$ with covariance
$\sigma^2k$, $\sigma>0$.  Given the past query points
$x_1,\ldots,x_n$, $n\geq n_0$, and their observed values
$f(x_1),\ldots,f(x_n)$, we condition the model on $Z(x_i)=f(x_i)$,
$1\leq i\leq n$.  For each candidate point $x\in\X$, the posterior
distribution of $Z(x)$ serves as a predictive distribution for $f(x)$.
The EI at $x$ is the posterior expected amount by which $Z(x)$ falls
below the best observed value.  The (exact)~EI policy selects a point
$x_{n+1}$ at which EI is maximal.  A weak-EI policy
selects a point whose EI is at least a fixed fraction $\eta\in(0,1]$
of the maximum.  We evaluate the objective function at $x_{n+1}$ and
condition the model on the enlarged data.  We call the resulting sequential
design $(x_n)_{n\geq1}$ of query points a fixed-prior EI trajectory.

With one evaluation remaining and the final estimate of the minimizer
required to be one of the query points, querying an EI maximizer is a
one-step Bayes-optimal strategy: it minimizes the posterior
expectation of the best value observed after that evaluation
\citep{vazquezBect2010}.  Over a longer horizon, repeating this
one-step rule yields a myopic policy that nevertheless often performs well
in practice and is widely used both in machine learning and in engineering
design based on numerical simulations.

\paragraph{Historical background.}
Early probabilistic methods for sequential optimization appeared in the 1960s
and 1970s \citep{matheronFormery1962,kushner1964,mockus1975}, and the notion
of EI emerged during the same period \citep{mockus1978}.
\citet[Chapter~12]{garnett2023} gives a historical account.
\citet{jones1998} popularized EI for computer experiments through the efficient
global optimization algorithm.  EI has since become a standard sampling
criterion in what is now commonly called Bayesian optimization.

\paragraph{Previous results and minimax comparisons.}
When EI is maximized exactly and~$k$ has the no-empty-ball property
(Definition~\ref{def:neb}), the query sequence is dense in $\X$ for every
$f\in\Hk$, and the best observed value converges to the global minimum
\citep[Theorem~6]{vazquezBect2010}.
(For an objective function sampled from the Gaussian-process prior,
\citet[Theorem~7]{vazquezBect2010} also prove almost-sure
density for the exact~EI policy under the no-empty-ball property, but this
probabilistic result lies outside the deterministic RKHS setting considered
here.)

Let $T>n_0$ be the total number of evaluations and put $N:=T-n_0$.
Define the simple regret after $T$ evaluations by
$$
  r_T:=\min_{1\leq i\leq T}f(x_i)-\min_{x\in\X}f(x)\,.
$$
For exact~EI strategies whose initial design is chosen independently of $f$,
possibly at random, and with arbitrary selection among EI maximizers,
\citet[Theorem~2]{bull2011} proves worst-case expected simple-regret bounds
when the covariance parameters are fixed and an unknown constant mean is
integrated under a flat prior.  If $\X$ has nonempty
interior and the kernel is Mat\'ern with smoothness~$\nu$, the upper bound is
$O(N^{-\nu/d})$ when
$0<\nu<1$, $O(N^{-1/d}\sqrt{\log N})$ when $\nu=1$, and $O(N^{-1/d})$
when $\nu>1$.  Thus its polynomial exponent saturates at $1/d$ for
$\nu\geq1$.  For the squared-exponential kernel, the upper bound is
$O(N^{-1/d})$.

\citet[Theorem~1]{bull2011} establishes the Mat\'ern minimax rate
$N^{-\nu/d}$.  The randomized $\varepsilon$-greedy EI policy defined by
\citet[Definition~4]{bull2011} intersperses exact~EI queries with uniformly
random queries on $\X$.  \citet[Theorem~5]{bull2011} proves that this policy
attains the minimax rate, up to logarithmic factors, for every finite $\nu$.
The contrast between the saturation of the upper bound established by Bull for
the exact~EI policy and the rate attained by the randomized
$\varepsilon$-greedy EI policy raises the natural question whether the
exact fixed-prior EI policy attains the rate $N^{-\nu/d}$ for every
$\nu>0$, without space-filling random queries.  We prove in
Theorem~\ref{thm:matern-minimax} that it does.

\paragraph{Related work.}
With noisy observations and an objective function in an RKHS,
\citet[Theorems~1 and~2]{tranThe2022} obtain high-probability bounds on
cumulative regret evaluated at the points recommended by two modified EI
policies, one with an iteration-dependent multiplier of the predictive
standard deviation and the other with a multiplier chosen from the total
budget.  When the objective function is sampled from a Gaussian process,
\citet[Theorem~4.9]{wangNoisy2025} bound with high probability the difference
between the best noisy observation and the global minimum for EI with Gaussian
observation noise.

For the noise-free Gaussian-process upper confidence bound (GP-UCB),
\citet[Lemma~3 and Theorem~2]{iwazaki2025} gives the simple-regret upper bounds
$O(N^{-\nu/d}(\log N)^{\nu/d})$ for Mat\'ern kernels with $\nu>1/2$ and
$O(\sqrt{N}\exp\{-cN^{1/(d+1)}\})$ for the squared-exponential kernel, for
some $c>0$.  The polynomial exponent in the Mat\'ern bound agrees with the
exponent in Theorem~\ref{thm:matern-rate} in this work.  The two GP-UCB bounds follow
from estimates of the maximum information gain for a Gaussian-process model
with an auxiliary noise variance.  The information-gain estimate for the
Mat\'ern kernel assumes uniformly bounded kernel eigenfunctions.
Our proof draws on methods from
greedy approximation and scattered-data approximation in RKHSs.

\paragraph{Main results.}
For an isotropic Mat\'ern kernel of smoothness
$\nu>0$, we show in Theorem~\ref{thm:matern-rate} that every fixed-prior~EI
trajectory generated by a weak-EI policy has simple regret
$$
  r_T=O\bigl(N^{-\nu/d}\bigr).
$$
When $B>0$ and $\X$ has nonempty interior,
Theorem~\ref{thm:matern-minimax} establishes that the exact~EI policy is
minimax-rate optimal among deterministic methods whose final recommendation
may be any point of $\X$.

For the isotropic squared-exponential kernel,
Theorem~\ref{thm:se-rate} shows that, along every fixed-prior~EI trajectory
generated by a weak-EI policy, the simple regret
satisfies, for some $A',b'>0$,
$$
  r_T
  \leq
  A'\exp\{-b'\min\{
    N,\,
    N^{1/d}\log(eN)
  \}\},
$$
for all sufficiently large~$N$.  When $B$, $\X$, $k$, $\sigma$, $\eta$, and
the initial design are fixed, both kernel bounds hold uniformly over
$f\in\Hk$ with $\lVert f\rVert_{\Hk}\leq B$.  When $B>0$ and $\X$ has
nonempty interior, Theorems~\ref{thm:se-rate}
and~\ref{thm:se-minimax} establish that the exact~EI policy is minimax-rate
optimal up to constants in the exponent among deterministic methods whose
final recommendation may be any point of $\X$.

\paragraph{Regret bounds from approximation estimates.}
We call the component of
$k(x_{n+1},\cdot)$ orthogonal to the span of the preceding evaluation
representers $k(x_1,\cdot),\ldots,k(x_n,\cdot)$ the selected-point innovation
and denote its norm by $v_{n+1}$.
Proposition~\ref{prop:one-step-regret} and regret monotonicity give,
for some constants $0\leq q<1$ and $C>0$, the one-step regret bound
\begin{equation}
  r_{n+1}\leq\min\{r_n,\;q r_n+C v_{n+1}\},
  \qquad n\geq n_0.
  \label{eq:intro-one-step-regret}
\end{equation}
The regret analysis combines~\eqref{eq:intro-one-step-regret} with approximation
bounds for the selected-point innovation norms.
The sequential separation radius of a kernel $k$ on $\X$, denoted by
$\delta_m(k,\X)$ for $m\geq1$, is the supremum over ordered $m$-tuples of
the smallest successive innovation norm.
Let $k_0$ be the normalized posterior covariance kernel after conditioning
on the initial design.  Let
$$
  v_{[1]}\geq\cdots\geq v_{[N]}
$$
be the decreasing rearrangement of
$v_{n_0+1},\ldots,v_T$.
Lemma~\ref{lem:innovation-order-statistics} gives
$$
  v_{[m]}
  \leq \delta_m(k_0,\X)
  \leq \delta_m(k,\X),
  \qquad 1\leq m\leq N.
$$
For each $1\leq L\leq N$, we use $r_{n+1}\leq q r_n+C v_{n+1}$
from~\eqref{eq:intro-one-step-regret} at the $L$ evaluations with the
smallest realized innovation norms, and regret monotonicity at all remaining
iterations.  Theorem~\ref{thm:order-statistic-transfer} expresses the
resulting finite-budget estimate in terms of the separation radii.

To bound the separation radii, we use the Kolmogorov widths
$d_m(k,\X)$, which measure how well the evaluation representers can be
approximated uniformly by subspaces of $\Hk$ of dimension at most $m$.  The
connection between sequential separation radii and these widths draws on
results from greedy approximation
\citep{binev2011,devore2013,wenzelSantinHaasdonk2023}.  Given
$x_1,\ldots,x_m\in\X$, consider the kernel matrix
$(k(x_i,x_j))_{1\leq i,j\leq m}$.  Lemma~\ref{lem:width-product} bounds its
$j$th largest eigenvalue in terms of a Kolmogorov width whose subspace
dimension is chosen separately for each $j$.
Standard power-function estimates from scattered data approximation, applied
to regular grids, give
$d_m(k,\X)=O(m^{-\nu/d})$ for Mat\'ern kernels
\citep[Theorem~3.2]{schabackWendland2002} and
$d_m(k,\X)=O(\exp[-c m^{1/d}\log m])$ for the squared-exponential kernel,
for some $c>0$
\citep[Theorem~11.22]{wendland2005}.
Lemma~\ref{lem:width-product} converts these width estimates into bounds on
the separation radii.

\paragraph{Scope and organization.}
Weak-EI policies attain the Mat\'ern rate and the squared-exponential
bound~\eqref{eq:se-rate}, are simple-regret consistent, and have dense query
points when~$k$ has the no-empty-ball property.  Exact EI maximization is
needed for the minimax comparisons, while the analysis assumes exact
observations and covariance parameters chosen independently of the data.
Appendix~\ref{app:known-means} extends the upper regret bounds to a known
continuous prior mean.

Sections~\ref{sec:setting}--\ref{sec:width-geometry} establish the one-step
regret bound and approximation bounds used in Section~\ref{sec:kernels} to
prove the rate theorems.  Section~\ref{sec:minimax-comparison} contains the
minimax comparisons, and Section~\ref{sec:scope} discusses their scope.

\section{Setting and preliminary results}
\label{sec:setting}

\subsection{Fixed-prior model and feature-space innovations}

Let $\X\subset\R^d$, $d\geq1$, be nonempty and compact, and let
$k:\X\times\X\to\mathbb R$ be continuous and positive semidefinite.
Write $\Hk$ for its reproducing-kernel Hilbert space.  We use
\begin{equation}
  Z\sim\operatorname{GP}(0,\sigma^2 k),\qquad \sigma>0,
  \label{eq:fixed-prior-model}
\end{equation}
as a fixed prior model.

Let $B\geq0$ and assume that the deterministic objective function satisfies
\begin{equation}
  f\in\Hk,\qquad \lVert f\rVert_{\Hk}\leq B.
  \label{eq:rkhs-ball}
\end{equation}
Without loss of generality, we assume
\begin{equation}
  k(x,x)\leq 1,\qquad x\in\X.
  \label{eq:normalization}
\end{equation}
The initial design consists of $n_0\geq1$ query points.
At every query point $x_i$, we observe
$z_i:=f(x_i)$ exactly.

Define, for $j\geq1$ and $\mathbf{x}=(x_1,\ldots,x_j)\in\X^j$,
$$
  V_{\mathbf{x}}:=\operatorname{span}\{k(\cdot,x_i),\,1\leq i\leq j\},
$$
and set $V_{\varnothing}:=\{0\}$.  Let $\Pi_{\mathbf{x}}$ be the orthogonal
projector onto~$V_{\mathbf{x}}$ for every finite tuple $\mathbf{x}$.  The
orthogonal projector onto~$V_{\mathbf{x}}^\perp$ is
$$
  \Pi_{\mathbf{x}}^\perp:=I-\Pi_{\mathbf{x}}.
$$
Define, for $x\in\X$,
\begin{equation}
  \iota_{\mathbf{x}}(x):=\Pi_{\mathbf{x}}^\perp k(\cdot,x),
  \qquad
  P_{\mathbf{x}}(x):=\lVert\iota_{\mathbf{x}}(x)\rVert_{\Hk}
  =\dist_{\Hk}\bigl(k(\cdot,x),V_{\mathbf{x}}\bigr).
  \label{eq:tuple-power-function}
\end{equation}
The vector $\iota_{\mathbf{x}}(x)$ is the innovation: the
component of $k(\cdot,x)$ orthogonal to~$V_{\mathbf{x}}$.  The function
$P_{\mathbf{x}}$ is the power function for $\mathbf{x}$
\citep{wendland2005}.  Its value $P_{\mathbf{x}}(x)$ is the innovation norm
at~$x$.

For the query history $\mathbf{x}=(x_1,\ldots,x_n)$, with
$n\geq n_0$, we write
\begin{align*}
  V_n
  &:=V_{\mathbf{x}}
    =\operatorname{span}\{k(\cdot,x_i),\,1\leq i\leq n\},\\
  \Pi_n&:=\Pi_{\mathbf{x}},
  \qquad
  \Pi_n^\perp:=\Pi_{\mathbf{x}}^\perp=I-\Pi_n.
\end{align*}
The Gram matrix in
$\Hk$ of the evaluation representers
$k(\cdot,x_1),\ldots,k(\cdot,x_n)$ is
$$
  K_n
  :=
  \bigl[
    \langle k(\cdot,x_i),k(\cdot,x_j)\rangle_{\Hk}
  \bigr]_{i,j=1}^n
  =
  [k(x_i,x_j)]_{i,j=1}^n.
$$
We also write
$$
  \mathbf k_n(x):=(k(x_1,x),\ldots,k(x_n,x))^\top,\qquad
  \mathbf z_n:=(z_1,\ldots,z_n)^\top.
$$
Define the evaluation map
$$
  \mathcal E_n:\Hk\longrightarrow\mathbb R^n,
  \qquad
  \mathcal E_n f:=(f(x_1),\ldots,f(x_n))^\top.
$$
Then
$K_n=\mathcal E_n\mathcal E_n^*$ and hence
$\operatorname{Range}(K_n)=\operatorname{Range}(\mathcal E_n)$.
Thus $\mathbf z_n=\mathcal E_n f$ and
$\mathbf k_n(x)=\mathcal E_n k(\cdot,x)$ belong to
$\operatorname{Range}(K_n)$.  Writing $K_n^\dagger$ for the Moore--Penrose
inverse of $K_n$, conditioning $Z$ on $Z(x_i)=z_i$, $1\leq i\leq n$, gives
the posterior mean $\mu_n$ and \emph{normalized} posterior standard
deviation~$s_n$:
\begin{equation}
\begin{aligned}
  \mu_n(x)
  &=\mathbf k_n(x)^\top K_n^\dagger\mathbf z_n
    =(\Pi_nf)(x),\\
  s_n^2(x)
  &=k(x,x)-\mathbf k_n(x)^\top K_n^\dagger\mathbf k_n(x)
    =\lVert \Pi_n^\perp k(\cdot,x)\rVert_{\Hk}^2.
\end{aligned}
\label{eq:posterior-formulas}
\end{equation}
The formulas in \eqref{eq:posterior-formulas} remain valid for repeated
query points and singular Gram matrices.  By
\eqref{eq:tuple-power-function},
\begin{equation}
  s_n(x)
  =P_{(x_1,\ldots,x_n)}(x)
  =
  \dist_{\Hk}\bigl(k(\cdot,x),V_n\bigr).
  \label{eq:power-function}
\end{equation}
For $x\in\X$, the deterministic interpolation error satisfies
$$
  f(x)-\mu_n(x)
  =\left\langle f-\Pi_nf,
    \iota_{(x_1,\ldots,x_n)}(x)\right\rangle_{\Hk}.
$$
The posterior predictive standard deviation is $\sigma s_n(x)$.  At the next
query point, we write
\begin{equation}
\begin{aligned}
  \iota_{n+1}
  &:=\iota_{(x_1,\ldots,x_n)}(x_{n+1}),\\
  v_{n+1}
  &:=\lVert\iota_{n+1}\rVert_{\Hk}
    =P_{(x_1,\ldots,x_n)}(x_{n+1})
    =s_n(x_{n+1}).
\end{aligned}
\label{eq:selected-point-innovation}
\end{equation}
The Lo\`eve isometry
$k(\cdot,x)\mapsto Z(x)/\sigma$ also gives
$$
  v_{n+1}
  =
  \frac{1}{\sigma}
  \left\lVert
    Z(x_{n+1})
    -\mathsf E\left[
      Z(x_{n+1})\mid Z(x_1),\ldots,Z(x_n)
    \right]
  \right\rVert_{L^2}.
$$

\subsection{Sequential separation radii}

Define the sequential separation radius of $k$ at rank $m\geq1$ by
\begin{equation}
  \delta_m(k,\X)
  :=\sup_{x_1,\ldots,x_m\in\X}
    \min_{1\leq j\leq m}P_{(x_1,\ldots,x_{j-1})}(x_j).
  \label{eq:delta}
\end{equation}
Thus $\delta_m(k,\X)$ is the supremum, over all $m$-point
sequences, of their smallest successive innovation norm.

The sequence $(\delta_m(k,\X))_{m\geq1}$ is nonincreasing.  Indeed, the
minimum for an $(m+1)$-point sequence is no larger than the minimum
for its $m$-point prefix, and taking the supremum gives
$\delta_{m+1}(k,\X)\leq\delta_m(k,\X)$.  In particular,
$$
  0\leq\delta_m(k,\X)
  \leq\delta_1(k,\X)
  =\sup_{x\in\X}\sqrt{k(x,x)}.
$$
By \eqref{eq:normalization}, $\delta_m(k,\X)\leq1$.  Rescaling the kernel by
$c^2$ gives
$$
  \delta_m(c^2k,\X)=c\,\delta_m(k,\X),\qquad c>0.
$$

\begin{lemma}[Decay of the sequential separation radii]
\label{lem:delta-vanishes}
For the continuous kernel $k$ on the compact set $\X$,
\begin{equation}
  \delta_m(k,\X)\downarrow0
  \qquad(m\to\infty).
  \label{eq:delta-vanishes}
\end{equation}
\end{lemma}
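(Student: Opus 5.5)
Monotonicity was already shown right after \eqref{eq:delta}, so it remains to prove $\delta_m(k,\X)\to0$. I will argue by contradiction with a compactness argument in the feature space. The canonical feature map $\Phi:\X\to\Hk$, $\Phi(x)=k(\cdot,x)$, is continuous, because
$$
\lVert\Phi(x)-\Phi(y)\rVert_{\Hk}^2=k(x,x)-2k(x,y)+k(y,y)
$$
and $k$ is continuous. Since $\X$ is compact, $\Phi(\X)$ is a compact subset of $\Hk$, and in particular it is totally bounded.

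Suppose that $\delta_m(k,\X)\geq\varepsilon>0$ for every $m$; this is the only alternative, since the sequence is nonincreasing and nonnegative. Cover $\Phi(\X)$ by finitely many $\Hk$-balls of radius $\varepsilon/3$, say $M$ of them, and take $m=M+1$. By the definition \eqref{eq:delta}, there are $x_1,\ldots,x_m\in\X$ with $P_{(x_1,\ldots,x_{j-1})}(x_j)>\varepsilon/2$ for every $j\leq m$. The strict inequality is available because the supremum is at least $\varepsilon>\varepsilon/2$. By pigeonhole, two indices $i<j$ satisfy
$$
\lVert\Phi(x_i)-\Phi(x_j)\rVert_{\Hk}<2\varepsilon/3.
$$
On the other hand, $\Phi(x_i)\in V_{(x_1,\ldots,x_{j-1})}$, so by \eqref{eq:tuple-power-function}
$$
P_{(x_1,\ldots,x_{j-1})}(x_j)=\dist_{\Hk}\bigl(\Phi(x_j),V_{(x_1,\ldots,x_{j-1})}\bigr)\leq\lVert\Phi(x_j)-\Phi(x_i)\rVert_{\Hk}<2\varepsilon/3.
$$
This is not yet a contradiction with $\varepsilon/2$. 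I would fix it by using balls of radius $\varepsilon/5$, so that the distance between the two points is below $2\varepsilon/5<\varepsilon/2$, which contradicts the choice of the sequence.

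The only real point to check is that the pigeonhole yields a pair in which the earlier representer already lies in the span used at the later step. This holds automatically because the spans are nested along the sequence. No Gram-determinant or width estimates are needed here; the quantitative rates are obtained later through Lemma~\ref{lem:width-product}.
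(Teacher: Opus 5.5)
Your proof is correct and is essentially the paper's argument: continuity of $x\mapsto k(\cdot,x)$ makes the set of representers compact, the inclusion $k(\cdot,x_i)\in V_{(x_1,\ldots,x_{j-1})}$ for $i<j$ turns large successive innovation norms into pairwise separation, and a finite cover (the paper's packing argument) bounds how many such points can exist. The only differences are cosmetic. You argue by contradiction with an explicit pigeonhole step, and switching from balls of radius $\varepsilon/3$ to radius $\varepsilon/5$ correctly repairs your first attempt.
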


\begin{proof}
The map $x\mapsto k(\cdot,x)$ is continuous, so its image is compact in
$\Hk$.  If all successive innovation norms of
$(x_1,\ldots,x_m)\in\X^m$ exceed $\varepsilon$, then, for $i<j$,
$$
  \bigl\|k(\cdot,x_j)-k(\cdot,x_i)\bigr\|_{\Hk}
  \geq P_{(x_1,\ldots,x_{j-1})}(x_j)
  >\varepsilon.
$$
Thus the $m$ evaluation representers are pairwise
$\varepsilon$-separated.  The standard packing argument for compact metric
spaces bounds their number in terms of $\varepsilon$.  Hence
$\delta_m(k,\X)\leq\varepsilon$ for all sufficiently large~$m$.
Since $\varepsilon>0$ is arbitrary, \eqref{eq:delta-vanishes} follows.
\end{proof}

\subsection{Restart after the initial design}

After the initial design, let
$\Psi_0(x):=\Pi_{n_0}^\perp k(\cdot,x)$ be the projected canonical feature
map.  Define the residual kernel $k_0$ by
\begin{equation}
  k_0(x,y):=\langle\Psi_0(x),\Psi_0(y)\rangle_{\Hk}.
  \label{eq:posterior-residual-kernel}
\end{equation}
For $g\in V_{n_0}^{\perp}$ and $x\in\X$,
$$
  \langle g,\Psi_0(x)\rangle_{\Hk}
  =\langle g,k(\cdot,x)\rangle_{\Hk}
  =g(x).
$$
The closed subspace $V_{n_0}^{\perp}$, with the norm inherited from~$\Hk$,
is therefore an RKHS with reproducing kernel $k_0$.  By uniqueness,
$\mathcal H_{k_0}=V_{n_0}^{\perp}$ isometrically.  In particular,
$k_0(\cdot,x)=\Psi_0(x)$ as functions on $\X$.

After conditioning on the initial observations, $Z$ has covariance function
$\sigma^2k_0$.  The kernel $k_0$ is continuous and positive semidefinite,
with $k_0(x,x)=s_{n_0}^2(x)\leq1$.  We use
$P_{\mathbf{x}}^{k_0}$ and $\delta_m(k_0,\X)$ for the innovation norm and
sequential separation radius obtained from~\eqref{eq:tuple-power-function}
and~\eqref{eq:delta} with $k_0$ in place of $k$.  The superscript
distinguishes this power function from $P_{\mathbf{x}}$, which is computed
with $k$.
The diagonal bound gives
$\delta_m(k_0,\X)\leq1$.  For every $n\geq n_0$, orthogonal decomposition
relative to $V_{n_0}$ gives
\begin{equation}
  s_n(x)
  =
  \dist_{\Hk}\left(
    \Psi_0(x),
    \operatorname{span}\{\Psi_0(x_{n_0+1}),\ldots,\Psi_0(x_n)\}
  \right),
  \label{eq:posterior-restart-power}
\end{equation}
where the span is $\{0\}$ when $n=n_0$.
Thus $s_n$ is the power function for $k_0$ based on the post-initial query
points $x_{n_0+1},\ldots,x_n$.

Since $\mu_{n_0}=\Pi_{n_0}f$, the initial residual
$f-\mu_{n_0}=\Pi_{n_0}^{\perp}f$ belongs to
$\mathcal H_{k_0}=V_{n_0}^{\perp}$, and
$\lVert\mu_{n_0}\rVert_{\Hk}\leq\lVert f\rVert_{\Hk}\leq B$.
The residual RKHS radius after the initial interpolation is therefore
\begin{equation}
  B_0:=
  \left(B^2-\lVert\mu_{n_0}\rVert_{\Hk}^2\right)^{1/2}.
  \label{eq:residual-rkhs-radius}
\end{equation}
The minimum-norm interpolation formula gives
$$
  \lVert\mu_{n_0}\rVert_{\Hk}^2
  =\mathbf z_{n_0}^\top K_{n_0}^\dagger\mathbf z_{n_0},
$$
so $B_0$ is computable even when $K_{n_0}$ is singular.
Since $V_{n_0}\subseteq V_n$, the projection norms satisfy
$\lVert\mu_n\rVert_{\Hk}\geq\lVert\mu_{n_0}\rVert_{\Hk}$.  Hence, for every
$n\geq n_0$,
\begin{equation}
  \lVert f-\mu_n\rVert_{\Hk}^2
  =\lVert f\rVert_{\Hk}^2-\lVert\mu_n\rVert_{\Hk}^2
  \leq B_0^2.
  \label{eq:residual-radius-error}
\end{equation}

\subsection{Expected improvement and simple regret}
\label{sec:ei-and-regret}

Because $k$ is continuous,
$x\mapsto k(\cdot,x)$ is continuous in $\Hk$, hence every
$f\in\Hk$ is continuous.  Compactness therefore ensures that a minimizer
$x^\star$ exists.  Let
$f^\star:=f(x^\star)=\min_{x\in\X}f(x)$.  Define the best observed value
$m_n$ and the simple regret $r_n$ for $n\geq n_0$ by
\begin{equation}
  m_n:=\min_{1\leq i\leq n}z_i,
  \qquad
  r_n:=m_n-f^\star.
  \label{eq:simple-regret}
\end{equation}
Since $f^\star\leq m_{n+1}\leq m_n$, simple regret is nonincreasing:
\begin{equation}
  0\leq r_{n+1}\leq r_n,
  \qquad n\geq n_0.
  \label{eq:regret-monotonicity}
\end{equation}
Define, for $n\geq n_0$,
\begin{equation}
  I_n:\X\longrightarrow[0,\infty),
  \qquad
  I_n(x):=(m_n-f(x))_+, \qquad u_+:=\max\{u,0\}.
  \label{eq:realized-improvement}
\end{equation}
This is the improvement that would be realized by querying the candidate
point $x$.
At a global minimizer, $I_n$ equals the current simple regret.  At the next
selected point, it equals the decrease in simple regret:
\begin{equation}
  I_n(x^\star)=r_n,
  \qquad
  I_n(x_{n+1})
  =(m_n-z_{n+1})_+
  =m_n-m_{n+1}
  =r_n-r_{n+1}.
  \label{eq:improvement-regret}
\end{equation}

Conditioning the Gaussian-process model~\eqref{eq:fixed-prior-model} on the
observed history
$\mathcal D_n:=((x_i,z_i))_{i=1}^n$ gives the Gaussian predictive distribution
\begin{equation}
  Z(x)\mid\mathcal D_n
  \sim\mathcal N\bigl(\mu_n(x),\sigma^2s_n^2(x)\bigr).
  \label{eq:posterior-predictive-law}
\end{equation}
Let $\mathsf E_n[\cdot]:=\mathsf E[\cdot\mid\mathcal D_n]$ denote posterior
expectation.  Following \citet[Equation~(14)]{jones1998}, the EI sampling
criterion for minimization after $n$ evaluations is
\begin{equation}
  \EI_n(x):=\mathsf E_n\bigl[(m_n-Z(x))_+\bigr].
  \label{eq:ei-definition}
\end{equation}

Let $\Phi$ and $\phi$ denote the standard normal distribution function and
density, and put
$\tau(z):=z\Phi(z)+\phi(z)$.  Since $\tau'(z)=\Phi(z)$ and
$\tau''(z)=\phi(z)>0$, the function $\tau$ is increasing and strictly convex.
Using \eqref{eq:posterior-predictive-law}, the closed form is
\begin{equation}
  \EI_n(x)
  =
  \begin{cases}
    \sigma s_n(x)\,
    \tau\left(\dfrac{m_n-\mu_n(x)}{\sigma s_n(x)}\right),
      &s_n(x)>0,\\[2mm]
    (m_n-\mu_n(x))_+,&s_n(x)=0.
  \end{cases}
  \label{eq:ei-closed-form}
\end{equation}

Continuity of $x\mapsto k(\cdot,x)$ implies that $\mu_n$ and $s_n$ are
continuous.  The expression in~\eqref{eq:ei-closed-form} is continuous in
the pair
$(m_n-\mu_n(x),s_n(x))$, including at points where $s_n(x)=0$.
Consequently, $\EI_n$ is continuous on $\X$, and compactness ensures that it
attains its maximum.

The selected point need not maximize EI exactly.  We assume instead that
there is an $\eta\in(0,1]$, independent of $n$, such that
\begin{equation}
  \EI_n(x_{n+1})
  \geq \eta\sup_{x\in\X}\EI_n(x),
  \qquad n\geq n_0.
  \label{eq:approx-ei}
\end{equation}
We call~\eqref{eq:approx-ei} the weak-EI condition.

\subsection{Consistency and density}

We call an EI trajectory consistent if $r_n\to0$.  Its query
points are dense in $\X$ if
$\overline{\{x_n,\,n\geq1\}}=\X$.  Simple-regret consistency does not
imply that the query points are dense.

The consistency and density results discussed in this subsection are
recorded for completeness and insight.  They are not used in the
quantitative bounds that follow.

The compactness proof in Appendix~\ref{app:consistency-density-proofs}
adapts the argument of \citet[Lemma~12]{vazquezBect2010} to establish
$$
  \liminf_{n\to\infty}\EI_n(x_{n+1})=0.
$$
The lower EI bound in Lemma~\ref{lem:ei-comparison} at a minimizer, the
selection condition \eqref{eq:approx-ei}, and monotonicity of simple regret
then imply $r_n\to0$.
Corollary~\ref{cor:ei-consistency} in
Subsection~\ref{sec:finite-budget-consistency} gives a second proof by applying
Lemma~\ref{lem:delta-vanishes} to the continuous residual kernel $k_0$ and
then using Theorem~\ref{thm:order-statistic-transfer}.

\begin{definition}[No-empty-ball property]
\label{def:neb}
Following \citet[Definition~3]{vazquezBect2010}, the kernel $k$ has the
no-empty-ball (NEB) property on $\X$ if, for every sequence
$(x_i)_{i\geq1}\subset\X$ and every $x\in\X$,
\begin{equation}
  P_{(x_1,\ldots,x_n)}(x)\longrightarrow0
  \quad\Longleftrightarrow\quad
  x\in\overline{\{x_i,\,i\geq1\}}.
  \label{eq:neb}
\end{equation}
Continuity of $k$ implies
$P_{(x_1,\ldots,x_n)}(x)\to0$ whenever
$x\in\overline{\{x_i,\,i\geq1\}}$.  The NEB property also requires the converse.
\end{definition}

The Mat\'ern kernels considered below have the NEB property on every compact
$\X$, whereas the squared-exponential kernel fails it when $\X$ has nonempty
interior \citep{vazquezBectKriging2010}.  In
dimension one, \citet[Theorem~1]{yarotsky2013} proves this failure for
stationary kernels whose spectral densities decay at least exponentially.

When $k$ has the NEB property,
\citet[Theorem~6 and Remark~8]{vazquezBect2010} proved that every exact~EI
policy generates a dense sequence of query points in $\X$ for every
$f\in\Hk$.  Theorem~\ref{thm:neb-density} extends this result to weak-EI
policies.

\begin{theorem}[Density under the NEB property]
\label{thm:neb-density}
If $k$ has the NEB property, then every fixed-prior EI
trajectory satisfying \eqref{eq:approx-ei} is dense in $\X$.
\end{theorem}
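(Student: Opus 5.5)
We argue by contradiction, following the strategy of \citet[Theorem~6]{vazquezBect2010}. Suppose some point $y\in\X$ does not lie in the closure of $\{x_n,\,n\geq1\}$. By the NEB property \eqref{eq:neb}, $s_n(y)=P_{(x_1,\ldots,x_n)}(y)$ does not tend to zero. Since $s_n(y)$ is nonincreasing in $n$ (the subspaces $V_n$ increase), it has a positive limit, so $s_n(y)\geq\underline s>0$ for all $n\geq n_0$. We then show that $\EI_n(y)$ stays bounded below and use the weak-EI condition \eqref{eq:approx-ei} to force $\EI_n(x_{n+1})$ away from zero. This will contradict the fact, established in Appendix~\ref{app:consistency-density-proofs} for weak-EI trajectories, that $\liminf_n\EI_n(x_{n+1})=0$.

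\emph{Lower bound on $\EI_n(y)$.} By \eqref{eq:ei-closed-form}, $\EI_n(y)=\sigma s_n(y)\tau(u_n)$ with $u_n=(m_n-\mu_n(y))/(\sigma s_n(y))$, and $\tau$ is increasing and positive. It therefore suffices to bound $u_n$ from below uniformly in $n$.
\begin{itemize}
\item The denominator satisfies $\sigma s_n(y)\geq\sigma\underline s$.
\item For the numerator, $m_n\geq f^\star\geq-B$ because $|f(x)|\leq\lVert f\rVert_{\Hk}\sqrt{k(x,x)}\leq B$.
\item Similarly, $|\mu_n(y)|\leq\lVert\Pi_nf\rVert_{\Hk}\leq B$.
\end{itemize}
Hence $u_n\geq-2B/(\sigma\underline s)$, and
\[
  \EI_n(y)\geq\sigma\underline s\,\tau\bigl(-2B/(\sigma\underline s)\bigr)=:c>0
\]
for all $n\geq n_0$. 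Alternatively, the lower EI bound of Lemma~\ref{lem:ei-comparison} could be invoked here, but the direct estimate is self-contained.

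\emph{Conclusion.} By \eqref{eq:approx-ei}, $\EI_n(x_{n+1})\geq\eta c>0$ for every $n\geq n_0$. This contradicts $\liminf_{n}\EI_n(x_{n+1})=0$.

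The main obstacle is that this liminf fact is only announced at this point of the paper, with its proof deferred to the appendix. If it is not to be taken as given, I would prove it directly as follows. The bound $\EI_n(x_{n+1})\geq\eta c$ gives $\sigma s_n(x_{n+1})\tau(u)\geq\eta c$ at $x=x_{n+1}$. Since $\tau(u)\leq\tau(\max\{u,0\})\leq\max\{u,0\}+\phi(0)$, either $s_n(x_{n+1})=v_{n+1}$ is bounded below for all $n$, or the expected improvement at $x_{n+1}$ comes substantially from $m_n-\mu_n(x_{n+1})$.

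\emph{First case.} Here $v_{n+1}\geq\epsilon>0$ for all $n$. Then $\delta_m(k,\X)\geq\epsilon$ for all $m$, which contradicts Lemma~\ref{lem:delta-vanishes}. More precisely, by Lemma~\ref{lem:delta-vanishes} and the order-statistic bound $v_{[m]}\leq\delta_m$, we get $v_{n+1}\to0$ along a subsequence, indeed along most indices.

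\emph{Second case.} Along indices where $v_{n+1}$ is small, $\EI_n(x_{n+1})\geq\eta c$ forces $m_n-\mu_n(x_{n+1})\geq\eta c/2$. But $|f(x_{n+1})-\mu_n(x_{n+1})|\leq B_0v_{n+1}$ by the interpolation error identity and \eqref{eq:residual-radius-error}. So the realized improvement $r_n-r_{n+1}=(m_n-z_{n+1})_+$ is at least $\eta c/4$ infinitely often. This is impossible because $r_n$ is nonincreasing by \eqref{eq:regret-monotonicity} and bounded below by $0$.

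Thus either route yields the contradiction, and the trajectory is dense.
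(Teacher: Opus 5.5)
Your proof is correct and follows the paper's argument. NEB and monotonicity give $\inf_n s_n(y)>0$, hence a uniform positive lower bound on $\EI_n(y)$, and the weak-EI condition then contradicts Lemma~\ref{lem:cluster-selected-ei}. The one difference in the main line is that you make the lower bound explicit as $\sigma\underline s\,\tau(-2B/(\sigma\underline s))$ using $|f|,|\mu_n|\leq B$, whereas the paper uses continuity and positivity of $(u,s)\mapsto\sigma s\,\tau(u/(\sigma s))$ on a compact set.

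Your optional self-contained replacement for Lemma~\ref{lem:cluster-selected-ei} is also valid, and it differs from the paper's convergent-subsequence argument. Lemma~\ref{lem:delta-vanishes} and Lemma~\ref{lem:innovation-order-statistics} allow only finitely many $v_{n+1}$ above any $\varepsilon>0$, so $v_{n+1}\to0$. Combining this with $\EI_n(x_{n+1})\leq(r_n-r_{n+1})+(B_0+\sigma\phi(0))v_{n+1}$ and summability of the decrements $r_n-r_{n+1}$ gives $\EI_n(x_{n+1})\to0$ along the whole sequence, not just along a subsequence.
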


\begin{proof}
See Appendix~\ref{app:consistency-density-proofs}.
\end{proof}

\section{EI comparison and one-step regret bound}
\label{sec:ei-comparison}

\subsection{Scalar EI comparison}
\label{sec:scalar-ei-comparison}

The Cauchy--Schwarz inequality, the power-function identity
\eqref{eq:power-function}, and the residual-norm bound
\eqref{eq:residual-radius-error} give, for every $n\geq n_0$ and $x\in\X$,
\begin{equation}
  |f(x)-\mu_n(x)|\leq B_0 s_n(x).
  \label{eq:rkhs-error}
\end{equation}
Set $a:=B_0/\sigma$.  The scalar comparison below depends on $B_0$ and
$\sigma$ only through this dimensionless ratio.
Define, for $n\geq n_0$ and a candidate point $x$ with $s_n(x)>0$, the
standardized gaps
\begin{equation}
  u:=\frac{m_n-\mu_n(x)}{\sigma s_n(x)},
  \qquad
  t:=\frac{m_n-f(x)}{\sigma s_n(x)}.
  \label{eq:standardized-gaps}
\end{equation}
By \eqref{eq:rkhs-error}, $|u-t|\leq a$.  Moreover,
$I_n(x)=\sigma s_n(x)\,t_+$ and
\begin{equation}
  \EI_n(x)=\sigma s_n(x)\,\tau(u).
  \label{eq:ei-tau}
\end{equation}

We now seek affine bounds of the form
$$
  \kappa I_n(x)
  \leq \EI_n(x)
  \leq I_n(x)+D\,\sigma s_n(x),
  \qquad \kappa,D\geq0.
$$
It is therefore enough to prove
\begin{equation}
  \kappa t_+
  \leq \tau(u)
  \leq t_++D
  \qquad
  \text{for every }u,t\in\mathbb R
  \text{ with }|u-t|\leq a.
  \label{eq:scalar-ei-comparison}
\end{equation}
For $a>0$, we construct the coefficient $\kappa$ in the lower bound
$\kappa t_+\leq\tau(u)$ from the tangent to $\tau$ through $(-a,0)$.
A tangent at abscissa $z$ passes through $(-a,0)$ when
$$
  \tau(z)=(z+a)\tau'(z),
$$
or equivalently $\phi(z)=a\Phi(z)$.  With $h:=\phi/\Phi$, this equation becomes
$h(z)=a$.  The ratio~$h$ satisfies
$$
  h'(z)=-h(z)\{z+h(z)\}<0,
  \qquad
  z+h(z)=\frac{\tau(z)}{\Phi(z)}>0,
$$
and, as $z$ increases from $-\infty$ to $+\infty$, $h(z)$ decreases
strictly from $+\infty$ to $0$.

\begin{lemma}[Upper and lower bounds for EI]
\label{lem:ei-comparison}
Let $z_a$ be the unique solution of $h(z)=a$ when $a>0$.  Define, for
$a\geq0$,
\begin{equation}
  \kappa_a:=
  \begin{cases}
    1, & a=0,\\
    \Phi(z_a), & a>0.
  \end{cases}
  \label{eq:ei-comparison-constant}
\end{equation}
For every $n\geq n_0$ and $x\in\X$,
\begin{equation}
  \max\bigl\{I_n(x)-B_0s_n(x),\ \kappa_a I_n(x)\bigr\}
  \leq \EI_n(x)
  \leq I_n(x)+\tau(a)\,\sigma s_n(x).
  \label{eq:ei-comparison}
\end{equation}
\end{lemma}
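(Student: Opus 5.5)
We first dispose of the degenerate case $s_n(x)=0$.  By \eqref{eq:rkhs-error} we then have $f(x)=\mu_n(x)$, so \eqref{eq:ei-closed-form} gives $\EI_n(x)=(m_n-\mu_n(x))_+=I_n(x)$.  All three bounds in \eqref{eq:ei-comparison} then hold trivially, since $\kappa_a\leq1$.  When $s_n(x)>0$, we divide by $\sigma s_n(x)$ and use \eqref{eq:ei-tau} together with $I_n(x)=\sigma s_n(x)t_+$.  Since $B_0s_n(x)=a\,\sigma s_n(x)$, the claim reduces to the scalar inequalities
$$
  \max\{t_+-a,\ \kappa_a t_+\}\leq\tau(u)\leq t_++\tau(a)
  \qquad\text{for all }u,t\in\mathbb R\text{ with }|u-t|\leq a.
$$

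For the upper bound we use only that $\tau$ is increasing and $u\leq t+a\leq t_++a$, so $\tau(u)\leq\tau(t_++a)$.  Convexity of $\tau$ with $\tau'=\Phi\leq1$ gives $\tau(t_++a)\leq\tau(a)+t_+$, which is the desired bound.

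For the first lower bound, note that $\tau(z)-z=\phi(z)-z(1-\Phi(z))=\tau(-z)>0$ and $\tau>0$.  Hence $\tau(z)\geq z_+$.  Using monotonicity, $\tau(u)\geq\tau(t-a)\geq(t-a)_+\geq t_+-a$.

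The second lower bound, $\tau(u)\geq\kappa_at_+$, is the main step.  For $a=0$ we have $u=t$, and it follows from $\tau(z)\geq z_+$.  For $a>0$, if $t\leq0$ it follows from $\tau>0$.  If $t>0$, monotonicity gives $\tau(u)\geq\tau(t-a)$, so it suffices to show $\tau(s-a)\geq\Phi(z_a)s$ for all $s\geq0$.  Set $g(y):=\tau(y)$ and consider the tangent line $\ell$ to $g$ at $z_a$, namely
$$
  \ell(y)=\tau(z_a)+\Phi(z_a)(y-z_a).
$$
The defining relation $\phi(z_a)=a\Phi(z_a)$ gives $\tau(z_a)=(z_a+a)\Phi(z_a)$, so $\ell(y)=\Phi(z_a)(y+a)$ passes through $(-a,0)$.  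Strict convexity gives $\tau(y)\geq\ell(y)$ for all $y$.  Taking $y=s-a$ yields $\tau(s-a)\geq\Phi(z_a)s$, as required.  The only delicate point is the identity $\tau(z)=(z+a)\tau'(z)\iff\phi(z)=a\Phi(z)$, together with existence and uniqueness of $z_a$; both were recorded just before the statement via the monotonicity of $h$.

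Combining the two lower bounds and the upper bound, then multiplying back by $\sigma s_n(x)$, gives \eqref{eq:ei-comparison}.
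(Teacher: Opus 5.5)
Your proposal is correct and follows the paper's proof essentially step by step: the degenerate case $s_n(x)=0$, the reduction to the standardized gaps with $|u-t|\leq a$, the bound $\tau'=\Phi\leq1$ for the upper bound, the identity $\tau(z)=z+\tau(-z)$ for the additive lower bound, and the tangent to $\tau$ at $z_a$ through $(-a,0)$ for the multiplicative bound. The differences are cosmetic. You treat $t\leq0$ and $t>0$ together via $t_+$ and $(t-a)_+\geq t_+-a$. Also, your step $\tau(t_++a)\leq\tau(a)+t_+$ needs only $\tau'\leq1$, not convexity.
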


\begin{proof}
Fix $n\geq n_0$ and $x\in\X$, and write $s=s_n(x)$.  Suppose first that
$s>0$, and use the standardized gaps $u,t$ from
\eqref{eq:standardized-gaps}.
For the upper bound, if $t\leq0$, then
$I_n(x)=0$ and $u\leq a$, so
$\EI_n(x)\leq\tau(a)\,\sigma s$.
If $t>0$, then
$$
  \frac{d}{dt}\{\tau(t+a)-t\}
  =\Phi(t+a)-1\leq0.
$$
Together with $u\leq t+a$ and monotonicity of $\tau$, this gives
$$
  \EI_n(x)
  \leq\sigma s\,\tau(t+a)
  \leq I_n(x)+\tau(a)\,\sigma s.
$$

When $I_n(x)=0$, the two lower bounds follow from nonnegativity of EI.
Suppose now that $I_n(x)>0$, so $t>0$.  The identity
$\tau(z)=z+\tau(-z)$, the nonnegativity of $\tau$, and the inequality
$u\geq t-a$ give
$$
  \EI_n(x)\geq I_n(x)-B_0s.
$$
For the multiplicative lower bound, if $a=0$, then $u=t$ and
$\tau(u)=\tau(t)\geq t$.  If~$a>0$, convexity of $\tau$, whose
derivative is $\Phi$, and tangency at $z_a$ give
$$
  \tau(t-a)
  \geq\tau(z_a)+\Phi(z_a)(t-a-z_a)
  =\Phi(z_a)t=\kappa_a t.
$$
Since $u\geq t-a$ and $\tau$ is nondecreasing, this proves
$\EI_n(x)\geq\kappa_a I_n(x)$.
If $s=0$, \eqref{eq:rkhs-error} gives $\mu_n(x)=f(x)$, and the
degenerate predictive distribution gives $\EI_n(x)=I_n(x)$.  This proves
\eqref{eq:ei-comparison}.
\end{proof}

The constants $\kappa_a$ and $\tau(a)$ are optimal in
\eqref{eq:scalar-ei-comparison}.  For $a>0$,
$$
  h(-a)-a=\frac{\tau(-a)}{\Phi(-a)}>0.
$$
Since $h$ is decreasing and $h(z_a)=a$, this gives $z_a>-a$.  Hence
$a+z_a>0$.  The pair
$(u,t)=(z_a,a+z_a)$ satisfies $|u-t|=a$ and attains equality in the
multiplicative lower bound.  When $a=0$, the pairs
$(u,t)=(t,t)$ with $t>0$ satisfy the constraint in
\eqref{eq:scalar-ei-comparison} and
$$
  \frac{\tau(t)}{t}
  =\Phi(t)+\frac{\phi(t)}{t}
  \longrightarrow1
  \qquad\text{as }t\to\infty.
$$
Thus the lower inequality in \eqref{eq:scalar-ei-comparison} cannot hold with
$\kappa>1$.  The pair
$(u,t)=(a,0)$ satisfies $|u-t|=a$ and attains equality in the upper bound.
Therefore $\kappa_a$ is the largest value of $\kappa$, and $\tau(a)$ is the
smallest value of $D$, for which \eqref{eq:scalar-ei-comparison} holds.

The map
$a\mapsto\kappa_a$ is nonincreasing, whereas $a\mapsto\tau(a)$ is
increasing.  Moreover, as $a\to\infty$, $h(z_a)=a$ implies
$z_a\to-\infty$, and hence $\kappa_a=\Phi(z_a)\to0$.

The scalar constants above sharpen those in \citet[Lemma~8]{bull2011}, whose
model uses an unknown constant mean integrated under a flat prior.

\subsection{One-step regret bound}
\label{sec:one-step-recursion}

\begin{proposition}[One-step regret bound]
\label{prop:one-step-regret}
Define
\begin{equation}
  q:=1-\eta\kappa_a\in[0,1),
  \qquad
  C:=\sigma\tau(a)>0.
  \label{eq:recursion-constants}
\end{equation}
For every $n\geq n_0$, if the selected point $x_{n+1}$ satisfies
\eqref{eq:approx-ei}, then
\begin{equation}
    r_{n+1}
    \leq q r_n+C s_n(x_{n+1}).
  \label{eq:one-step}
\end{equation}
\end{proposition}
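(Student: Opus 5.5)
The plan is to chain the two bounds of Lemma~\ref{lem:ei-comparison} through the weak-EI condition \eqref{eq:approx-ei}. The upper bound will be applied at the selected point $x_{n+1}$, and the multiplicative lower bound at a global minimizer $x^\star$. The identities \eqref{eq:improvement-regret} then translate realized improvements into regrets.

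\emph{Lower bound at the minimizer.} By Lemma~\ref{lem:ei-comparison} and \eqref{eq:improvement-regret},
$$
  \EI_n(x^\star)\geq\kappa_a I_n(x^\star)=\kappa_a r_n .
$$
Combining this with \eqref{eq:approx-ei} gives
$$
  \EI_n(x_{n+1})\geq\eta\sup_{x\in\X}\EI_n(x)\geq\eta\,\EI_n(x^\star)\geq\eta\kappa_a r_n .
$$

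\emph{Upper bound at the selected point.} The upper inequality of \eqref{eq:ei-comparison}, together with $I_n(x_{n+1})=r_n-r_{n+1}$ from \eqref{eq:improvement-regret}, gives
$$
  \EI_n(x_{n+1})\leq r_n-r_{n+1}+\tau(a)\,\sigma s_n(x_{n+1}).
$$

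\emph{Combining.} Chaining the two displays yields
$$
  \eta\kappa_a r_n\leq r_n-r_{n+1}+C s_n(x_{n+1}),
$$
which rearranges to $r_{n+1}\leq(1-\eta\kappa_a)r_n+C s_n(x_{n+1})$, i.e.\ \eqref{eq:one-step} with $q$ and $C$ from \eqref{eq:recursion-constants}.

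It remains to verify the claimed ranges of the constants. Since $\eta\in(0,1]$ and $\kappa_a\in(0,1]$, where $\kappa_a=\Phi(z_a)>0$ for $a>0$ and $\kappa_a=1$ for $a=0$, the product $\eta\kappa_a$ lies in $(0,1]$, so $q\in[0,1)$. Positivity of $C$ follows from $\sigma>0$ and $\tau>0$ everywhere.

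The only step requiring care is the degenerate case $s_n(x_{n+1})=0$ or $s_n(x^\star)=0$. Lemma~\ref{lem:ei-comparison} is stated for every $x\in\X$, covering both $s>0$ and $s=0$, so no separate case split is needed and the argument above applies verbatim. Otherwise the proof is a direct rearrangement, and I do not expect a substantive obstacle.
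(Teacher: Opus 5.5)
Your proposal is correct and matches the paper's proof: you apply the multiplicative lower bound of Lemma~\ref{lem:ei-comparison} at a global minimizer and the additive upper bound at $x_{n+1}$, chain them through \eqref{eq:approx-ei} and \eqref{eq:improvement-regret}, and rearrange. Your extra checks are also accurate: $q\in[0,1)$, $C>0$, and the degenerate case $s=0$ is already covered by the lemma.
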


\begin{proof}
The multiplicative lower bound in \eqref{eq:ei-comparison} at a global
minimizer and the additive upper bound at the selected point, together with
\eqref{eq:improvement-regret} and \eqref{eq:approx-ei}, give
$$
  \eta\kappa_a r_n
  \leq \EI_n(x_{n+1})
  \leq r_n-r_{n+1}+\sigma\tau(a)s_n(x_{n+1}).
$$
Rearranging proves \eqref{eq:one-step}.
\end{proof}

For $n\geq n_0$ with $r_n>0$, the one-step regret bound
\eqref{eq:one-step} gives strict regret decrease whenever
$Cs_n(x_{n+1})<(1-q)r_n$.  Otherwise, the right-hand side of
\eqref{eq:one-step} is at least $r_n$, and
\eqref{eq:regret-monotonicity} gives $r_{n+1}\leq r_n$.  For $T>n_0$, the values
$v_{n+1}=s_n(x_{n+1})$, $n_0\leq n<T$, are the post-initial
selected-point innovation norms.  For $1\leq j\leq T-n_0$,
Lemma~\ref{lem:innovation-order-statistics} bounds their $j$th largest value
by $\delta_j(k_0,\X)$.

\begin{remark}[Uniform constants]
\label{rem:ei-constants}
The initial values $f(x_1),\ldots,f(x_{n_0})$ enter the constants $q$ and
$C$ through $B_0$ in
\eqref{eq:residual-rkhs-radius}.
Define
\begin{equation}
  q_B:=1-\eta\kappa_{B/\sigma},
  \qquad
  C_B:=\sigma\tau(B/\sigma).
  \label{eq:uniform-recursion-constants}
\end{equation}
Since $a=B_0/\sigma\leq B/\sigma$, the monotonicity of
$a\mapsto\kappa_a$ and $a\mapsto\tau(a)$ gives
$q\leq q_B<1$ and $C\leq C_B$.
Since $r_n$ and $s_n(x_{n+1})$ are nonnegative,
$$
  q r_n+C s_n(x_{n+1})
  \leq q_Br_n+C_Bs_n(x_{n+1}).
$$
Thus \eqref{eq:one-step} remains valid when $q,C$ are replaced by $q_B,C_B$.
The constants $q_B$ and $C_B$ depend only on $B$, $\sigma$, and $\eta$.
\end{remark}

(Appendix~\ref{app:optimizer-uncertainty} uses both lower bounds in
\eqref{eq:ei-comparison} to derive the refined one-step regret bound
\eqref{eq:optimizer-uncertainty-recursion}, which involves both
$s_n(x_{n+1})$ and $s_n(x^\star)$ for a minimizer $x^\star$.)

\section{Ranked innovation norms and regret bounds}
\label{sec:innovation-transfer}

The separation radii of $k$ bound those of $k_0$, which in turn bound the
ranked selected-point innovation norms.  The one-step regret bound
\eqref{eq:one-step} then yields finite-budget bounds and simple-regret
consistency.

\subsection{Separation radii after the initial design}

By \eqref{eq:posterior-restart-power}, the normalized posterior standard
deviations at the post-initial selected points are successive innovation norms
for $k_0$.  Sections~\ref{sec:width-geometry} and~\ref{sec:kernels} give
quantitative bounds for the separation radii of $k$.

\begin{lemma}[Residual-kernel separation radii]
\label{lem:residual-envelope-domination}
For every $m\geq1$,
\begin{equation}
  \delta_m(k_0,\X)\leq\delta_m(k,\X).
  \label{eq:residual-envelope-domination}
\end{equation}
\end{lemma}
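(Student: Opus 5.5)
The plan is to compare, for an arbitrary $m$-tuple $\mathbf y=(y_1,\ldots,y_m)\in\X^m$, the successive innovation norms computed with $k_0$ to those computed with $k$ along a suitably augmented sequence. By \eqref{eq:posterior-restart-power}, applied with the post-initial points replaced by $y_1,\ldots,y_{j-1}$, we have
$$
  P^{k_0}_{(y_1,\ldots,y_{j-1})}(y_j)
  =\dist_{\Hk}\bigl(k(\cdot,y_j),\,V_{n_0}+\operatorname{span}\{k(\cdot,y_i),\,i<j\}\bigr)
  =P_{(x_1,\ldots,x_{n_0},y_1,\ldots,y_{j-1})}(y_j).
$$
This identity is simply orthogonal decomposition relative to $V_{n_0}$, exactly as in the derivation of \eqref{eq:posterior-restart-power}.

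A naive comparison with $P_{(y_1,\ldots,y_{j-1})}(y_j)$ goes in the right direction termwise, since enlarging the subspace decreases the distance. Termwise domination then gives
$$
  \min_j P^{k_0}_{(y_1,\ldots,y_{j-1})}(y_j)\leq\min_j P_{(y_1,\ldots,y_{j-1})}(y_j)\leq\delta_m(k,\X).
$$
Taking the supremum over $\mathbf y$ yields \eqref{eq:residual-envelope-domination}. So the key step is the monotonicity of the distance to a subspace under enlargement, $V_{(y_1,\ldots,y_{j-1})}\subseteq V_{n_0}+V_{(y_1,\ldots,y_{j-1})}$. The subspace $V_{n_0}+V_{(y_1,\ldots,y_{j-1})}$ is exactly $V_{(x_1,\ldots,x_{n_0},y_1,\ldots,y_{j-1})}$, and it contains $V_{(y_1,\ldots,y_{j-1})}$.

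The main (mild) obstacle is to justify the displayed identity carefully. One must check that the projection of $\Psi_0(y_j)=\Pi^\perp_{n_0}k(\cdot,y_j)$ onto $\operatorname{span}\{\Psi_0(y_i)\}$ gives the same residual as the projection of $k(\cdot,y_j)$ onto $V_{n_0}\oplus\operatorname{span}\{\Pi^\perp_{n_0}k(\cdot,y_i)\}$. I would do this by writing $W:=\operatorname{span}\{\Psi_0(y_i),\,i<j\}\subseteq V_{n_0}^\perp$, noting that $V_{n_0}+V_{(y_1,\ldots,y_{j-1})}=V_{n_0}\oplus W$ is an orthogonal sum, and observing that $\Pi^\perp_{V_{n_0}\oplus W}=\Pi^\perp_W\Pi^\perp_{n_0}$. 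This also handles repeated or singular points, since no Gram inverses are used.
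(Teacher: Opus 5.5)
Your proof is correct, but it reaches the inequality by a different mechanism than the paper. The paper never identifies the $k_0$-innovation norm with a $k$-innovation norm. It notes that $\operatorname{span}\{\Psi_0(x_i),\,i<j\}=\Pi_{n_0}^\perp V_{\mathbf{x}_{j-1}}$ and uses only that $\Pi_{n_0}^\perp$ is a linear contraction: for every $v\in V_{\mathbf{x}_{j-1}}$, $\lVert\Pi_{n_0}^\perp k(\cdot,x_j)-\Pi_{n_0}^\perp v\rVert_{\Hk}\leq\lVert k(\cdot,x_j)-v\rVert_{\Hk}$, and it then takes the infimum over $v$.

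You instead prove the exact identity $P^{k_0}_{(y_1,\ldots,y_{j-1})}(y_j)=P_{(x_1,\ldots,x_{n_0},y_1,\ldots,y_{j-1})}(y_j)$. You do this through the orthogonal splitting $V_{n_0}+V_{(y_1,\ldots,y_{j-1})}=V_{n_0}\oplus W$ and the factorization $\Pi^\perp_{V_{n_0}\oplus W}=\Pi^\perp_W\Pi^\perp_{n_0}$, both of which are valid because $W\subseteq V_{n_0}^\perp$. You then use monotonicity of the distance when the subspace is enlarged. Both arguments rely equally on the isometric identification $\mathcal H_{k_0}=V_{n_0}^\perp$ to compute $k_0$-distances in $\Hk$.

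The paper's route is shorter and needs no decomposition; it would work verbatim with any linear map of norm at most one in place of $\Pi_{n_0}^\perp$. Your route requires the extra verification you flagged, but it gives more information. The $k_0$-innovations are exactly the $k$-innovations of the tuple with the initial design prepended. Hence $\delta_m(k_0,\X)$ is a supremum over tuples with the fixed prefix $(x_1,\ldots,x_{n_0})$, where only the last $m$ innovations are minimized. This makes transparent both the inequality and why it can be strict.
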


\begin{proof}
Recall from \eqref{eq:posterior-residual-kernel} that
$$
  k_0(x,y)=\langle\Psi_0(x),\Psi_0(y)\rangle_{\Hk}.
$$
Write, for an ordered tuple $\mathbf{x}=(x_1,\ldots,x_m)$,
$\mathbf{x}_0:=\varnothing$ and
$\mathbf{x}_j:=(x_1,\ldots,x_j)$ for $1\leq j\leq m$.
Since $k_0(\cdot,x)=\Psi_0(x)$ and $\mathcal H_{k_0}$ carries the norm
inherited from $\Hk$, for $1\leq j\leq m$,
$$
  P^{k_0}_{\mathbf{x}_{j-1}}(x_j)
  =\dist_{\Hk}\left(
    \Psi_0(x_j),
    \operatorname{span}\{\Psi_0(x_i),\,i<j\}
  \right).
$$
Thus the distances defining $\delta_m(k_0,\X)$ may be computed in~$\Hk$.
Since $\Psi_0(x)=\Pi_{n_0}^\perp k(\cdot,x)$,
$$
  \operatorname{span}\{\Psi_0(x_i),\,i<j\}
  =\Pi_{n_0}^\perp V_{\mathbf{x}_{j-1}}.
$$
The projector $\Pi_{n_0}^\perp$ is a contraction, so
$$
  \dist_{\Hk}\left(
    \Pi_{n_0}^\perp k(\cdot,x_j),
    \Pi_{n_0}^\perp V_{\mathbf{x}_{j-1}}
  \right)
  \leq
  \dist_{\Hk}\left(
    k(\cdot,x_j),
    V_{\mathbf{x}_{j-1}}
  \right).
$$
Taking the minimum over $j$ and then the supremum over tuples proves
\eqref{eq:residual-envelope-domination}.
\end{proof}

Thus bounds for $\delta_m(k,\X)$ also apply to $\delta_m(k_0,\X)$.  The
inequality can be strict because $\Pi_{n_0}^\perp$ may shorten these
distances by removing components in $V_{n_0}$, the span generated by the
initial design.

\subsection{Ranked selected-point innovation norms}

The following lemma bounds the $j$th largest normalized posterior standard
deviation among the post-initial selected points.

\begin{lemma}[Bounds for the ranked selected-point innovation norms]
\label{lem:innovation-order-statistics}
Let $T>n_0$ and write $N:=T-n_0$.  Set, for $1\leq i\leq N$,
\begin{equation}
  \widetilde v_i
  :=v_{n_0+i}
  =s_{n_0+i-1}(x_{n_0+i}).
  \label{eq:post-initial-innovation-reindexing}
\end{equation}
Let $v_{[1]}\geq\cdots\geq v_{[N]}$ be the nonincreasing rearrangement of
$\widetilde v_1,\ldots,\widetilde v_N$.  Then
\begin{equation}
  v_{[j]}\leq\delta_j(k_0,\X)
  \leq\delta_j(k,\X),\qquad 1\leq j\leq N.
  \label{eq:innovation-order-statistics}
\end{equation}
\end{lemma}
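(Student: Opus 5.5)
The second inequality in \eqref{eq:innovation-order-statistics} is Lemma~\ref{lem:residual-envelope-domination}, so only $v_{[j]}\leq\delta_j(k_0,\X)$ needs proof. The plan is to extract, for each $j$, a subsequence of the post-initial query points whose successive $k_0$-innovation norms are all at least $v_{[j]}$. Such a subsequence is an admissible $j$-tuple in the supremum \eqref{eq:delta} for $k_0$.

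Fix $1\leq j\leq N$. Choose indices $1\leq i_1<\cdots<i_j\leq N$ such that $\widetilde v_{i_1},\ldots,\widetilde v_{i_j}$ are $j$ largest values among $\widetilde v_1,\ldots,\widetilde v_N$, with ties broken arbitrarily. Then $\min_{1\leq l\leq j}\widetilde v_{i_l}=v_{[j]}$. Put $y_l:=x_{n_0+i_l}$ and consider the ordered tuple $(y_1,\ldots,y_j)$, which keeps the order of the trajectory.

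By \eqref{eq:posterior-restart-power}, $\widetilde v_{i_l}=s_{n_0+i_l-1}(x_{n_0+i_l})$ is the distance in $\Hk$ from $\Psi_0(y_l)$ to $\operatorname{span}\{\Psi_0(x_{n_0+1}),\ldots,\Psi_0(x_{n_0+i_l-1})\}$. This span contains $\operatorname{span}\{\Psi_0(y_1),\ldots,\Psi_0(y_{l-1})\}$, because $i_1<\cdots<i_{l-1}<i_l$. Distance to a smaller subspace is larger, and $k_0(\cdot,x)=\Psi_0(x)$ with $\mathcal H_{k_0}$ isometric to $V_{n_0}^\perp$ (as in the proof of Lemma~\ref{lem:residual-envelope-domination}). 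Together these give
\[
  P^{k_0}_{(y_1,\ldots,y_{l-1})}(y_l)\geq\widetilde v_{i_l}\geq v_{[j]},
  \qquad 1\leq l\leq j .
\]
Taking the minimum over $l$ and comparing with the supremum in \eqref{eq:delta} for $k_0$ yields $v_{[j]}\leq\delta_j(k_0,\X)$.

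The only delicate point is the monotonicity step: passing to a subsequence can only increase each innovation norm. This requires the selected indices to be listed in trajectory order, which is why the tuple is formed with $i_1<\cdots<i_j$ rather than in the order of decreasing innovation norm.

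Repeated points and degenerate innovations need no separate treatment. Equation \eqref{eq:posterior-restart-power} holds for singular Gram matrices, and zero innovations only make the inequalities trivial.
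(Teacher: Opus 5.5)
Your proof is correct and follows essentially the same route as the paper. Both arguments enumerate the indices of the $j$ largest values chronologically and use the span inclusion from \eqref{eq:posterior-restart-power} to compare distances in $\Hk$. Both then apply the definition of $\delta_j(k_0,\X)$ via $k_0(\cdot,y)=\Psi_0(y)$, with Lemma~\ref{lem:residual-envelope-domination} supplying the second inequality.
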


\begin{proof}[Proof idea]
Fix $j$ and list chronologically the indices of the $j$ largest
post-initial innovation norms.  At each of these indices, the span generated by
all earlier post-initial query points contains the span generated by the
earlier points in this $j$-point subsequence.  Each selected innovation norm
is therefore at most the corresponding successive innovation norm of the
subsequence.  The smallest selected value is $v_{[j]}$.  Taking minima and
using the definition of $\delta_j$ gives
$v_{[j]}\leq\delta_j(k_0,\X)$.
Lemma~\ref{lem:residual-envelope-domination} gives the second inequality in
\eqref{eq:innovation-order-statistics}.
Appendix~\ref{app:ranked-innovation-proof} gives the complete proof.
\end{proof}

In particular, for $1\leq m\leq N$ and $\varepsilon\geq0$, if
$\delta_m(k_0,\X)\leq\varepsilon$, then at most $m-1$ of the post-initial
values $v_{n_0+1},\ldots,v_T$ can exceed $\varepsilon$.

\subsection{Finite-budget bounds}

Over a finite horizon, the positions of the smallest post-initial innovation
norms are known only retrospectively.  The following lemma expresses the
recursion as a minimum over subsets of indices.

Define, for $q\in[0,1)$, $C>0$, and $v\geq0$, the map
$F_v:[0,\infty)\to[0,\infty)$ by
\begin{equation}
  F_v(u):=\min\{u,qu+Cv\}.
  \label{eq:capped-recursion-map}
\end{equation}

\begin{lemma}[Recursion along a subsequence]
\label{lem:capped-certificate}
Let $N\geq1$, and let $u_0,\ldots,u_N$ and $v_1,\ldots,v_N$ be
nonnegative numbers such that
$$
  u_i\leq u_{i-1},
  \qquad
  u_i\leq q u_{i-1}+Cv_i,
  \qquad 1\leq i\leq N,
$$
where $q\in[0,1)$ and $C>0$.  Given $U_0\geq u_0$, define
\begin{equation}
  U_i:=\min\{U_{i-1},qU_{i-1}+Cv_i\},
  \qquad 1\leq i\leq N.
  \label{eq:pathwise-capped-certificate}
\end{equation}
Then $u_i\leq U_i$ for every $i$.  Write each nonempty subset
$S\subseteq\{1,\ldots,N\}$ as
$S=\{i_1<\cdots<i_{|S|}\}$.  Then
\begin{equation}
  U_N
  =
  \min_{S\subseteq\{1,\ldots,N\}}
  \biggl\{
    q^{|S|} U_0
    +C\sum_{h=1}^{|S|}q^{|S|-h}v_{i_h}
  \biggr\},
  \label{eq:pathwise-subset-certificate}
\end{equation}
where the expression in braces is interpreted as $U_0$ when
$S=\varnothing$.
\end{lemma}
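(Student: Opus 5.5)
The lemma has two parts. The first is the comparison $u_i\leq U_i$, which I would prove by induction on $i$. The second is the subset formula \eqref{eq:pathwise-subset-certificate}, which I would prove by induction on $N$ after recording two monotonicity facts about the maps $F_v$ of \eqref{eq:capped-recursion-map}.

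For the comparison, note first that $U_i=F_{v_i}(U_{i-1})$. Each map $F_v$ is nondecreasing on $[0,\infty)$, because it is the minimum of two nondecreasing affine functions (here $q\geq0$ is used). The base case is $u_0\leq U_0$. If $u_{i-1}\leq U_{i-1}$, then the two hypotheses on $u_i$ give
$$
u_i\leq\min\{u_{i-1},qu_{i-1}+Cv_i\}=F_{v_i}(u_{i-1})\leq F_{v_i}(U_{i-1})=U_i .
$$

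For the subset formula, write $G_N(U_0)$ for the right-hand side of \eqref{eq:pathwise-subset-certificate}. For each subset $S$, the expression in braces is obtained from $U_0$ by applying the identity map at the indices outside $S$ and the affine map $u\mapsto qu+Cv_i$ at the indices $i\in S$, in increasing order of index. Unrolling these compositions gives $q^{|S|}U_0+C\sum_h q^{|S|-h}v_{i_h}$, so the affine coefficients match. The claim is therefore that composing the minima $F_{v_N}\circ\cdots\circ F_{v_1}$ equals the minimum, over all $2^N$ branch choices, of the corresponding composed maps.

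I would prove this claim by induction on $N$, using that each branch map $g\in\{\mathrm{id},\,u\mapsto qu+Cv\}$ is nondecreasing. The key identity is
$$
\min_a g_a(\min_b h_b(x))=\min_{a,b} g_a(h_b(x))
$$
for finite families of nondecreasing maps. This holds because a nondecreasing map commutes with a finite minimum: $g(\min_b y_b)=\min_b g(y_b)$. For the inductive step I would take $U_{N}=F_{v_N}(U_{N-1})$ and insert the inductive expression for $U_{N-1}$ as a minimum over subsets $S'\subseteq\{1,\ldots,N-1\}$. Distributing $F_{v_N}$ over that minimum then produces two families. The subsets $S'$ correspond to the identity branch at step $N$, and the subsets $S'\cup\{N\}$ correspond to the affine branch. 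For the affine branch, $q\bigl(q^{|S'|}U_0+C\sum q^{|S'|-h}v_{i_h}\bigr)+Cv_N$ equals the bracketed expression for $S'\cup\{N\}$. The case $N=1$ is just the definition, with $S=\varnothing$ giving $U_0$ and $S=\{1\}$ giving $qU_0+Cv_1$.

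I expect no real obstacle. The only point needing care is that interchanging the minimum with the outer maps requires the monotonicity of both branches, which relies on $q\geq0$. The empty-set convention must also be checked, and it matches the identity branch throughout.
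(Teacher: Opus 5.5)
Your proposal is correct and matches the paper's proof. The comparison $u_i\leq U_i$ is proved by the same induction, using that $F_{v_i}$ is nondecreasing because $q\geq0$. The subset formula is proved by induction, splitting the subsets according to whether they contain the last index and using that, since $q\geq0$, the affine branch $u\mapsto qu+Cv_N$ commutes with the finite minimum over $S'$. The only cosmetic difference is that the paper starts its induction at index $0$ (only the empty subset) rather than at $N=1$.
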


\begin{proof}[Proof idea]
The map $F_v$ is nondecreasing, so induction gives $u_i\leq U_i$.
At each step, \eqref{eq:pathwise-capped-certificate} takes either the first
or the second branch of the minimum.  The indices at which it takes the
second branch form a subset $S$, and expanding those affine steps gives the
expression in braces in \eqref{eq:pathwise-subset-certificate}.  Induction on
$i$ gives the minimum over all subsets.
Appendix~\ref{app:capped-certificate-proof} gives the complete proof.
\end{proof}

\begin{theorem}[Finite-budget bound]
\label{thm:order-statistic-transfer}
Let $(r_n)_{n\geq n_0}$ be any nonnegative nonincreasing sequence
satisfying, for fixed $q\in[0,1)$ and $C>0$,
\begin{equation}
  r_{n+1}\leq q r_n+C v_{n+1},
  \qquad n\geq n_0.
  \label{eq:finite-budget-recursion}
\end{equation}
Let $T>n_0$, put $N:=T-n_0$, and let
$v_{[1]}\geq\cdots\geq v_{[N]}$ be the decreasing rearrangement of the
post-initial selected-point innovation norms
$v_{n_0+1},\ldots,v_T$.  Let
$(\delta_m(k_0,\X))_{m\geq1}$ be the sequential separation radii of the
residual kernel $k_0$ on $\X$.  Set $M:=N-L+1$ for $1\leq L\leq N$.  Then
\begin{equation}
  \begin{aligned}
  r_T
  &\leq q^Lr_{n_0}
    +C\sum_{j=0}^{L-1}q^jv_{[M+j]}\\
  &\leq q^Lr_{n_0}
    +C\sum_{j=0}^{L-1}q^j\delta_{M+j}(k_0,\X).
  \end{aligned}
  \label{eq:order-transfer-refined}
\end{equation}
Consequently,
\begin{equation}
  r_T
  \leq q^Lr_{n_0}
  +C\frac{1-q^L}{1-q}\delta_M(k_0,\X).
  \label{eq:order-transfer-coarse}
\end{equation}
\end{theorem}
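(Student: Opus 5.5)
We apply Lemma~\ref{lem:capped-certificate} to the post-initial sequence and then choose a good subset $S$ in \eqref{eq:pathwise-subset-certificate}. Set $u_i:=r_{n_0+i}$ and $v_i:=\widetilde v_i=v_{n_0+i}$ for $0\leq i\leq N$ (resp.\ $1\leq i\leq N$), and take $U_0:=u_0=r_{n_0}$. Monotonicity of $(r_n)$ and \eqref{eq:finite-budget-recursion} are exactly the two hypotheses of the lemma, so $r_T=u_N\leq U_N$. Consequently, for every subset $S=\{i_1<\cdots<i_L\}\subseteq\{1,\ldots,N\}$ with $|S|=L$,
\begin{equation*}
  r_T\leq q^Lr_{n_0}+C\sum_{h=1}^{L}q^{L-h}\widetilde v_{i_h}.
\end{equation*}

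We take $S$ to be the set of indices carrying the $L$ smallest innovation norms, namely those whose values are $v_{[M]},\ldots,v_{[N]}$ with $M=N-L+1$; ties are broken arbitrarily. The chronological order of these indices need not match their rank order, so the main step is a rearrangement argument. The weights $q^{L-h}$ increase with $h$, so the weighted sum $\sum_h q^{L-h}\widetilde v_{i_h}$ is not automatically controlled by the sorted version. We need only the bound with the largest weight on the smallest value.

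Write $w_{L-h}:=\widetilde v_{i_h}$, so that the sum is $\sum_{j=0}^{L-1}q^jw_j$, where $(w_j)$ is some permutation of $(v_{[M]},\ldots,v_{[N]})$. Because $q^j$ is nonincreasing in $j$, the rearrangement inequality says the sum is largest when the $w_j$ are sorted in nonincreasing order. In that case $w_j=v_{[M+j]}$, since $v_{[M]}\geq v_{[M+1]}\geq\cdots$. Hence
\begin{equation*}
  \sum_{j=0}^{L-1}q^jw_j\leq\sum_{j=0}^{L-1}q^jv_{[M+j]},
\end{equation*}
which is the first inequality in \eqref{eq:order-transfer-refined}. (When $q=0$, take $0^0=1$; the claim then reads $r_T\leq Cv_{[M]}$ for $L\geq1$, and this follows because a larger weight is placed on the larger value.)

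The second inequality in \eqref{eq:order-transfer-refined} follows termwise from Lemma~\ref{lem:innovation-order-statistics}, which gives $v_{[M+j]}\leq\delta_{M+j}(k_0,\X)$ for $M+j\leq N$.

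For \eqref{eq:order-transfer-coarse}, use that $(\delta_m(k_0,\X))_m$ is nonincreasing, so $\delta_{M+j}(k_0,\X)\leq\delta_M(k_0,\X)$ for every $j\geq0$. Summing the geometric series $\sum_{j=0}^{L-1}q^j=(1-q^L)/(1-q)$ then gives the bound.

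The only delicate point is the rearrangement step, that is, matching the chronological order of the selected subset against the ranked order. The rest is direct substitution into Lemmas~\ref{lem:capped-certificate} and~\ref{lem:innovation-order-statistics}.
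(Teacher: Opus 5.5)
Your proposal is correct and follows essentially the same route as the paper: apply Lemma~\ref{lem:capped-certificate} with $U_0=r_{n_0}$, take $S$ to be the chronological positions of the $L$ smallest innovation norms, use the rearrangement inequality to pair the larger selected values with the larger weights, and then invoke Lemma~\ref{lem:innovation-order-statistics} and the monotonicity of $\delta_m(k_0,\X)$. One wording slip: the sentence about needing ``the largest weight on the smallest value'' should say the largest weight on the \emph{largest} selected value, which is what your subsequent computation (with $w_j=v_{[M+j]}$) actually does.
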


\begin{proof}[Proof idea]
Apply the subset formula in Lemma~\ref{lem:capped-certificate} to the
chronological positions of the $L$ smallest post-initial innovation norms.
At every other position, use $r_{n+1}\leq r_n$ instead of
\eqref{eq:finite-budget-recursion}, so only these $L$ innovation norms appear
in the iterated bound.  The rearrangement
inequality bounds the weighted selected values by
$\sum_{j=0}^{L-1}q^jv_{[M+j]}$.  Lemma~\ref{lem:innovation-order-statistics}
then replaces each ranked innovation norm by
$\delta_{M+j}(k_0,\X)$, giving \eqref{eq:order-transfer-refined}.
Monotonicity of the separation radii gives
\eqref{eq:order-transfer-coarse}.
Appendix~\ref{app:ranked-transfer-proof} gives the complete proof.
\end{proof}

\begin{remark}
When $q=0$, taking $L=1$ in \eqref{eq:order-transfer-coarse} and using
regret monotonicity gives
$$
  r_T
  \leq
  \min\{r_{n_0},C\delta_{T-n_0}(k_0,\X)\}.
$$
\end{remark}

\paragraph{Application to fixed-prior EI.}
For fixed-prior EI, \eqref{eq:regret-monotonicity},
Proposition~\ref{prop:one-step-regret}, and
\eqref{eq:selected-point-innovation} verify the hypotheses of
Theorem~\ref{thm:order-statistic-transfer}.  For every $f\in\Hk$ with
$\lVert f\rVert_{\Hk}\leq B$, Remark~\ref{rem:ei-constants} shows that
\eqref{eq:one-step} remains valid with $q_B,C_B$.  The theorem therefore
applies with these constants to every fixed-prior EI trajectory
satisfying~\eqref{eq:approx-ei}.

In \eqref{eq:order-transfer-coarse}, increasing $L$ makes the contraction
term $q^Lr_{n_0}$ nonincreasing, but decreases $M=N-L+1$ and makes
$\delta_M(k_0,\X)$ nondecreasing.
Section~\ref{sec:kernels} chooses $L$ for the polynomial and exponential
bounds on the separation radii in
\eqref{eq:polynomial-envelope} and~\eqref{eq:log-enhanced-envelope}.

\subsection{Consistency from the finite-budget bound}
\label{sec:finite-budget-consistency}

\begin{corollary}[Consistency from decay of the separation radii]
\label{cor:qualitative-consistency}
Suppose that the separation radii $\delta_m(k_0,\X)$ tend to zero.  If $(r_n)$
satisfies the hypotheses of
Theorem~\ref{thm:order-statistic-transfer}, then
$$
  r_T\longrightarrow0
  \qquad(T\to\infty).
$$
\end{corollary}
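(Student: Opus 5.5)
The plan is to apply the coarse finite-budget bound \eqref{eq:order-transfer-coarse} of Theorem~\ref{thm:order-statistic-transfer} with a budget-dependent split $L=L(N)$, chosen so that both terms vanish. The first bound we use is
$$
  r_T\leq q^{L}r_{n_0}+\frac{C}{1-q}\,\delta_{N-L+1}(k_0,\X),
$$
which follows from \eqref{eq:order-transfer-coarse} and $1-q^L\leq1$. Here $r_{n_0}$ is a fixed finite number determined by the initial design, and $q\in[0,1)$ and $C>0$ are fixed.

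The choice of $L$ is $L:=\lceil N/2\rceil$, so that $1\leq L\leq N$ and $M=N-L+1\geq N/2$. Then two limits follow as $T\to\infty$, equivalently $N\to\infty$. First, $q^L r_{n_0}\leq q^{N/2}r_{n_0}\to0$. When $q=0$ this term is simply zero for $L\geq1$, with the convention $0^L=0$. Second, $M\to\infty$, and since $(\delta_m(k_0,\X))_m$ is nonincreasing and tends to zero by hypothesis, $\delta_M(k_0,\X)\to0$. Combining these with $r_T\geq0$ gives $r_T\to0$.

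An alternative route is an $\varepsilon$ argument. Fix $\varepsilon>0$ and choose $L$ with $q^Lr_{n_0}<\varepsilon/2$. Then take $N$ large enough that $\delta_{N-L+1}(k_0,\X)<(1-q)\varepsilon/(2C)$. This route avoids tying $L$ to $N$.

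There is no real obstacle here. The only point needing care is that the hypotheses of Theorem~\ref{thm:order-statistic-transfer} hold for every $T>n_0$, so the bound may be applied separately for each $N$ with its own $L(N)$.

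For the fixed-prior EI corollary that follows, the hypothesis $\delta_m(k_0,\X)\to0$ is supplied by Lemma~\ref{lem:delta-vanishes} applied to the continuous kernel $k_0$.
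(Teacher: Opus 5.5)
Your proof is correct and follows the paper's argument. The paper also applies \eqref{eq:order-transfer-coarse} with $L$ of order $N/2$ and lets both $q^Lr_{n_0}$ and $\delta_M(k_0,\X)$ tend to zero; it takes $L=\lfloor N/2\rfloor$ with $T\geq n_0+2$, whereas your $L=\lceil N/2\rceil$ already gives $1\leq L\leq N$ for every $N\geq1$.
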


\begin{proof}
Let $T\geq n_0+2$, put $N:=T-n_0$, take $L=\lfloor N/2\rfloor$, and set
$M:=N-L+1$.  As $T\to\infty$, both $L$ and $M$ tend to infinity, and
$q^L\to0$ because $0\leq q<1$.  The assumed decay of the separation radii and
\eqref{eq:order-transfer-coarse} give
$$
  r_T
  \leq q^Lr_{n_0}
  +\frac{C}{1-q}\delta_M(k_0,\X)
  \longrightarrow0.
$$
\end{proof}

\begin{corollary}[Consistency of EI]
\label{cor:ei-consistency}
For every fixed-prior EI trajectory satisfying~\eqref{eq:approx-ei},
$$
  r_n\longrightarrow0.
$$
\end{corollary}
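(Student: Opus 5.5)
The argument only chains results already proved. First, the hypotheses of Theorem~\ref{thm:order-statistic-transfer} hold for every fixed-prior EI trajectory satisfying~\eqref{eq:approx-ei}:
\begin{itemize}
\item the regret sequence $(r_n)_{n\geq n_0}$ is nonnegative and nonincreasing by \eqref{eq:regret-monotonicity};
\item Proposition~\ref{prop:one-step-regret} gives $r_{n+1}\leq q r_n + C s_n(x_{n+1})$ with $q=1-\eta\kappa_a\in[0,1)$ and $C=\sigma\tau(a)>0$;
\item by \eqref{eq:selected-point-innovation}, $s_n(x_{n+1})=v_{n+1}$, so this is exactly the recursion \eqref{eq:finite-budget-recursion}.
\end{itemize}
This is the verification described in the paragraph ``Application to fixed-prior EI''.

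It then suffices to show that $\delta_m(k_0,\X)\to0$, so that Corollary~\ref{cor:qualitative-consistency} applies. Two routes are available.
\begin{itemize}
\item The residual kernel $k_0(x,y)=\langle\Psi_0(x),\Psi_0(y)\rangle_{\Hk}$ is continuous and positive semidefinite on the compact set $\X$. This holds because $\Psi_0=\Pi_{n_0}^\perp k(\cdot,x)$ is the composition of the continuous canonical feature map with a bounded projector. Lemma~\ref{lem:delta-vanishes}, applied with $k_0$ in place of $k$, then gives $\delta_m(k_0,\X)\downarrow0$.
\item Alternatively, Lemma~\ref{lem:residual-envelope-domination} gives $0\leq\delta_m(k_0,\X)\leq\delta_m(k,\X)$. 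Lemma~\ref{lem:delta-vanishes} for $k$ itself gives $\delta_m(k,\X)\to0$, and the squeeze finishes.
\end{itemize}

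Corollary~\ref{cor:qualitative-consistency} now yields $r_T\to0$ as $T\to\infty$, which is the claim. The corollary takes $L=\lfloor N/2\rfloor$ and $M=N-L+1$, so both $q^L\to0$ and $\delta_M(k_0,\X)\to0$.

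There is no real obstacle. The only point needing care is that the constants $q$ and $C$ are fixed along the trajectory, which holds because $B_0$ depends only on the initial data. One should also note that $\sigma\tau(a)>0$ even when $a=0$, so $C>0$ as Theorem~\ref{thm:order-statistic-transfer} requires.
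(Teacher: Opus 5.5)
Your proposal is correct and follows the paper's own proof: you verify the hypotheses of Theorem~\ref{thm:order-statistic-transfer} via \eqref{eq:regret-monotonicity}, Proposition~\ref{prop:one-step-regret}, and \eqref{eq:selected-point-innovation}, then apply Lemma~\ref{lem:delta-vanishes} to the continuous residual kernel $k_0$ to get $\delta_m(k_0,\X)\to0$, and conclude with Corollary~\ref{cor:qualitative-consistency}. Your alternative route, squeezing $\delta_m(k_0,\X)$ between $0$ and $\delta_m(k,\X)$ using Lemma~\ref{lem:residual-envelope-domination}, is also valid.
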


\begin{proof}
By \eqref{eq:regret-monotonicity},
Proposition~\ref{prop:one-step-regret}, and
\eqref{eq:selected-point-innovation}, the simple-regret sequence and
selected-point innovation norms satisfy the hypotheses of
Theorem~\ref{thm:order-statistic-transfer}.  The residual kernel $k_0$ is
continuous, so Lemma~\ref{lem:delta-vanishes} gives
$\delta_m(k_0,\X)\to0$.  Corollary~\ref{cor:qualitative-consistency}
therefore gives $r_n\to0$.
\end{proof}

For quantitative bounds,
Lemma~\ref{lem:residual-envelope-domination} shows that it suffices to bound
$\delta_m(k,\X)$.  Section~\ref{sec:width-geometry} obtains such bounds from
Gram determinants and Kolmogorov widths.

\section{From Kolmogorov widths to sequential separation radii}
\label{sec:width-geometry}

\subsection{From innovation norms to Gram determinants}

The determinant and width bounds in this section do not use the Euclidean
structure of $\X$.
We therefore let $\mathcal D$ be a nonempty set and
$k:\mathcal D\times\mathcal D\to\mathbb R$ a positive-semidefinite kernel.
We use the innovation and separation-radius notation of
\eqref{eq:tuple-power-function} and~\eqref{eq:delta}, with $\mathcal D$ in
place of $\X$.

For an ordered tuple
$\mathbf{x}=(x_1,\ldots,x_m)\in\mathcal D^m$, the Gram matrix in
$\mathcal H_k$ of the evaluation representers
$k(\cdot,x_1),\ldots,k(\cdot,x_m)$ is
\begin{equation}
  K_{\mathbf{x}}
  :=
  \bigl[
    \langle k(\cdot,x_i),k(\cdot,x_j)\rangle_{\mathcal H_k}
  \bigr]_{i,j=1}^m
  =
  [k(x_i,x_j)]_{i,j=1}^m.
  \label{eq:gram-matrix}
\end{equation}

Applying Gram--Schmidt orthogonalization, without normalization, to the
ordered evaluation representers produces their successive innovations.
These vectors are orthogonal, and the Gram determinant is the product of
their squared norms.
The minimum of the innovation norms is therefore at most their geometric
mean, $\det(K_{\mathbf x})^{1/(2m)}$.
\citet[Lemma~2.1]{li2024} use the same identity to bound products of
successive distances, and \citet{santin2024} use it to bound kernel Gram
determinants and show that the polynomial order of
$\det(K_{\mathbf x})^{1/m}$ is sharp for quasi-uniform point sets under their
domain and Sobolev regularity assumptions.

\begin{lemma}[Gram determinants and innovation norms]
\label{lem:gram-volume}
Let $\mathcal D$ be nonempty and let
$k:\mathcal D\times\mathcal D\to\mathbb R$ be positive semidefinite.  For
every integer $m\geq1$, with $K_{\mathbf{x}}$ as in
\eqref{eq:gram-matrix},
\begin{equation}
  \delta_m(k,\mathcal D)
  \leq
  \sup_{\mathbf{x}\in\mathcal D^m}\det(K_{\mathbf{x}})^{1/(2m)}.
  \label{eq:gram-volume-envelope}
\end{equation}
\end{lemma}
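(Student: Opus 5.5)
The plan is to fix an arbitrary tuple $\mathbf{x}=(x_1,\ldots,x_m)\in\mathcal D^m$ and prove the pointwise inequality
$$
  \min_{1\leq j\leq m}P_{(x_1,\ldots,x_{j-1})}(x_j)\leq\det(K_{\mathbf x})^{1/(2m)} .
$$
Taking the supremum over $\mathbf x$ on both sides then gives \eqref{eq:gram-volume-envelope}.

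The key identity is
$$
  \det(K_{\mathbf x})=\prod_{j=1}^m P_{(x_1,\ldots,x_{j-1})}(x_j)^2 .
$$
To obtain it, write $e_j:=k(\cdot,x_j)$ and let $\iota_j:=\Pi^\perp_{(x_1,\ldots,x_{j-1})}e_j$ be the successive innovations. These are exactly the unnormalized Gram--Schmidt vectors of $e_1,\ldots,e_m$. By construction, $e_j-\iota_j\in\operatorname{span}\{e_1,\ldots,e_{j-1}\}$, and inductively this span equals $\operatorname{span}\{\iota_1,\ldots,\iota_{j-1}\}$. So there is a unit lower-triangular matrix $L$ with $(e_1,\ldots,e_m)=(\iota_1,\ldots,\iota_m)L^\top$ in the obvious sense. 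Then $K_{\mathbf x}=L\,G\,L^\top$, where $G$ is the Gram matrix of the $\iota_j$. Since the $\iota_j$ are mutually orthogonal, $G=\operatorname{diag}(\lVert\iota_j\rVert^2)$, and $\det L=1$. This gives the identity.

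The case of degenerate tuples is the only point needing care, and it causes no difficulty. If some $\iota_j=0$ (repeated points or linear dependence), the argument above still holds verbatim: the Gram--Schmidt step subtracts the orthogonal projection and needs no division. Both sides of the identity are then zero, and the minimum on the left of the pointwise inequality is zero. Nothing requires $K_{\mathbf x}$ to be invertible, so I would simply note that projections onto finite-dimensional subspaces always exist in $\mathcal H_k$.

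Finally, the minimum of the nonnegative numbers $P_j:=\lVert\iota_j\rVert$ satisfies $(\min_j P_j)^{m}\leq\prod_j P_j=\det(K_{\mathbf x})^{1/2}$. Taking $m$th roots gives the pointwise bound. I expect no real obstacle; the only step that needs a line of justification is the triangular change of basis.
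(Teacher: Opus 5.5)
Your proposal is correct and follows the paper's proof essentially step for step. You use the unit-triangular relation between the representers $k(\cdot,x_j)$ and their unnormalized Gram--Schmidt innovations to get $\det K_{\mathbf x}=\prod_{j=1}^m P_{(x_1,\ldots,x_{j-1})}(x_j)^2$ even for singular Gram matrices, then bound the minimum by the geometric mean and take the supremum. The paper writes the triangular change of basis in the opposite direction (innovations expressed through representers), which makes no difference.
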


\begin{proof}
Fix $\mathbf{x}=(x_1,\ldots,x_m)\in\mathcal D^m$.
At each $j=1,\ldots,m$, write
$$
  \iota_j:=\iota_{(x_1,\ldots,x_{j-1})}(x_j)
$$
for the innovation of $k(\cdot,x_j)$, that is, its component orthogonal to
the span of the preceding evaluation representers.  Since each $\iota_j$
equals $k(\cdot,x_j)$ minus a linear combination of those representers, the
transformation from
$$
  \bigl(k(\cdot,x_1),\ldots,k(\cdot,x_m)\bigr)
  \quad\text{to}\quad
  (\iota_1,\ldots,\iota_m)
$$
is unit triangular.  The corresponding Gram matrices are congruent through a
unit-triangular matrix and hence have the same determinant, even when the
representers are linearly dependent.  The innovations $\iota_j$ are mutually
orthogonal, and
$\lVert\iota_j\rVert_{\Hk}=P_{(x_1,\ldots,x_{j-1})}(x_j)$.  Therefore
\begin{equation}
  \det K_{\mathbf{x}}
  =\prod_{j=1}^m\lVert\iota_j\rVert_{\Hk}^2
  =\prod_{j=1}^m
    \bigl(P_{(x_1,\ldots,x_{j-1})}(x_j)\bigr)^2.
  \label{eq:gram-volume-identity}
\end{equation}
The minimum of nonnegative numbers is at most their geometric mean.  Hence
$$
  \min_{1\leq j\leq m}
    P_{(x_1,\ldots,x_{j-1})}(x_j)
  \leq\det(K_{\mathbf{x}})^{1/(2m)}.
$$
Taking the supremum over $\mathbf{x}\in\mathcal D^m$ proves
\eqref{eq:gram-volume-envelope}.
\end{proof}

\subsection{From Kolmogorov widths to Gram eigenvalues}

When $\{k(\cdot,x),\,x\in\mathcal D\}$ is compact in $\mathcal H_k$, the
packing argument gives the qualitative conclusion
$\delta_m(k,\mathcal D)\to0$.  We use
Kolmogorov widths for quantitative bounds.
The Kolmogorov widths \citep{kolmogorov1936} of the set of evaluation
representers of $k$ are
\begin{equation}
  d_m(k,\mathcal D)
  :=
  \inf_{\substack{V\subset\mathcal H_k\\ \dim V\leq m}}
  \sup_{x\in\mathcal D}
  \dist_{\mathcal H_k}\bigl(k(\cdot,x),V\bigr),
  \qquad m=0,1,2,\ldots.
  \label{eq:kernel-width}
\end{equation}
In particular,
$d_0(k,\mathcal D)=\sup_{x\in\mathcal D}\sqrt{k(x,x)}$.

For compact subsets of a Hilbert space,
\citet[Section~2]{binev2011} formulate weak greedy approximation through a
lower-triangular matrix.
\citet[Theorem~3.2]{devore2013} use this representation to give a product
inequality for successive weak-greedy approximation errors in terms of
Kolmogorov widths.
\citet[Theorem~1]{wenzelSantinHaasdonk2023} give a product inequality for the
successive innovation norms along any sequential design.
When $d_m(k,\mathcal D)=O(m^{-\alpha})$ with $\alpha>0$,
\citet[Corollary~2]{wenzelSantinHaasdonk2023} bound the geometric mean of the
successive innovation norms with indices $m+1,\ldots,2m$ by
$O(m^{-\alpha}(\log m)^\alpha)$.
In the proof of Lemma~\ref{lem:width-product}, each $i_j<j$ is the
approximation dimension used for the $j$th Gram eigenvalue.  These dimensions
may be chosen separately before the eigenvalue bounds are multiplied.

\begin{lemma}[Gram determinants and Kolmogorov widths]
\label{lem:width-product}
Let $\mathcal D$ be nonempty, let
$k:\mathcal D\times\mathcal D\to\mathbb R$ be positive semidefinite, and
suppose that $\sup_{x\in\mathcal D}k(x,x)<\infty$.  For every integer
$m\geq1$ and every choice of approximation dimensions
$i_1,\ldots,i_m$ satisfying $0\leq i_j<j$, one has
\begin{equation}
  \begin{aligned}
    \delta_m(k,\mathcal D)
    &\leq
    \sup_{\mathbf{x}\in\mathcal D^m}\det(K_{\mathbf{x}})^{1/(2m)}\\
    &\leq
    \biggl\{
      \prod_{j=1}^m
      \sqrt{\frac{m}{j-i_j}}\,d_{i_j}(k,\mathcal D)
    \biggr\}^{1/m}.
  \end{aligned}
  \label{eq:rank-profile-width-product}
  \end{equation}
In particular, $i_j=\lfloor j/2\rfloor$ gives
\begin{equation}
  \delta_m(k,\mathcal D)
  \leq
  \sup_{\mathbf{x}\in\mathcal D^m}\det(K_{\mathbf{x}})^{1/(2m)}
  \leq
  \sqrt{2e}
  \biggl\{
    \prod_{j=1}^m d_{\lfloor j/2\rfloor}(k,\mathcal D)
  \biggr\}^{1/m}.
  \label{eq:width-product}
\end{equation}
\end{lemma}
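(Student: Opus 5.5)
The first inequality in \eqref{eq:rank-profile-width-product} is Lemma~\ref{lem:gram-volume}, so the task is to bound $\det K_{\mathbf x}$ for a fixed tuple $\mathbf x\in\mathcal D^m$. I will write the determinant as the product of the eigenvalues $\lambda_1\geq\cdots\geq\lambda_m\geq0$ of $K_{\mathbf x}$ and bound each $\lambda_j$ separately:
\[
  \lambda_j\leq\frac{m}{j-i_j}\,d_{i_j}(k,\mathcal D)^2 .
\]
Multiplying these bounds and taking the $1/(2m)$ power gives the second inequality, since the supremum over $\mathbf x$ is then harmless.

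To prove the eigenvalue bound, fix $j$ and $i:=i_j<j$. Let $\varepsilon>d_i(k,\mathcal D)$, and choose a subspace $V\subset\mathcal H_k$ with $\dim V\leq i$ and $\dist(k(\cdot,x),V)\leq\varepsilon$ for all $x\in\mathcal D$. Let $\Pi_V$ be the orthogonal projector onto $V$. Then
\[
  K_{\mathbf x}=G_1+G_2,\qquad
  G_1:=[\langle\Pi_Vk(\cdot,x_a),\Pi_Vk(\cdot,x_b)\rangle],\qquad
  G_2:=[\langle\Pi_V^\perp k(\cdot,x_a),\Pi_V^\perp k(\cdot,x_b)\rangle].
\]
Both $G_1$ and $G_2$ are positive semidefinite, and $\operatorname{rank}G_1\leq i$.

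By Weyl's inequality, $\lambda_j(K_{\mathbf x})\leq\lambda_{i+1}(G_1)+\lambda_{j-i}(G_2)=\lambda_{j-i}(G_2)$. Since the $j-i$ largest eigenvalues of $G_2$ sum to at most its trace,
\[
  \lambda_{j-i}(G_2)\leq\frac{\operatorname{tr}G_2}{j-i}
  =\frac{1}{j-i}\sum_{a=1}^m\lVert\Pi_V^\perp k(\cdot,x_a)\rVert^2
  \leq\frac{m\varepsilon^2}{j-i}.
\]
Letting $\varepsilon\downarrow d_i$ gives the claim. The case $i=0$ with $V=\{0\}$ is consistent with the definition of $d_0$. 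The hypothesis $\sup k(x,x)<\infty$ ensures that the widths are finite.

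The main obstacle is the specialization $i_j=\lfloor j/2\rfloor$. It requires showing
\[
  \Bigl\{\prod_{j=1}^m\frac{m}{j-\lfloor j/2\rfloor}\Bigr\}^{1/(2m)}\leq\sqrt{2e}.
\]
Since $j-\lfloor j/2\rfloor=\lceil j/2\rceil\geq j/2$, the product is at most $(2m)^m/m!$. Using $m!\geq(m/e)^m$, this is at most $(2e)^m$, and the $1/(2m)$ power gives $\sqrt{2e}$. This proves \eqref{eq:width-product}.
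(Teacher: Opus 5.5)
Your proof is correct. It reaches the same eigenvalue estimate $(j-i)\lambda_j(\mathbf x)\le m\,d_i(k,\mathcal D)^2$ as the paper, but by a different mechanism.

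The paper writes $K_{\mathbf x}=S_{\mathbf x}^*S_{\mathbf x}$ with the synthesis operator $S_{\mathbf x}$ and notes that $\Pi_VS_{\mathbf x}$ has rank at most $i$. It then invokes an operator form of the Eckart--Young theorem, proved separately in an appendix lemma by reduction to the matrix case on the range of $S_{\mathbf x}$. This gives the tail-sum bound $\sum_{\ell>i}\lambda_\ell(\mathbf x)\le\lVert S_{\mathbf x}-\Pi_VS_{\mathbf x}\rVert_{\mathrm{HS}}^2\le m\,d_i^2$. The pointwise bound follows from $(j-i)\lambda_j\le\sum_{\ell=i+1}^{j}\lambda_\ell$.

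You instead split the Gram matrix orthogonally as $K_{\mathbf x}=G_1+G_2$ with $\operatorname{rank}G_1\le i$, then use Weyl's inequality and a trace bound on $G_2$. This stays entirely within $m\times m$ positive-semidefinite matrices and removes the need for the Hilbert--Schmidt low-rank approximation lemma. Nothing is lost in strength: applying Weyl at every index gives $\lambda_{i+\ell}(K_{\mathbf x})\le\lambda_\ell(G_2)$ for $1\le\ell\le m-i$. Summing over $\ell$ recovers the paper's tail-sum bound $\sum_{\ell>i}\lambda_\ell(\mathbf x)\le\operatorname{tr}G_2\le m\varepsilon^2$.

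The paper's formulation matches the low-rank-approximation language of the greedy-approximation literature it builds on. Yours is shorter and more elementary; Weyl's inequality is essentially the engine behind Eckart--Young anyway. Your handling of the specialization $i_j=\lfloor j/2\rfloor$, via $\lceil j/2\rceil\ge j/2$ and $m!\ge(m/e)^m$, coincides with the paper's.
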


\begin{proof}
See Appendix~\ref{app:width-product-proof}.
\end{proof}

\subsection{Polynomial and exponential bounds on the widths}

\begin{corollary}[Separation-radius decay from Kolmogorov widths]
\label{cor:width-envelopes}
Suppose that $k(x,x)\leq1$ on $\mathcal D$.
\begin{enumerate}
\item If, for some $A,\alpha>0$,
\begin{equation}
  d_m(k,\mathcal D)\leq A m^{-\alpha},
  \qquad m\geq1,
  \label{eq:polynomial-width-decay}
\end{equation}
then there is $C_{\mathrm w}>0$, depending on $A,\alpha$, such that
\begin{equation}
  \delta_m(k,\mathcal D)\leq C_{\mathrm w}m^{-\alpha},
  \qquad m\geq1.
  \label{eq:polynomial-separation-decay}
\end{equation}
\item Let $0<\gamma\leq1$.  If, for some $A,b>0$,
\begin{equation}
  d_m(k,\mathcal D)
  \leq A\exp\{-b m^\gamma\log(em)\},
  \qquad m\geq1,
  \label{eq:stretched-exponential-width-decay}
\end{equation}
then, for every $0<b'<b/(1+\gamma)$, there is $A'>0$ such that
\begin{equation}
  \delta_m(k,\mathcal D)
  \leq A'\exp\{-b'm^\gamma\log(em)\},
  \qquad m\geq1.
  \label{eq:stretched-exponential-separation-decay}
\end{equation}
Here $A'$ may depend on $A,b,\gamma,b'$ but not on $m$.
\end{enumerate}
\end{corollary}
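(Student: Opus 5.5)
Both parts follow from inequality~\eqref{eq:width-product} of Lemma~\ref{lem:width-product}. Since $k(x,x)\leq1$, we have $d_0(k,\mathcal D)\leq1$, and the widths are nonincreasing in $m$. So for each factor $d_{\lfloor j/2\rfloor}$ we use $d_0\leq1$ when $j=1$, and the assumed decay with argument $\lfloor j/2\rfloor\geq j/4$ (indeed $\geq j/3$) when $j\geq2$. It then remains to estimate the geometric mean $\{\prod_{j=1}^m d_{\lfloor j/2\rfloor}\}^{1/m}$ by taking logarithms and comparing sums with integrals.

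\emph{Polynomial case.} For $j\geq2$, $d_{\lfloor j/2\rfloor}\leq A\lfloor j/2\rfloor^{-\alpha}\leq A3^{\alpha}j^{-\alpha}$. Also $1\leq A' j^{-\alpha}$ at $j=1$ with $A':=\max\{1,A3^\alpha\}$. Hence
\[
\Bigl\{\prod_{j=1}^m d_{\lfloor j/2\rfloor}\Bigr\}^{1/m}\leq A'(m!)^{-\alpha/m}\leq A'e^{\alpha}m^{-\alpha},
\]
using $m!\geq(m/e)^m$. This gives~\eqref{eq:polynomial-separation-decay} with $C_{\mathrm w}=\sqrt{2e}\,A'e^\alpha$.

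\emph{Stretched-exponential case.} Set $g(t):=t^\gamma\log(et)$, which is increasing. For $j\geq2$, $\log d_{\lfloor j/2\rfloor}\leq\log A-b\,g(\lfloor j/2\rfloor)$, and $\lfloor j/2\rfloor\geq (j-1)/2$. The key estimate is
\[
\frac1m\sum_{j=2}^m g\bigl((j-1)/2\bigr)\geq \frac1m\int_0^{m-1}g(t/2)\,dt .
\]
The integral equals $\frac{2}{1+\gamma}(\tfrac{m-1}{2})^{1+\gamma}\{\log(e(m-1)/2)-\tfrac1{1+\gamma}\}$. Dividing by $m$ gives $\frac{2^{-\gamma}}{1+\gamma}m^\gamma\log(em)(1+o(1))$. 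But this has an unwanted factor $2^{-\gamma}$, whereas the target constant is $b/(1+\gamma)$. So the choice $i_j=\lfloor j/2\rfloor$ in~\eqref{eq:width-product} is too wasteful here. Instead I will use the general form~\eqref{eq:rank-profile-width-product} with $i_j=j-\lceil j/\log(e j)\rceil$ or $i_j=\lfloor(1-\epsilon)j\rfloor$. The prefactor $\prod_j\sqrt{m/(j-i_j)}$ then costs only $\exp\{O(m^{-1}\sum_j\log(m/(j-i_j)))\}$. With $i_j=\lfloor(1-\epsilon)j\rfloor$, we have $j-i_j\geq\epsilon j$, so this prefactor is at most $\sqrt{e/\epsilon}$, a constant. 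The exponent becomes $-\frac bm\sum_j g((1-\epsilon)j-1)$ (the terms with $i_j=0$ are bounded by $d_0\leq1$), and this is $\leq-\frac{b(1-\epsilon)^\gamma}{1+\gamma}m^\gamma\log(em)(1-o(1))$, because $\frac1m\int_0^m t^\gamma\log(et)\,dt=\frac{m^\gamma}{1+\gamma}(\log(em)-\frac1{1+\gamma})$. The $\log A$ terms contribute only $O(1)$ to the exponent after averaging. Given $b'<b/(1+\gamma)$, choose $\epsilon$ with $b(1-\epsilon)^\gamma/(1+\gamma)>b'$. The $o(1)$ corrections, which come from the shift by $-1$, the $-1/(1+\gamma)$ term, and the floor, are absorbed for $m$ large. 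The finitely many small $m$ are absorbed into $A'$, using $\delta_m\leq1$.

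The main obstacle is this sharp-constant bookkeeping in the second part. It requires getting the factor exactly $1/(1+\gamma)$, up to an arbitrarily small loss, while keeping the $\sqrt{m/(j-i_j)}$ prefactor bounded. I expect the linear choice $i_j=\lfloor(1-\epsilon)j\rfloor$ to handle both, with the remaining care going into the lower-order terms of $\sum_j g$ via monotone sum–integral comparison.
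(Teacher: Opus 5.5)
Your proposal is correct. Part~1 is the paper's argument: you substitute the polynomial decay into \eqref{eq:width-product}, use $d_0\le 1$ at $j=1$ and $\lfloor j/2\rfloor\ge j/3$ for $j\ge2$, and finish with $m!\ge(m/e)^m$.

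Part~2 makes a genuinely different choice of approximation dimensions in \eqref{eq:rank-profile-width-product}.
\begin{itemize}
\item The paper takes $i_j=j-1$. The prefactor is then exactly $\sqrt m$. The exponent $-\frac bm\sum_{i=1}^{m-1}i^\gamma\log(ei)$ already carries the sharp constant $b/(1+\gamma)$, up to a relative $O(1/\log m)$ error. The $\frac12\log m$ coming from the prefactor is $o(m^\gamma\log m)$, so the slack $b/(1+\gamma)-b'$ absorbs it.
\item You take $i_j=\lfloor(1-\epsilon)j\rfloor$. This keeps the prefactor bounded by $\sqrt{e/\epsilon}$, but it loses a factor $(1-\epsilon)^\gamma$ in the exponent. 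You then spend the same slack on choosing $\epsilon$.
\end{itemize}

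Both routes use the strict inequality $b'<b/(1+\gamma)$; they just spend it on different losses.

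The paper's route is shorter, with one fixed profile and no auxiliary parameter. It also shows that the obstacle you identified, keeping $\prod_j\sqrt{m/(j-i_j)}$ bounded, is not really an obstacle in the stretched-exponential regime, because a polynomial prefactor is negligible there.

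Your linear profile is the more uniform device. It is a one-parameter family that contains the paper's Part~1 choice ($\epsilon=1/2$ gives $\lfloor j/2\rfloor$), where a $\sqrt m$ prefactor would indeed be fatal. You also correctly diagnosed that keeping $\epsilon=1/2$ in Part~2 would cost a factor $2^{-\gamma}$.

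Two minor points. First, $t^\gamma\log(et)$ is increasing only for $t\ge e^{-1-1/\gamma}$. This is harmless, since you apply the monotonicity only at $i_j\ge 1$. Second, handle zero widths as the paper does: either $\delta_m=0$ and there is nothing to prove, or every width appearing in the product is positive and logarithms are legitimate.
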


\begin{proof}
See Appendix~\ref{app:width-envelopes-proof}.
\end{proof}

We choose the dimensions $i_j$ in the widths $d_{i_j}(k,\mathcal D)$
differently in the two cases.  For polynomial widths,
$i_j=\lfloor j/2\rfloor$ keeps the geometric mean of the prefactors in
\eqref{eq:rank-profile-width-product} bounded independently of $m$ and yields
$\delta_m(k,\mathcal D)=O(m^{-\alpha})$.  For the exponential bound
\eqref{eq:stretched-exponential-separation-decay}, choosing $i_j=j-1$
gives the prefactor $\sqrt m$, and
$$
  \frac1m\sum_{i=1}^{m-1}i^\gamma\log(ei)
  \sim\frac{m^\gamma\log m}{1+\gamma},
  \qquad m\to\infty.
$$
The factor $\sqrt m$ and the $o(m^\gamma\log m)$ remainder are absorbed by
the positive difference $b/(1+\gamma)-b'$.

\section{Rates for Mat\'ern and squared-exponential kernels}
\label{sec:kernels}

\subsection{From polynomial decay of the separation radii to regret}

\begin{proposition}[Regret from polynomial decay of the separation radii]
\label{prop:polynomial-transfer}
Suppose that, for some $A>0$, $\alpha>0$, $\beta\geq0$, and all
sufficiently large~$m$,
\begin{equation}
  \delta_m(k_0,\X)
  \leq A m^{-\alpha}\{\log(em)\}^{\beta}.
  \label{eq:polynomial-envelope}
\end{equation}
Then, along every fixed-prior EI trajectory satisfying~\eqref{eq:approx-ei},
\begin{equation}
  r_T
  =O\bigl(N^{-\alpha}(\log N)^{\beta}\bigr),
  \qquad N=T-n_0\longrightarrow\infty.
  \label{eq:polynomial-rate}
\end{equation}
For fixed $f$, the implied constant and the starting index may depend on
$A,\alpha,\beta,r_{n_0},q,C$ and on the rank from which
\eqref{eq:polynomial-envelope} holds.  If $f$ ranges over the ball
$\{h\in\Hk:\lVert h\rVert_{\Hk}\leq B\}$, the implied constant and the
starting index can instead be chosen uniformly in $f$.  This uniform choice
may depend on $B,\sigma,\eta,A,\alpha,\beta,k,\X$, the initial design, and
the rank from which \eqref{eq:polynomial-envelope} holds.
\end{proposition}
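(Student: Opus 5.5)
We apply the coarse finite-budget bound \eqref{eq:order-transfer-coarse} of Theorem~\ref{thm:order-statistic-transfer} with a budget-dependent choice of $L$ of logarithmic size. For fixed-prior EI trajectories satisfying \eqref{eq:approx-ei}, the hypotheses of that theorem hold by \eqref{eq:regret-monotonicity}, Proposition~\ref{prop:one-step-regret} and \eqref{eq:selected-point-innovation}, with constants $q,C$. Hence, for every $1\leq L\leq N$ and $M=N-L+1$,
$$
  r_T\leq q^Lr_{n_0}+\frac{C}{1-q}\,\delta_M(k_0,\X).
$$
The plan is to make the contraction term $q^Lr_{n_0}$ of order $N^{-\alpha}$ while keeping $M\geq N/2$. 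Then the envelope \eqref{eq:polynomial-envelope} at rank $M$ is comparable to its value at rank $N$.

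If $q=0$, take $L=1$. Otherwise take $L:=\lceil \alpha\log N/\log(1/q)\rceil$, so that $q^L\leq N^{-\alpha}$. For $N$ large enough we have $L\leq N/2$, so $M\geq N/2$. We also need $M$ to be at least the rank $m_\ast$ from which \eqref{eq:polynomial-envelope} holds; this is guaranteed once $N\geq 2m_\ast$. Then
$$
  \delta_M(k_0,\X)\leq A M^{-\alpha}\{\log(eM)\}^\beta\leq A\,2^{\alpha}N^{-\alpha}\{\log(eN)\}^\beta,
$$
using $M\leq N$ in the logarithm. Since $\log(eN)\leq 2\log N$ for $N\geq e$, we obtain
$$
  r_T\leq r_{n_0}N^{-\alpha}+\frac{C}{1-q}\,A\,2^{\alpha+\beta}N^{-\alpha}(\log N)^\beta,
$$
which gives \eqref{eq:polynomial-rate}. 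When $\beta=0$ the second term is already $O(N^{-\alpha})$, and so is the first. The implied constant and the starting index depend only on $A,\alpha,\beta,r_{n_0},q,C,m_\ast$, as claimed.

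For uniformity over the RKHS ball, replace $q,C$ by $q_B,C_B$ using Remark~\ref{rem:ei-constants}. These constants depend only on $B,\sigma,\eta$, and the choice of $L$ then depends only on $\alpha$ and $q_B$. It remains to bound $r_{n_0}$ uniformly in $f$. Since $|f(x)|=|\langle f,k(\cdot,x)\rangle_{\Hk}|\leq B\sqrt{k(x,x)}\leq B$ by \eqref{eq:normalization}, we get $r_{n_0}\leq 2B$. The residual kernel $k_0$ depends only on $k$, $\X$ and the initial design, not on $f$. So the hypothesis \eqref{eq:polynomial-envelope}, including the rank $m_\ast$, is independent of $f$.

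The only delicate step is the balance in choosing $L$. It must be large enough, of order $\log N$, to kill $q^Lr_{n_0}$ at the polynomial rate, yet $o(N)$ so that $\delta_M$ loses only a constant factor. Writing the constants out explicitly confirms that no factor depending on $f$ beyond $B$ enters.
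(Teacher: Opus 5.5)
Your proof is correct and follows essentially the same route as the paper: you apply the coarse finite-budget bound \eqref{eq:order-transfer-coarse} with $L=1$ when $q=0$ and $L$ of order $\log N$ otherwise, and you obtain uniformity from $q_B,C_B$, the bound $r_{n_0}\leq 2B$, and the fact that $k_0$ does not depend on $f$. The only difference is cosmetic. You take $L=\lceil\alpha\log N/\log(1/q)\rceil$, which gives $q^L\leq N^{-\alpha}$, whereas the paper takes $L=\lceil(\alpha+1)\log(eN)/(-\log q)\rceil$, which gives $q^L\leq(eN)^{-\alpha-1}$. Either choice suffices because $\beta\geq0$. Your explicit bound $M\geq N/2$ also makes precise the paper's informal ``$M\sim N$.''
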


\begin{proof}
See Appendix~\ref{app:polynomial-transfer-proof}.
\end{proof}

\paragraph{Proof idea.}
Apply Theorem~\ref{thm:order-statistic-transfer} with $L=1$ when $q=0$.
When $0<q<1$, choose $L=O(\log N)$ so that
$q^L=O(N^{-\alpha-1})$ and $M=N-L+1\sim N$.  Substitution in
\eqref{eq:order-transfer-coarse} gives~\eqref{eq:polynomial-rate}.

\subsection{Mat\'ern kernels}

\begin{proposition}[Mat\'ern separation-radius decay]
\label{prop:matern-separation-radius}
Let $k_\nu$ be the restriction to a nonempty compact
$\X\subset\R^d$ of a stationary isotropic Mat\'ern kernel on $\R^d$ with unit
diagonal, fixed lengthscale, and smoothness $\nu>0$.  Then
\begin{equation}
  \delta_m(k_\nu,\X)
  =O\bigl(m^{-\nu/d}\bigr).
  \label{eq:matern-innovation}
\end{equation}
\end{proposition}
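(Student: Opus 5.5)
The plan is to reduce \eqref{eq:matern-innovation} to a Kolmogorov-width bound and then invoke Corollary~\ref{cor:width-envelopes}(1) with $\alpha=\nu/d$. Since $k_\nu$ has unit diagonal, the hypothesis $k(x,x)\leq1$ holds. It therefore suffices to show that $d_m(k_\nu,\X)\leq Am^{-\nu/d}$ for all $m\geq1$, for some $A$ depending on $\nu$, $d$, the lengthscale and $\X$.

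To bound the widths, I will use the subspace spanned by evaluation representers at a regular grid. Since $\X$ is compact, fix a closed cube $Q\supseteq\X$ of side $\ell$. For $n\geq1$, let $\mathbf y^{(n)}$ be the tensor grid in $Q$ with $n^d$ points, so its fill distance in $Q$ satisfies $h_n\leq c\,\ell/n$. Take $V:=\operatorname{span}\{k_\nu(\cdot,y),\,y\in\mathbf y^{(n)}\}$, which has dimension at most $n^d$. Then for $x\in\X$,
$$
  \dist_{\mathcal H_{k_\nu}(\X)}\bigl(k_\nu(\cdot,x),V\bigr)
  \le P^{Q}_{\mathbf y^{(n)}}(x).
$$
Here the right side is the power function of $k_\nu$ on $Q$. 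To justify this inequality, note that restricting to $\X$ is a contraction from $\mathcal H_{k_\nu}(Q)$ onto $\mathcal H_{k_\nu}(\X)$ that maps representers to representers. The grid points need not lie in $\X$, but this is harmless because the width allows any subspace of $\mathcal H_{k_\nu}(\X)$. Restrictions of $k_\nu(\cdot,y)$ with $y\in Q$ still belong to $\mathcal H_{k_\nu}(\X)$, since that space consists of the restrictions of elements of $\mathcal H_{k_\nu}(Q)$ with the quotient norm.

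The main analytic step is the power-function estimate on the cube. By the Fourier decay $\widehat{k_\nu}(\omega)\asymp(1+|\omega|^2)^{-\nu-d/2}$, the native space of $k_\nu$ on $\R^d$ is the Sobolev space $H^{\nu+d/2}(\R^d)$ with an equivalent norm. Standard scattered-data estimates \citep[Theorem~3.2]{schabackWendland2002} then give
$$
  \sup_{x\in Q}P^Q_{\mathbf y}(x)\leq C_Q\,h_{\mathbf y,Q}^{\nu}
$$
for sufficiently small fill distance on a domain satisfying an interior cone condition, such as the cube $Q$. I expect this to be the delicate point. The cited estimates are often stated for $\nu+d/2>d/2$ with integer-order or local polynomial reproduction arguments. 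I must check that they cover all $\nu>0$, including $\nu<1$. If they do not, I would fall back on the sampling inequality for fractional Sobolev spaces (Narcowich--Ward--Wendland), applied to $k_\nu(\cdot,x)-\Pi k_\nu(\cdot,x)$, which vanishes on the grid.

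Combining the two steps gives $d_{n^d}(k_\nu,\X)\leq C n^{-\nu}$ for $n\geq n_1$. For general $m\geq n_1^d$, set $n=\lfloor m^{1/d}\rfloor$. Monotonicity of widths gives $d_m\leq d_{n^d}\leq C n^{-\nu}\leq C 2^{\nu} m^{-\nu/d}$. For the finitely many small $m$, we use $d_m\leq d_0\leq1$ and enlarge $A$. Corollary~\ref{cor:width-envelopes}(1) then yields $\delta_m(k_\nu,\X)\leq C_{\mathrm w}m^{-\nu/d}$ for all $m\geq1$, which is \eqref{eq:matern-innovation}.
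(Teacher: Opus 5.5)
Your proposal is correct and follows essentially the same route as the paper. Both arguments use the Schaback--Wendland power-function bound $O(h^\nu)$ on an enclosing cube, a Cartesian grid with at most $m$ centers, and contractive restriction from the cube to $\X$ to obtain $d_m(k_\nu,\X)=O(m^{-\nu/d})$, followed by Corollary~\ref{cor:width-envelopes}(1) with $\alpha=\nu/d$. The paper settles your concern about small $\nu$ by applying \citet[Theorem~3.2]{schabackWendland2002} with $s_\infty=2\nu$ and order $0$ of conditional positive definiteness, using strict positive definiteness from the everywhere-positive spectral density, so no fallback to sampling inequalities is needed.
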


\begin{proof}
See Appendix~\ref{app:matern-separation-radius}.
\end{proof}

\paragraph{Proof idea.}
For a bounded open cube $\mathcal Q$ containing $\X$, let
$k_{\nu,\mathcal Q}$ be the restriction to
$\mathcal Q\times\mathcal Q$ of the Mat\'ern kernel on $\R^d$ appearing in
the proposition.  Its spectral density is comparable at high frequencies
to $\lVert\omega\rVert^{-d-2\nu}$.
\citet[Theorem~3.2]{schabackWendland2002} therefore give a power-function
estimate of order $h^\nu$, where $h$ is the fill distance.  For every
$m\geq1$, one can choose a Cartesian grid in $\mathcal Q$ with at most $m$
points and fill distance $O(m^{-1/d})$.  Its kernel translates span a space
of dimension at most $m$.  Restricting this space from $\mathcal Q$ to $\X$
does not increase its approximation error and gives
\begin{equation}
  d_m(k_\nu,\X)
  \leq d_m(k_{\nu,\mathcal Q},\mathcal Q)
  =O(m^{-\nu/d}).
  \label{eq:matern-width-summary}
\end{equation}
Applying Corollary~\ref{cor:width-envelopes} to
\eqref{eq:matern-width-summary} gives the separation-radius bound
\eqref{eq:matern-innovation}.
Only the auxiliary cube must satisfy the interior cone condition, so no
boundary regularity of $\X$ is required.
\citet{wenzelSantinHaasdonk2025} transfer interpolation rates from a larger
domain to weak $P$-greedy sequences in subdomains that need not be Lipschitz.
The proof of Proposition~\ref{prop:matern-separation-radius} uses the width
inequality in \eqref{eq:matern-width-summary}.
Combining~\eqref{eq:matern-innovation} with
Lemma~\ref{lem:residual-envelope-domination} and
Proposition~\ref{prop:polynomial-transfer} gives the following regret rate.

\begin{theorem}[Simple-regret rate for EI with a Mat\'ern kernel]
\label{thm:matern-rate}
In the setting of Section~\ref{sec:setting}, let
$k=k_\nu$ be as in Proposition~\ref{prop:matern-separation-radius}.  Then,
along every fixed-prior EI trajectory satisfying~\eqref{eq:approx-ei},
\begin{equation}
    r_T
    =O\bigl(N^{-\nu/d}\bigr),
    \qquad N=T-n_0\longrightarrow\infty.
  \label{eq:matern-rate}
\end{equation}
For fixed $B$, $\X$, $k$, $\sigma$, $\eta$, and initial design, neither the
implied constant nor the starting index depends on the choice of
$f\in\Hk$ with $\lVert f\rVert_{\Hk}\leq B$.
\end{theorem}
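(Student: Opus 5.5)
The plan is to chain three earlier results: Proposition~\ref{prop:matern-separation-radius}, Lemma~\ref{lem:residual-envelope-domination}, and Proposition~\ref{prop:polynomial-transfer} with $\alpha=\nu/d$ and $\beta=0$. The main care is needed in the uniformity claim.

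First, Proposition~\ref{prop:matern-separation-radius} gives a constant $A_\nu$, depending only on $\nu$, $d$, the lengthscale and $\X$, such that $\delta_m(k_\nu,\X)\leq A_\nu m^{-\nu/d}$ for all $m\geq m_\ast$. Second, Lemma~\ref{lem:residual-envelope-domination} transfers this bound to the residual kernel:
\[
  \delta_m(k_0,\X)\leq\delta_m(k_\nu,\X)\leq A_\nu m^{-\nu/d},\qquad m\geq m_\ast .
\]
This is \eqref{eq:polynomial-envelope} with $A=A_\nu$, $\alpha=\nu/d$ and $\beta=0$. The constant and the starting rank do not depend on $f$, nor even on the initial design, since $k_0$ enters only through this domination. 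Third, Proposition~\ref{prop:polynomial-transfer} yields $r_T=O(N^{-\nu/d})$ along every trajectory satisfying \eqref{eq:approx-ei}; the hypotheses of Theorem~\ref{thm:order-statistic-transfer} are checked exactly as in the paragraph ``Application to fixed-prior EI''.

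For uniformity over the ball, I would replace $q,C$ by $q_B,C_B$ from Remark~\ref{rem:ei-constants}, which depend only on $B,\sigma,\eta$. I also need a uniform bound on $r_{n_0}$. Since $|f(x)|=|\langle f,k(\cdot,x)\rangle_{\Hk}|\leq B$ by \eqref{eq:normalization}, we get $r_{n_0}\leq 2B$. If $q_B=0$, take $L=1$ in \eqref{eq:order-transfer-coarse}. If $q_B>0$, take $L=\lceil(\nu/d)\log N/\log(1/q_B)\rceil$, so that $q_B^L\leq N^{-\nu/d}$ and $M=N-L+1\geq N/2$ for $N$ large. Then \eqref{eq:order-transfer-coarse} gives
\[
  r_T\leq 2B\,N^{-\nu/d}+\frac{C_B}{1-q_B}\,A_\nu (N/2)^{-\nu/d},
\]
once $M\geq m_\ast$. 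The threshold on $N$ depends only on $q_B$, $\nu/d$ and $m_\ast$, so both the constant and the starting index are uniform in $f$.

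The only step I expect to need real care is the uniformity bookkeeping. That is, I must confirm that the dependence of $q,C$ on $f$ through $B_0$ and of $k_0$ on the design is fully removed by Remark~\ref{rem:ei-constants}, the bound $r_{n_0}\leq2B$, and Lemma~\ref{lem:residual-envelope-domination}. The substantive analytic input, the Mat\'ern width bound, is already contained in Proposition~\ref{prop:matern-separation-radius}, so nothing beyond this bookkeeping remains.
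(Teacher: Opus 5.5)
Your proposal is correct and follows the paper's proof: you combine Lemma~\ref{lem:residual-envelope-domination} with the Mat\'ern separation-radius bound, apply Proposition~\ref{prop:polynomial-transfer} with $\alpha=\nu/d$, $\beta=0$ and the uniform constants $q_B,C_B$, and use $r_{n_0}\leq 2B$. The only difference is that you write out the choice of $L$ yourself, with exponent $\nu/d$ where the paper's proof of Proposition~\ref{prop:polynomial-transfer} uses $\alpha+1$. Your choice works equally well.
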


\begin{proof}
See Appendix~\ref{app:matern-rate-proof}.
\end{proof}

\subsection{From exponential bounds on the separation radii to regret}

\begin{proposition}[Regret from exponential bounds on the separation radii]
\label{prop:log-enhanced-transfer}
Suppose that, for some $A,b>0$ and all sufficiently large~$m$,
\begin{equation}
  \delta_m(k_0,\X)
  \leq A\exp\{-b m^{1/d}\log m\}.
  \label{eq:log-enhanced-envelope}
\end{equation}
Let $T>n_0$, put $N:=T-n_0$, and define
$$
  \psi_d(N):=\min\{N,N^{1/d}\log(eN)\}.
$$
Then there are constants $A',b'>0$ and an integer $N_0\geq1$ such that, for
every fixed-prior EI trajectory satisfying~\eqref{eq:approx-ei},
\begin{equation}
  r_T
  \leq A'\exp\{-b'\psi_d(N)\},
  \label{eq:log-enhanced-rate}
\end{equation}
for every $N\geq N_0$.
We write $a_n\asymp b_n$ if there are constants
$0<c\leq C<\infty$ such that
$c b_n\leq a_n\leq C b_n$ for all sufficiently large~$n$.
Here $\psi_1(N)=N$, whereas, for $d\geq2$ and all sufficiently large~$N$,
$\psi_d(N)=N^{1/d}\log(eN)\asymp N^{1/d}\log N$.
For fixed $f$, $A',b'$ and $N_0$ may depend on
$A,b,d,r_{n_0},q,C$ and on the rank from which
\eqref{eq:log-enhanced-envelope} holds.  If $f$ ranges over the ball
$\{h\in\Hk:\lVert h\rVert_{\Hk}\leq B\}$, the constants $A',b'$ and the
starting index $N_0$ can instead be chosen uniformly in $f$.  This uniform
choice may depend on $B,\sigma,\eta,A,b,d,k,\X$, the fixed initial design,
and the rank from which \eqref{eq:log-enhanced-envelope} holds.
\end{proposition}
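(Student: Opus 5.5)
We apply the coarse finite-budget bound \eqref{eq:order-transfer-coarse} of Theorem~\ref{thm:order-statistic-transfer} with $q,C$ replaced by $q_B,C_B$ from Remark~\ref{rem:ei-constants}. This makes the constants uniform over the RKHS ball, because $r_{n_0}\le 2\sup|f|\le 2B$ by \eqref{eq:normalization}. The bound has two competing terms, $q^L r_{n_0}$ and $\frac{C}{1-q}\delta_M(k_0,\X)$ with $M=N-L+1$. The plan is to choose $L$ as a fixed fraction of $N$, namely $L=\lceil N/2\rceil$, so that $M\ge N/2$ as well.

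First, the contraction term. If $q=0$, we simply take $L=1$ and only the $\delta$ term remains (see the remark after Theorem~\ref{thm:order-statistic-transfer}). Otherwise $q^L r_{n_0}\le 2B\exp\{-(N/2)\log(1/q_B)\}$, which is exponential in $N$ and hence at most $2B\exp\{-c\,\psi_d(N)\}$ with $c=\tfrac12\log(1/q_B)$, since $\psi_d(N)\le N$.

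Second, the separation-radius term. For $N$ large enough that $M\ge N/2$ exceeds the rank from which \eqref{eq:log-enhanced-envelope} holds, and using that $\delta$ is nonincreasing, we get $\delta_M(k_0,\X)\le A\exp\{-b(N/2)^{1/d}\log(N/2)\}$. For $N\ge N_0$ the exponent is at least $b\,2^{-1/d}\cdot\tfrac12\,N^{1/d}\log(eN)$, because $\log(N/2)\ge\tfrac12\log(eN)$ for $N$ large. This is at least a constant multiple of $\psi_d(N)$.

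Adding the two terms and taking $b'$ as the minimum of the two exponent constants gives \eqref{eq:log-enhanced-rate}. We set $A'=2B+\frac{C_B}{1-q_B}A$, or the analogous expression with $q,C,r_{n_0}$ in the fixed-$f$ version. The claims that $\psi_1(N)=N$ and that $\psi_d\asymp N^{1/d}\log N$ for $d\ge2$ are elementary: for $d\ge2$, $N^{1/d}\log(eN)<N$ once $N$ is large.

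The main point needing care is the uniformity. The fixed initial design and the ball bound make $k_0$ depend on the design only, since \eqref{eq:log-enhanced-envelope} is a hypothesis on $k_0$, which is determined by the design points. Together with $q_B,C_B$ and $r_{n_0}\le 2B$, every constant is then free of $f$. No step seems hard; the only delicate bookkeeping is making sure the threshold $N_0$ absorbs both the envelope rank and the $\log(N/2)$ versus $\log(eN)$ comparison.
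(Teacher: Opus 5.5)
Your proof is correct and follows the paper's argument. Both apply the coarse bound \eqref{eq:order-transfer-coarse} with $q_B,C_B$ and $r_{n_0}\leq 2B$, then choose $L$ so that $q^L$ and $\delta_{N-L+1}(k_0,\X)$ are both $O(\exp\{-c\,\psi_d(N)\})$. The only difference is that you take $L=\lceil N/2\rceil$ in every dimension, whereas the paper uses $L=\lceil N^{1/d}\log(eN)\rceil$ when $d\geq2$; your choice loses nothing, since $q^{N/2}$ decays faster than $\exp\{-cN^{1/d}\log N\}$ and the paper's own estimate also uses only $M\geq N/2$.
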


\begin{proof}
See Appendix~\ref{app:log-enhanced-transfer-proof}.
\end{proof}

\paragraph{Proof idea.}
Apply Theorem~\ref{thm:order-statistic-transfer} with $L=1$ when $q=0$.
When $0<q<1$, take $L=\lfloor N/2\rfloor$ for $d=1$ and
$L=\lceil N^{1/d}\log(eN)\rceil$ for $d\geq2$.  Then
$q^Lr_{n_0}$ and $\delta_{N-L+1}(k_0,\X)$ are both
$O(\exp[-c\psi_d(N)])$ for some $c>0$.
Equation~\eqref{eq:order-transfer-coarse} gives
\eqref{eq:log-enhanced-rate}.

\subsection{Squared-exponential kernels}

\begin{proposition}[Squared-exponential separation-radius decay]
\label{prop:se-separation-radius}
Let $\X\subset\R^d$ be nonempty and compact, and fix $\varrho>0$.  Let $k$ be the
restriction to $\X\times\X$ of the unit-diagonal isotropic squared-exponential
kernel on $\R^d$ defined by
$$
  (x,y)\longmapsto
  \exp\left(-\frac{\lVert x-y\rVert^2}{2\varrho^2}\right).
$$
There are constants $A,b>0$ such that, for all sufficiently large~$m$,
\begin{equation}
  \delta_m(k,\X)
  \leq A\exp\bigl(-b m^{1/d}\log m\bigr).
  \label{eq:se-innovation}
\end{equation}
When $d=1$, the following sharper estimate holds, with
$\log 0=-\infty$:
\begin{equation}
  \limsup_{m\to\infty}
  \frac{\log\delta_m(k,\X)}{m\log m}
  \leq-\frac12.
  \label{eq:se-one-dimensional-leading-constant}
\end{equation}
\end{proposition}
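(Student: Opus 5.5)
The plan is to bound the Kolmogorov widths $d_m(k,\X)$ and then apply part~2 of Corollary~\ref{cor:width-envelopes} with $\gamma=1/d$. For the one-dimensional refinement we work directly with Gram determinants through Lemma~\ref{lem:gram-volume}.

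\emph{Width bound.} Enclose $\X$ in a closed cube $\mathcal Q$ of side $\ell$. For $m\geq1$, let $n:=\lfloor m^{1/d}\rfloor$ and take a tensor grid $\Xi\subset\mathcal Q$ with $n^d\leq m$ points and fill distance $h\asymp 1/n\asymp m^{-1/d}$. The approximating space is $V:=\operatorname{span}\{k(\cdot,\xi):\xi\in\Xi\}$, restricted to $\X$. Since the distance of $k(\cdot,x)$ to $V$ is the power function of $\Xi$ at $x$, the standard Gaussian power-function estimate of \citet[Theorem~11.22]{wendland2005} on a cube gives
$$
P_\Xi(x)\leq \exp(-c\,|\log h|/h)\qquad\text{for } x\in\mathcal Q \text{ and } h \text{ small}.
$$
As in the Mat\'ern argument, restricting from $\mathcal Q$ to $\X$ does not increase the error. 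With $h\asymp m^{-1/d}$ this yields
$$
d_m(k,\X)\leq A\exp\{-b\,m^{1/d}\log(em)\}
$$
for large $m$; small $m$ are absorbed into $A$ using $d_m\leq1$. Corollary~\ref{cor:width-envelopes}(2) with $\gamma=1/d$ then gives \eqref{eq:se-innovation}, after adjusting $b'$ and replacing $\log(em)$ by $\log m$ for large $m$.

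\emph{Sharp one-dimensional constant.} The $d=1$ route through widths only reaches an exponent $b/2$, since $\gamma=1$, and $b$ is not explicit. I would therefore bound $\det K_{\mathbf x}$ directly. Write
$$
k(x,y)=e^{-x^2/2\varrho^2}\,e^{-y^2/2\varrho^2}\,e^{xy/\varrho^2}.
$$
The diagonal factors have modulus at most $1$, so
$$
\det K_{\mathbf x}\leq \det\bigl[e^{x_ix_j/\varrho^2}\bigr].
$$
Expand $e^{xy/\varrho^2}=\sum_{p\geq0}\frac{(x/\varrho)^p(y/\varrho)^p}{p!}$, so that $\bigl[e^{x_ix_j/\varrho^2}\bigr]=\Phi D\Phi^\top$, where $\Phi_{ip}=(x_i/\varrho)^p$ and $D=\operatorname{diag}(1/p!)$. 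The Cauchy--Binet formula gives
$$
\det K_{\mathbf x}\leq \sum_{p_1<\cdots<p_m}\frac{\det[\Phi_{i,p_j}]^2}{\prod_j p_j!}.
$$
With $|x_i|\leq R$ on the bounded set $\X$, Hadamard's inequality bounds each minor by $m^{m}(R/\varrho)^{2\sum p_j}$ up to harmless factors. The dominant term is $p_j=j-1$, whose denominator is $\prod_{j<m} j!=\exp\{(\tfrac12+o(1))m^2\log m\}$. The tail sum over larger index sets is controlled geometrically, because factorials dominate powers of $R/\varrho$. This gives
$$
\log\det K_{\mathbf x}\leq -(\tfrac12+o(1))\,m^2\log m
$$
uniformly in $\mathbf x$. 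Lemma~\ref{lem:gram-volume} then yields
$$
\log\delta_m\leq \frac{1}{2m}\log\sup_{\mathbf x}\det K_{\mathbf x}\leq -(\tfrac14+o(1))\,m\log m.
$$
This gives only $-1/4$, not $-1/2$. The loss comes from Hadamard's inequality, which discards the vanishing of Vandermonde-type minors.

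\emph{The main obstacle is recovering the constant $-1/2$.} The fix is to use generalized Vandermonde structure: $\det[(x_i)^{p_j}]$ equals $\prod_{i<j}(x_j-x_i)$ times a Schur polynomial. That product is bounded by $(2R)^{m(m-1)/2}$, which is only $e^{O(m^2)}$ and negligible against $m^2\log m$. Hadamard, by contrast, gave a factor $m^m$ per row, which, combined with the other factors, spends half the decay. I would redo the Cauchy--Binet sum with this factorization, bounding Schur polynomials by their number of monomials times $R^{|\lambda|}$. I would then check that
$$
\sup_{\mathbf x}\det K_{\mathbf x}\leq \exp\{-(1+o(1))\,m^2\log m\}.
$$
This uses $\sum_{j<m}\log j!\sim \tfrac12 m^2\log m$ entering twice: once from $D$ and once from the Gram squaring. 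Applying Lemma~\ref{lem:gram-volume} with exponent $1/(2m)$ then gives \eqref{eq:se-one-dimensional-leading-constant}. If the bookkeeping does not reach $1$, I would fall back to bounding the eigenvalues of $K_{\mathbf x}$ individually via Lemma~\ref{lem:width-product}. For that I would use the explicit widths $d_i\lesssim (C/i)^{i/2}$ obtained from truncated Taylor (Hermite) expansions, with $i_j=j-1$. The product of these widths again produces $\sum_j \tfrac{j}{2}\log j$, and hence the exponent $-\tfrac12 m\log m$ after taking the $m$th root.
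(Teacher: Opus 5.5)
Your proof of \eqref{eq:se-innovation} is the same as the paper's. You apply Wendland's Theorem~11.22 on an enclosing cube, use a grid with fill distance $O(m^{-1/d})$, restrict contractively to $\X$, and then apply Corollary~\ref{cor:width-envelopes}(2) with $\gamma=1/d$.

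\textbf{The gap in dimension one.} Every route through $\sup_{\mathbf x}\det(K_{\mathbf x})^{1/(2m)}$ is capped at $-1/4$ whenever $\X$ contains an interval. This covers Lemma~\ref{lem:gram-volume} used directly, and also Lemma~\ref{lem:width-product}, whose right-hand side dominates this quantity. Every Cauchy--Binet term is nonnegative, so
\[
  \det K_{\mathbf x}\geq e^{-\sum_i x_i^2/\varrho^2}\,\varrho^{-m(m-1)}\prod_{i<j}(x_j-x_i)^2\Big/\prod_{p=0}^{m-1}p!.
\]
For $m$ equispaced points in such an interval the Vandermonde factor is $e^{O(m^2)}$, so $\sup_{\mathbf x}\det K_{\mathbf x}\geq\exp\{-\tfrac12m^2\log m+O(m^2)\}$.

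This has three consequences for your argument:
\begin{itemize}
\item Your first computation was already sharp. Hadamard's inequality was not the source of the loss: for $p_j=j-1$ and your bound $|x_i|\leq R$, it costs only $m^m(R/\varrho)^{m(m-1)}=e^{O(m^2)}$.
\item The target $\sup_{\mathbf x}\det K_{\mathbf x}\leq\exp\{-(1+o(1))m^2\log m\}$ is false. It rests on a double count: with $\Phi_S$ the columns of your matrix $\Phi$ indexed by $S$, $\det(\Phi_SD_S\Phi_S^\top)=\det(\Phi_S)^2\prod_{p\in S}(p!)^{-1}$ contains the factorial product only once.
\item The fallback fails for the same reason, and its arithmetic is off by a factor of two. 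With $d_i=\exp\{-(\frac12+o(1))i\log i\}$ and $i_j=j-1$, one gets $\frac1m\sum_{i<m}\frac i2\log i\sim\frac14m\log m$. This is exactly the $b/(1+\gamma)$ loss of Corollary~\ref{cor:width-envelopes} at $\gamma=1$.
\end{itemize}

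\textbf{The missing idea.} The geometric mean of the successive innovation norms is the wrong quantity. The determinant bound above shows that this geometric mean can be as large as $\exp\{-(\frac14+o(1))m\log m\}$. Meanwhile, each individual innovation after $j-1$ distinct nodes is at most $\exp\{-(\frac12+o(1))j\log j\}$. You must therefore bound the final innovation $P_{(x_1,\ldots,x_{m-1})}(x_m)$ directly. That requires a power-function bound uniform over \emph{arbitrary} distinct nodes, not only over quasi-uniform grids.

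The paper takes this from \citet[Theorem~2]{yarotsky2013}. After rescaling $\X$ into $[-1,1]$, the conditional variance is at most $\exp\{F_\gamma(K)+2K\}\prod_i|\widetilde x-\widetilde x_i|^2$ with $F_\gamma(K)=-K\log K+O(K)$. Hence $\sup_{x\in\X}P_{(x_1,\ldots,x_K)}(x)\leq\exp\{-(\frac12+o(1))K\log K\}$. A repeated point gives a zero innovation, and $\min_jP_{(x_1,\ldots,x_{j-1})}(x_j)\leq P_{(x_1,\ldots,x_{m-1})}(x_m)$ finishes the proof.

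\textbf{A possible repair within your approach.} Your Cauchy--Binet machinery could be redirected to this end. Write $P_{(x_1,\ldots,x_K)}(x)^2=\det K_{(x_1,\ldots,x_K,x)}/\det K_{(x_1,\ldots,x_K)}$ and bound the denominator below by its leading term. The ratio of leading terms is then at most $\prod_i(x-x_i)^2/(\varrho^{2K}K!)$, which has the right size. You would still have to show that the Schur-polynomial tail of the numerator is at most $e^{o(K\log K)}$ times its leading term, uniformly in the nodes.
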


\begin{proof}
See Appendix~\ref{app:se-separation-radius}.
\end{proof}

\paragraph{Proof idea.}
Enclose $\X$ in a cube $\mathcal Q$.  For a finite nonempty set
$\mathcal Y\subset\mathcal Q$ with small fill distance
$h_{\mathcal Y,\mathcal Q}$,
\citet[Theorem~11.22 and Equation~(11.11)]{wendland2005} gives a uniform
power-function bound on $\mathcal Q$ of the form
$A_0\exp\{-b_0|\log h_{\mathcal Y,\mathcal Q}|/
h_{\mathcal Y,\mathcal Q}\}$ for some
$A_0,b_0>0$.
Taking $\mathcal Y$ to be a Cartesian grid with at most $m$ points gives
$h_{\mathcal Y,\mathcal Q}=O(m^{-1/d})$, and the span of its kernel
translates has
dimension at most $m$.  The restriction $h\mapsto h|_\X$ from the RKHS of
the squared-exponential kernel on $\mathcal Q$ to $\mathcal H_k$ is
contractive and maps this span to a subspace of dimension at most $m$.  The
restricted subspace has worst-case approximation error over $\X$ no larger
than that of the original subspace over $\mathcal Q$.  Hence the power-function
estimate gives
$d_m(k,\X)=O(\exp[-c' m^{1/d}\log m])$.  The compact set $\X$ need not have
interior or a regular boundary.  Corollary~\ref{cor:width-envelopes} then
gives~\eqref{eq:se-innovation}.
In dimension one, \citet[Theorem~2]{yarotsky2013} gives a
conditional-variance bound, uniform over $\X$, after conditioning on any set
of distinct points.  Taking square roots and applying the resulting
power-function bound to the final innovation norm of each tuple of distinct
points gives~\eqref{eq:se-one-dimensional-leading-constant}.  A tuple with a
repeated point has a zero innovation.

\begin{theorem}[Simple-regret rates for EI with a squared-exponential kernel]
\label{thm:se-rate}
In the setting of Section~\ref{sec:setting}, let $k$ be the
squared-exponential kernel in
Proposition~\ref{prop:se-separation-radius}.  Let $T>n_0$ and put
$N:=T-n_0$.
\begin{enumerate}
  \item In every dimension $d\geq1$, for every fixed-prior EI trajectory
  satisfying the weak-EI condition~\eqref{eq:approx-ei},
  there are $A',b'>0$ such that
  \begin{equation}
    r_T
    \leq
    A'\exp\{-b'\min\{N,N^{1/d}\log(eN)\}\},
  \label{eq:se-rate}
  \end{equation}
for all sufficiently large~$N$.
  \item When $d=1$ and EI is maximized exactly, for every $0<b''<1/2$ there
  is $A''>0$ such that
  \begin{equation}
    r_T\leq A''\exp\{-b''N\log(eN)\},
  \label{eq:se-one-dimensional-exact-rate}
  \end{equation}
for every $N\geq1$.
\end{enumerate}
For fixed $B$, $\X$, $k$, $\sigma$, $\eta$, and the initial design, the
constants and starting index in~\eqref{eq:se-rate} do not depend on the
choice of $f\in\Hk$ with $\lVert f\rVert_{\Hk}\leq B$.  When $d=1$ and EI
is maximized exactly, the same holds for $A''$ when $b''$ is fixed.
\end{theorem}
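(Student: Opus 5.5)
For part~1, I would simply chain results already stated. Proposition~\ref{prop:se-separation-radius} gives $\delta_m(k,\X)\leq A\exp(-bm^{1/d}\log m)$ for all large $m$. Lemma~\ref{lem:residual-envelope-domination} transfers this to $\delta_m(k_0,\X)$ with the same $A,b$ and the same starting rank. Hence hypothesis~\eqref{eq:log-enhanced-envelope} of Proposition~\ref{prop:log-enhanced-transfer} holds, and that proposition yields~\eqref{eq:se-rate} directly.

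For uniformity over the ball, the plan has three ingredients:
\begin{enumerate}
\item By Remark~\ref{rem:ei-constants}, the one-step regret bound~\eqref{eq:one-step} holds with $q_B,C_B$, which depend only on $B,\sigma,\eta$.
\item The initial regret is bounded uniformly: $r_{n_0}\leq 2\sup_x|f(x)|\leq 2B$ by the reproducing property and~\eqref{eq:normalization}.
\item The width constants $A,b$ and the starting rank depend only on $k$ and $\X$, not on $f$.
\end{enumerate}
With these uniform inputs, the uniform clause of Proposition~\ref{prop:log-enhanced-transfer} applies.

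For part~2 ($d=1$, exact EI), the coarse bound~\eqref{eq:order-transfer-coarse} is useless, since the term $q^L r_{n_0}$ decays at best like $e^{-cN}$. I would instead exploit exact maximization through a direct one-step argument at the last step $n=T-1$. Exactness gives $\EI_n(x_{n+1})\geq\EI_n(x^\star)$. The additive lower bound in Lemma~\ref{lem:ei-comparison} gives $\EI_n(x^\star)\geq I_n(x^\star)-B_0s_n(x^\star)=r_n-B_0s_n(x^\star)$. The upper bound in the same lemma, together with~\eqref{eq:improvement-regret}, gives $\EI_n(x_{n+1})\leq r_n-r_{n+1}+C s_n(x_{n+1})$. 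Combining the three inequalities yields
\[
  r_T\leq B_0\,s_{T-1}(x^\star)+\sigma\tau(a)\,s_{T-1}(x_T)
  \leq (B+C_B)\sup_{x\in\X}s_{T-1}(x).
\]
It then remains to bound $\sup_x s_{T-1}(x)=\sup_x P_{(x_1,\ldots,x_{T-1})}(x)$ by the one-dimensional estimate of \citet[Theorem~2]{yarotsky2013}, namely the uniform bound used for~\eqref{eq:se-one-dimensional-leading-constant}. For $n$ distinct conditioning points and any $\varepsilon>0$, this estimate gives $\sup_x P\leq \exp\{-(1/2-\varepsilon)n\log n\}$ for all large $n$.

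The main obstacle is the distinctness requirement: Yarotsky's bound needs the $T-1$ conditioning points to be distinct. I would handle it as follows. Suppose exact EI selects an already-queried point at some step $n\geq n_0$. There $s_n=0$ and $\mu_n=f$, so its EI equals $(m_n-f(x_i))_+=0$. Then $\max\EI_n=0$, and the multiplicative lower bound $\EI_n(x^\star)\geq\kappa_a r_n$ forces $r_n=0$; by monotonicity $r_T=0$ and there is nothing to prove.

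Otherwise the post-initial points are distinct from one another and from the initial design. The initial design itself may contain repeats, but these do not change $V_{n_0}$. So the conditioning set contains at least $n\geq N$ distinct points, which gives
\[
  r_T\leq (B+C_B)\exp\{-(1/2-\varepsilon)N\log N\}
\]
for $N$ large. Given $b''<1/2$, I would choose $\varepsilon$ so that $(1/2-\varepsilon)\log N\geq b''\log(eN)$ eventually. For the finitely many small $N$, I would use $r_T\leq 2B$ and enlarge $A''$. All constants involve only $B,\sigma$ and the kernel, so with $b''$ fixed, $A''$ is uniform over the ball.
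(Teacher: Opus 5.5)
Your proposal is correct and follows the paper's proof essentially step for step. Part~1 chains Proposition~\ref{prop:se-separation-radius}, Lemma~\ref{lem:residual-envelope-domination}, and Proposition~\ref{prop:log-enhanced-transfer} with the uniform inputs $q_B$, $C_B$, $r_{n_0}\leq 2B$. Part~2 uses the refined one-step bound at time $T-1$ with $\eta=1$, followed by the one-dimensional power-function bound of Proposition~\ref{prop:yarotsky-power} on at least $N$ distinct points.

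The only variation is how you rule out repeated queries before $x_T$. The paper uses strict positive definiteness to show that EI is positive at every unqueried point, so exact EI never repeats while a new point remains. You instead show that a repeat forces $\max\EI_n=0$ and hence $r_n=0$ via $\kappa_a>0$. Your version is equally valid and does not need strict positive definiteness.
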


\paragraph{Proof idea.}
Combining~\eqref{eq:se-innovation} with
Lemma~\ref{lem:residual-envelope-domination} and
Proposition~\ref{prop:log-enhanced-transfer} gives~\eqref{eq:se-rate}.  In
dimension one, interpolation makes EI zero at queried points, whereas strict
positive definiteness and~\eqref{eq:ei-closed-form} make it positive at
unqueried points.  Hence the exact~EI policy selects an unqueried point whenever
one remains.
Proposition~\ref{prop:yarotsky-power} then bounds
$\sup_{x\in\X}s_{T-1}(x)$.  With $\eta=1$,
\eqref{eq:optimizer-uncertainty-recursion} gives
$$
  r_T\leq(B_0+C)\sup_{x\in\X}s_{T-1}(x),
$$
which yields~\eqref{eq:se-one-dimensional-exact-rate}.

\begin{proof}
See Appendix~\ref{app:se-rate-proof}.
\end{proof}

\begin{remark}[Logarithmic loss in dimension one]
\label{rem:one-dimensional-bottleneck}
Suppose $d=1$ and $0<q<1$, and choose $1\leq L\leq N$ in the transfer
bound.  Write $M:=N-L+1$ for the separation-radius index.  The
separation-radius term is
bounded by $A\exp(-bM\log M)$ in~\eqref{eq:log-enhanced-envelope}, whereas
the contraction term is $q^{N-M+1}$.  The proof of
Proposition~\ref{prop:log-enhanced-transfer} used $L=\lfloor N/2\rfloor$ to
establish the rate.  Balancing the two terms more closely by choosing
$M\asymp N/\log(eN)$ and $L=N-M+1$ in
\eqref{eq:order-transfer-coarse} gives an upper bound $O(e^{-cN})$.
Thus~\eqref{eq:order-transfer-coarse} loses the factor $\log(eN)$ because
$q^L$ decreases only geometrically in $L$.
The refinement for exact maximization of EI
in~\eqref{eq:se-one-dimensional-exact-rate} combines the power-function bound
for every set of distinct points in dimension one with a bound for
$s_n(x^\star)$.
\end{remark}

\section{Minimax comparison}
\label{sec:minimax-comparison}

The preceding upper bounds hold on every compact domain.  The minimax
comparisons below assume that $\X$ has nonempty interior.

\subsection{Minimax classes}

In the minimax notation below, $N\geq1$ denotes the total number of
evaluations, including the initial design.  Let $\mathcal A_N$ be the class
of deterministic algorithms $\underline X_N=(X_1,\ldots,X_N)$, where
$X_1\in\X$ is chosen as part of the algorithm and, for $2\leq j\leq N$,
$$
  X_j:(\X\times\mathbb R)^{j-1}\longrightarrow\X
$$
is Borel measurable.  For an objective function $f$, the query points are
defined recursively by $X_1(f)=X_1$ and
$$
  X_j(f)
  =
  X_j\left(
    \bigl(X_i(f),f(X_i(f))\bigr)_{i=1}^{j-1}
  \right),
  \qquad 2\leq j\leq N.
$$
The class $\mathcal A_\infty$ consists of infinite deterministic sequential
algorithms $\underline X=(X_n)_{n\geq1}$ whose truncation
$(X_1,\ldots,X_N)$ belongs to $\mathcal A_N$ for every $N\geq1$.

A deterministic method with $N$ evaluations consists of
$\underline X_N\in\mathcal A_N$ and a Borel final-recommendation rule
$$
  \widehat X_N:(\X\times\mathbb R)^N\longrightarrow\X.
$$
For an objective function $f$, write
$$
  \widehat x_N(f)
  :=
  \widehat X_N\left(
    \bigl(X_j(f),f(X_j(f))\bigr)_{j=1}^N
  \right).
$$
The recommendation $\widehat x_N(f)$ need not be one of the query points.

\begin{definition}[Deterministic minimax loss]
For $B>0$, the deterministic minimax loss is
\begin{equation}
  \mathcal R_N^{\mathrm{det}}(B;k,\X)
  :=
  \inf_{\substack{
    \underline X_N\in\mathcal A_N\\
    \widehat X_N\ \text{Borel}
  }}
  \sup_{\lVert f\rVert_{\Hk}\leq B}
  \left\{
    f\bigl(\widehat x_N(f)\bigr)
    -\min_{x\in\X}f(x)
  \right\}.
  \label{eq:deterministic-minimax-loss}
\end{equation}
\end{definition}

Let $\widetilde{\mathcal A}_N$ be the class of sequential
strategies $\underline X$ for which each query point is a measurable function
of the preceding query-value history and of possible internal randomness that
is independent of $f$.  After $N$ exact evaluations, the strategy returns a
recommendation $\widehat x_N\in\X$ that is a measurable function of the
complete history and of the same internal randomness.  The recommendation
need not be one of the query points.  Write $\mathsf E_{\underline X,\,f}$ for
expectation over that randomness when the objective function is $f$.
The class $\widetilde{\mathcal A}_\infty$ consists of infinite randomized
sequential strategies whose truncation after $N$ evaluations belongs to
$\widetilde{\mathcal A}_N$ for every $N\geq1$.

\begin{definition}[Randomized minimax loss]
For $B>0$, the corresponding minimax expected loss is
\begin{equation}
  \mathcal R_N^{\mathrm{rand}}(B;k,\X)
  :=\inf_{\underline X\in\widetilde{\mathcal A}_N}
    \sup_{\lVert f\rVert_{\Hk}\leq B}
    \mathsf E_{\underline X,\,f}\left[
      f(\widehat x_N)-\min_{x\in\X}f(x)
    \right].
  \label{eq:randomized-recommendation-minimax}
\end{equation}
\end{definition}

\paragraph{Worst-case regret of exact~EI.}
Let
$$
  \underline X^{\mathrm{EI}}
  =(X_n^{\mathrm{EI}})_{n\geq1}
$$
denote a measurable exact~EI policy in $\mathcal A_\infty$ with the fixed
initial design, as provided by
Proposition~\ref{prop:measurable-exact-ei}.  Define, for $N\geq n_0$ and
$B>0$, its worst-case regret by
\begin{equation}
  R_N^{\mathrm{EI}}(B;k,\X)
  :=
  \sup_{\lVert f\rVert_{\Hk}\leq B}
  \left\{
    \min_{1\leq j\leq N}f\bigl(X_j^{\mathrm{EI}}(f)\bigr)
    -\min_{x\in\X}f(x)
  \right\}.
  \label{eq:ei-policy-worst-case-regret}
\end{equation}
This is the worst-case loss of the deterministic method that follows the EI
query rules and recommends the earliest query point at which the best observed
value is attained.

\subsection{Mat\'ern kernels}

\begin{theorem}[Minimax optimality for Mat\'ern kernels]
\label{thm:matern-minimax}
Let $\X\subset\R^d$ be compact with nonempty interior, let $k_\nu$ be the
Mat\'ern kernel in Proposition~\ref{prop:matern-separation-radius}, and let
$B>0$.  There are constants
$c_{\mathrm M},C_{\mathrm M}>0$ such that, for all sufficiently large~$N$,
\begin{equation}
  c_{\mathrm M}N^{-\nu/d}
  \leq \mathcal R_N^{\mathrm{rand}}(B;k_\nu,\X)
  \leq \mathcal R_N^{\mathrm{det}}(B;k_\nu,\X)
  \leq R_N^{\mathrm{EI}}(B;k_\nu,\X)
  \leq C_{\mathrm M}N^{-\nu/d}.
  \label{eq:matern-minimax-chain}
\end{equation}
\end{theorem}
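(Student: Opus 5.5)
The chain \eqref{eq:matern-minimax-chain} splits into four inequalities, and I would prove them from right to left.

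\emph{Upper bound.} Apply Theorem~\ref{thm:matern-rate} to the measurable exact~EI policy $\underline X^{\mathrm{EI}}$ of Proposition~\ref{prop:measurable-exact-ei}, with $\eta=1$. That theorem gives $r_T\leq C'(T-n_0)^{-\nu/d}$ for $T$ large, uniformly over $\lVert f\rVert_{\Hk}\leq B$. Here the minimax index $N$ is the total budget, so it plays the role of $T$. Since $n_0$ is fixed, $(N-n_0)^{-\nu/d}\leq 2^{\nu/d}N^{-\nu/d}$ for $N\geq 2n_0$. This gives $R_N^{\mathrm{EI}}\leq C_{\mathrm M}N^{-\nu/d}$.

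\emph{Middle inequalities.} For $R_N^{\mathrm{EI}}\geq\mathcal R_N^{\mathrm{det}}$, note that the EI query rules truncated at $N$ belong to $\mathcal A_N$. The rule ``recommend the earliest query point attaining the best observed value'' is a Borel map of the history, because it is built from finitely many comparisons and selections. Its loss equals the simple regret, so it is one admissible competitor in the infimum defining $\mathcal R_N^{\mathrm{det}}$. For $\mathcal R_N^{\mathrm{det}}\geq\mathcal R_N^{\mathrm{rand}}$, every deterministic method is a randomized strategy with trivial internal randomness, so the infimum over the larger class is smaller.

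\emph{Lower bound.} This is the main work, and I would use a bump-function (needle) construction. Fix a closed cube $Q\subset\X$ of side $\ell$, which exists because $\X$ has nonempty interior. Partition $Q$ into $2N$ congruent subcubes $Q_1,\dots,Q_{2N}$ of side $\asymp N^{-1/d}$; rounding $(2N)^{1/d}$ to an integer only changes constants. Fix $\psi\in C_c^\infty(\R^d)$ supported in the unit ball with $\psi(0)=1$ and $0\leq\psi\leq1$. Set $h=cN^{-1/d}$ and $g_i(x)=-\varepsilon h^{\nu}\psi((x-c_i)/h)$, where $c_i$ is the center of $Q_i$. The Mat\'ern RKHS on $\R^d$ is norm-equivalent to the Sobolev space $H^{\nu+d/2}(\R^d)$, and restriction to $\X$ is a contraction. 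Scaling gives $\lVert\psi(\cdot/h)\rVert_{H^{s}}\lesssim h^{d/2-s}$ for $h\leq1$, with $s=\nu+d/2$. Hence $\lVert g_i\rVert_{\Hk}\leq B$ once $\varepsilon$ is small. This scaling step is the one I expect to need the most care, including for non-integer $s$; it can be handled via the Fourier transform, $\widehat{\psi(\cdot/h)}(\omega)=h^d\widehat\psi(h\omega)$.

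Each $g_i$ has minimum $-\varepsilon h^\nu$ at $c_i$ and vanishes outside $Q_i$. Draw $I$ uniformly from $\{1,\dots,2N\}$ and apply Yao's principle: the randomized minimax loss is at least the best expected loss of a deterministic method against this prior. Run a deterministic method against $f\equiv0$. Its query points meet at most $N$ of the supports, and its recommendation lands in at most one more cube. On any index $i$ untouched by all of these, the method sees only zeros, so it behaves exactly as on $f\equiv0$. Its recommendation then lies outside $Q_i$ and suffers loss $\varepsilon h^\nu$. There are at least $N-1$ such indices out of $2N$, so the expected loss is at least $\tfrac{N-1}{2N}\varepsilon c^\nu N^{-\nu/d}\geq c_{\mathrm M}N^{-\nu/d}$ for $N\geq2$.

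To apply Yao correctly I would state it directly. For a randomized strategy, condition on its internal randomness $\omega$. The resulting deterministic method has average loss at least the bound above under the uniform prior. Integrating over $\omega$ and using $\sup_f\geq$ average over the $g_i$ gives the bound. Measurability of the conditioned method follows from joint measurability in the definition of $\widetilde{\mathcal A}_N$.
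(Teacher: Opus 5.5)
Your proposal is correct and follows the paper's proof. The chain is the same. The upper bound comes from the uniform estimate of Theorem~\ref{thm:matern-rate} applied to the measurable exact~EI policy with $n_0$ fixed. The two middle inequalities come from viewing deterministic methods as randomized ones and from the Borel earliest-best recommendation rule. The lower bound is the bump-function argument behind Bull's Theorem~1. The only difference is that the paper cites Bull for the lower bound, whereas you write out that argument explicitly: Sobolev scaling of $\psi(\cdot/h)$, $2N$ disjoint supports, and averaging over the internal randomness.
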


\begin{proof}
The classical bump-function argument used in the proof of
\citet[Theorem~1]{bull2011} applies to $k_\nu$ and gives the lower bound for
$\mathcal R_N^{\mathrm{rand}}$ in~\eqref{eq:matern-minimax-chain}.
Every deterministic method in the definition of
$\mathcal R_N^{\mathrm{det}}$ belongs to
$\widetilde{\mathcal A}_N$ by using no internal randomness.  Hence
$\mathcal R_N^{\mathrm{rand}}(B;k_\nu,\X)
\leq\mathcal R_N^{\mathrm{det}}(B;k_\nu,\X)$.

Consider the method that uses the first $N$ decision rules of
$\underline X^{\mathrm{EI}}$ and recommends the earliest query point at which
the best observed value is attained.  This recommendation is a Borel function
of the finite history, and its loss equals the regret in
\eqref{eq:ei-policy-worst-case-regret}.  Therefore
$\mathcal R_N^{\mathrm{det}}(B;k_\nu,\X)
\leq R_N^{\mathrm{EI}}(B;k_\nu,\X)$.
Finally, the uniform bound in Theorem~\ref{thm:matern-rate} gives
$R_N^{\mathrm{EI}}(B;k_\nu,\X)
=O((N-n_0)^{-\nu/d})=O(N^{-\nu/d})$, since $n_0$ is fixed.
\end{proof}

\subsection{Squared-exponential kernels}

\begin{theorem}[Minimax optimality for the squared-exponential kernel]
\label{thm:se-minimax}
Let~$\X\subset\R^d$ be compact with nonempty interior, let $k$ be as in
Proposition~\ref{prop:se-separation-radius}, and let $B>0$.  There are positive
constants $c_1,c_2,C_1,C_2$ such that, for all sufficiently large~$N$,
\begin{equation}
  c_1\exp\{-C_1N^{1/d}\log(eN)\}
  \leq \mathcal R_N^{\mathrm{det}}(B;k,\X)
  \leq R_N^{\mathrm{EI}}(B;k,\X)
  \leq C_2\exp\{-c_2N^{1/d}\log(eN)\}.
  \label{eq:se-minimax-scale}
\end{equation}
The constants $c_1,C_1$ depend only on $B$, $k$, and $\X$.  The constants
$c_2,C_2$ may also depend on $\sigma$ and the fixed initial design.
\end{theorem}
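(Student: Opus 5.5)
The chain \eqref{eq:se-minimax-scale} splits into three inequalities, and I would treat them separately. The middle one, $\mathcal R_N^{\mathrm{det}}\leq R_N^{\mathrm{EI}}$, is the same as in the proof of Theorem~\ref{thm:matern-minimax}. Take the first $N$ decision rules of the measurable exact~EI policy from Proposition~\ref{prop:measurable-exact-ei}, and recommend the earliest query point attaining the best observed value. This recommendation is a Borel function of the history, so the method is admissible in \eqref{eq:deterministic-minimax-loss}, and its worst-case loss is exactly $R_N^{\mathrm{EI}}$.

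For the upper bound I would use the uniform part of Theorem~\ref{thm:se-rate} with $\eta=1$. It gives $R_N^{\mathrm{EI}}\leq A'\exp\{-b'\min\{N',N'^{1/d}\log(eN')\}\}$ with $N'=N-n_0$, for all large $N$, with constants uniform over the $B$-ball. For $d\geq2$ the minimum equals $N'^{1/d}\log(eN')$ for large $N'$. For $d=1$ it equals $N'$, which is weaker than $N\log(eN)$, so I would instead use the exact-EI estimate \eqref{eq:se-one-dimensional-exact-rate}, which gives $\exp\{-b''N'\log(eN')\}$. In both cases, since $n_0$ is fixed, $N'^{1/d}\log(eN')\geq \tfrac12 N^{1/d}\log(eN)$ for large $N$. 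Absorbing this factor into $c_2$ gives the right-hand inequality, with constants depending on $\sigma$ and the initial design through Theorem~\ref{thm:se-rate}.

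The main work is the lower bound, which is a deterministic adversary (fooling-function) argument. Fix a closed cube $Q\subset\X$ of side $\ell$. Split it into about $2N$ disjoint subcubes of side $h\asymp \ell N^{-1/d}$, with centres $y_1,\dots,y_{2N}$ well separated from each other and from $\partial Q$. Run the given deterministic method on $f\equiv0$. It queries $N$ points and recommends $\widehat x$. At least $N$ centres $y_i$ have subcubes containing no query point. Choose one such centre $y$ whose subcube also avoids $\widehat x$; this is possible because at least $N-1\geq1$ candidates remain. Now build a bump $g\in\Hk$ that vanishes at all $N$ query points, satisfies $\lVert g\rVert_{\Hk}\leq B$, has $g(y)=-\varepsilon$, and has $g(\widehat x)\geq-\varepsilon/2$. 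On $\pm g$-type functions the method sees the same zero data, so it queries the same points and recommends the same $\widehat x$. Its loss on $g$ is then at least $\varepsilon/2$. A second option is to take $g$ to be the posterior-variance representer $\Pi^\perp_{\mathbf{x}}k(\cdot,y)$, normalized. This vanishes at the queries, and since $P_{\mathbf x}(y)$ is small, the required size is $B\,P_{\mathbf x}(y)$, which needs a \emph{lower} bound on the power function at a point at distance $\gtrsim h$ from the data.

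The hard step is producing such a bump with $\varepsilon\gtrsim \exp\{-C_1 N^{1/d}\log N\}$. I would try a product of a Gaussian centred at $y$ and a polynomial that vanishes at the queries, but this needs too high a degree. A better route is a direct lower bound on the Gaussian power function, $P_{\mathbf x}(y)\geq \exp\{-C h^{-1}|\log h|\}$ when $\operatorname{dist}(y,\mathbf x)\geq h$. I would obtain it via the Fourier characterisation of the Gaussian RKHS norm: construct $\varphi\in C_c^\infty$ supported in the ball $B(y,h)$ with $\varphi(y)=1$ and with Gaussian-RKHS norm $\leq\exp\{C h^{-1}\log(1/h)\}$. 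This is possible using Gevrey-class bump functions, whose Fourier transforms decay like $\exp(-c|\omega|^{1/s})$ with $s>1$; alternatively one can bound the Gaussian norm through the series $\sum_\alpha \varrho^{2|\alpha|}\lVert\partial^\alpha\varphi\rVert^2/\alpha!$. Such a $\varphi$ vanishes at every query automatically, so setting $g=-\varepsilon\varphi$ with $\varepsilon=B/\lVert\varphi\rVert$ works. Choosing $h\asymp N^{-1/d}$ gives $\varepsilon\geq c_1\exp\{-C_1N^{1/d}\log(eN)\}$, and these constants depend only on $B,k,\X$. Checking that the norm estimate really has order $h^{-1}\log(1/h)$, rather than $h^{-2}$, is where I expect the real difficulty lies.
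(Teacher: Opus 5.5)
Your middle inequality and your upper bound follow the paper's argument: the measurable exact~EI policy with the earliest-best recommendation, then \eqref{eq:se-rate} for $d\geq2$ and \eqref{eq:se-one-dimensional-exact-rate} for $d=1$, with the fixed $n_0$ absorbed into the constants.

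The lower bound fails at its central step, because the bump you want does not exist. Every element of the squared-exponential RKHS on $\R^d$ extends to an entire function on $\mathbb C^d$. By the restriction theorem, every element of $\Hk$ is the restriction to $\X$ of such a function. A $g\in\Hk$ supported in a small ball $B(y,h)\subset\operatorname{int}(\X)$ would therefore vanish on the nonempty open set $\operatorname{int}(\X)\setminus\overline{B(y,h)}$, hence everywhere, contradicting $g(y)=-\varepsilon$. In Fourier terms, no nonzero compactly supported $\varphi$ makes $\lvert\widehat\varphi(\omega)\rvert^2e^{\varrho^2\lVert\omega\rVert^2/2}$ integrable. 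Gevrey decay $e^{-c\lVert\omega\rVert^{1/s}}$ with $s>1$ is far too slow. So the norm of your $\varphi$ is $+\infty$, not $\exp\{Ch^{-1}\log(1/h)\}$. This non-localisability is the same phenomenon that makes the kernel fail the NEB property.

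Your fallback is also wrong as stated: no lower bound on $P_{\mathbf x}(y)$ can depend only on $\dist(y,\mathbf x)\geq h$. Points accumulating in an open set away from $y$ drive $P_{\mathbf x}(y)$ to $0$ at fixed $h$. For $d\geq2$, $N$ distinct points on a line through $y$, all at distance at least $h$ from $y$, give $P_{\mathbf x}(y)\leq A_b\exp\{-bN\log(eN)\}$ by Proposition~\ref{prop:yarotsky-power} applied along the line. That is far below $\exp\{-CN^{1/d}\log N\}$. What this route actually needs is a lower bound on $\sup_{y\in\X}P_{(\mathbf x,\widehat x)}(y)\geq d_{N+1}(k,\X)$. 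Here $\widehat x$ must be added to the conditioning points, because the normalised innovation is not localised. A lower bound on these widths is a substantial input that you do not supply.

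The paper avoids fooling functions altogether. It transfers K\"uhn's metric-entropy lower bound for the Gaussian RKHS ball to a cube inside $\X$ (Lemma~\ref{lem:se-entropy-lower}). It then applies the entropy-to-query-count inequality \eqref{eq:xu-query-lower} of Xu et al., which gives $T\gtrsim(L_\varepsilon/\log L_\varepsilon)^d$ (Lemma~\ref{lem:se-query-lower}). Finally, it appends $\widehat X_N$ as an $(N+1)$st query and takes $\varepsilon_N=B\exp\{-C_1N^{1/d}\log(eN)\}$ with $C_1$ large, which yields a contradiction.

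Alternatively, the polynomial route you dismissed does work, because the required degree is $D\asymp N^{1/d}$, not $N$. By dimension counting, some nonzero polynomial $p$ of degree $D$ vanishes at the $N$ queries and at $\widehat x$. The functions $(\varrho^{\lvert\alpha\rvert}\sqrt{\alpha!})^{-1}x^\alpha e^{-\lVert x\rVert^2/(2\varrho^2)}$ form an orthonormal basis of the Gaussian RKHS on $\R^d$. Hence $p(x)e^{-\lVert x\rVert^2/(2\varrho^2)}$ has norm at most $\sqrt{D!}\max\{1,\varrho\}^D$ times the Euclidean norm of the coefficients of $p$. A coefficient-versus-sup-norm inequality for polynomials of degree $D$ on a cube in $\X$ gives a point $y$ of that cube with $\lvert p(y)\rvert$ at least $c^D$ times that coefficient norm. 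The Gaussian factor at $y$ is bounded below on $\X$. Normalising and restricting to $\X$ gives a $g\in\Hk$ with the following properties: $\lVert g\rVert_{\Hk}\leq B$; $g$ vanishes at every query and at $\widehat x$; and $g(y)=-\varepsilon$ with $\varepsilon\geq c_1\exp\{-C_1N^{1/d}\log(eN)\}$, since $\sqrt{D!}=\exp\{O(N^{1/d}\log N)\}$.
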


\paragraph{Proof idea.}
After restriction to a cube contained in $\operatorname{int}(\X)$ and
rescaling to $[0,1]^d$, the metric-entropy estimate in
\citet[Theorem~3]{kuhn2011}, combined with
\citet[Theorem~5.1]{xu2024}, gives a lower bound on the number of evaluations
required to make the simple regret computed from the best observed value
smaller than a prescribed threshold.  Appending the final recommendation of a
deterministic method as the $(N+1)$st query gives the lower bound for
$\mathcal R_N^{\mathrm{det}}$ in~\eqref{eq:se-minimax-scale}.

For the upper bound, use the measurable exact~EI policy and recommend the
earliest query point at which the best observed value is attained.
Theorem~\ref{thm:se-rate} gives the required bound, using
\eqref{eq:se-rate} when $d\geq2$ and
\eqref{eq:se-one-dimensional-exact-rate} when $d=1$.

\begin{proof}
See Subsection~\ref{app:se-minimax-proof} for the complete proof.
\end{proof}

\section{Discussion}
\label{sec:scope}

Fixed-prior EI trajectories generated by weak-EI policies are simple-regret
consistent for every continuous positive-semidefinite kernel on a nonempty
compact domain
(Corollary~\ref{cor:ei-consistency}).  For Mat\'ern kernels, their rate is
$N^{-\nu/d}$ for every $\nu>0$.  When $B>0$ and $\X$ has nonempty interior,
Theorem~\ref{thm:matern-minimax} shows that the exact~EI policy is
minimax-rate optimal among deterministic methods whose final recommendation
may be any point of $\X$.
Thus the exact fixed-prior~EI policy attains the Mat\'ern minimax rate for
every $\nu>0$ without space-filling random queries, which answers the question
that motivated this article.

Although the squared-exponential kernel does not have the NEB property and
exact~EI trajectories need not be dense
\citep[Theorem~3]{yarotsky2013},
Theorems~\ref{thm:se-rate} and~\ref{thm:se-minimax} also show that the
exact~EI policy is minimax-rate optimal up to constants in the exponent among
deterministic methods.

The main idea is to represent each query point $x_i$ by its evaluation
representer $k(x_i,\cdot)$ and thereby reduce the upper-bound analysis of EI
to geometric approximation in $\Hk$.  The comparison between sequential
separation radii and Kolmogorov widths is inspired by work on greedy
approximation \citep{binev2011,devore2013,wenzelSantinHaasdonk2023}.
In the proof of Lemma~\ref{lem:width-product}, a subspace dimension in the
Kolmogorov-width bound is chosen separately for each eigenvalue of the Gram
matrix of an ordered tuple of evaluation representers.
Lemma~\ref{lem:innovation-order-statistics} bounds the ranked selected-point
innovation norms over each finite horizon by the separation radii of $k_0$,
and hence by those of $k$.
The finite-budget argument in
Theorem~\ref{thm:order-statistic-transfer} uses
$r_{n+1}\leq q r_n+C v_{n+1}$ at the evaluations with the smallest
innovation norms
and uses regret monotonicity at all remaining iterations.  For both kernels,
the estimates of $d_m(k,\X)$ follow from standard power-function estimates in
scattered data approximation.

The analysis assumes exact observations and covariance parameters fixed
independently of the observations.  Noise and covariance parameters selected
from the data lie outside its scope.

\appendix

\section{Consistency and density}
\label{app:consistency-density-proofs}

The compactness argument of \citet[Lemma~12]{vazquezBect2010} yields a
subsequence used in both the consistency and density proofs.

\begin{lemma}[Vanishing EI along a subsequence]
\label{lem:cluster-selected-ei}
In the setting of Section~\ref{sec:setting}, let
$(x_i)_{i\geq1}\subset\X$ be any infinite sequence of query points, with the
corresponding values of $f$ observed exactly.  Then there exist indices
$n_j\to\infty$ such that
\begin{equation}
  \EI_{n_j}(x_{n_j+1})\longrightarrow0.
  \label{eq:cluster-selected-ei}
\end{equation}
\end{lemma}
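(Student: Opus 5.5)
The plan is to extract a convergent subsequence of query points and show that both the posterior standard deviation and the positive part of the improvement gap vanish along it. By compactness of $\X$, the sequence $(x_i)$ has a convergent subsequence $x_{p_j}\to\bar x\in\X$ with $p_1<p_2<\cdots$. We then choose $n_j:=p_{j+1}-1$, so that $x_{n_j+1}=x_{p_{j+1}}$, and the earlier point $x_{p_j}$ already belongs to the history $(x_1,\ldots,x_{n_j})$. Passing to a further subsequence if needed, we may assume $p_{j+1}>p_j$, which guarantees $n_j\geq p_j$ and $n_j\to\infty$.

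The first step is to show that the posterior standard deviation vanishes along the subsequence. Since $k(\cdot,x_{p_j})\in V_{n_j}$, the power-function identity \eqref{eq:power-function} gives
$$
  s_{n_j}(x_{n_j+1})
  \leq \bigl\lVert k(\cdot,x_{p_{j+1}})-k(\cdot,x_{p_j})\bigr\rVert_{\Hk}.
$$
The map $x\mapsto k(\cdot,x)$ is continuous into $\Hk$, and both points converge to $\bar x$, so the right-hand side tends to zero. Hence $s_{n_j}(x_{n_j+1})\to0$.

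The second step bounds EI by the improvement plus a small term. The upper bound in Lemma~\ref{lem:ei-comparison} gives
$$
  \EI_{n_j}(x_{n_j+1})
  \leq I_{n_j}(x_{n_j+1})+\sigma\tau(a)\,s_{n_j}(x_{n_j+1}),
$$
where $a=B_0/\sigma$ is fixed. It therefore remains to show that $I_{n_j}(x_{n_j+1})=(m_{n_j}-f(x_{p_{j+1}}))_+\to0$. Since $m_{n_j}\leq f(x_{p_j})$,
$$
  I_{n_j}(x_{n_j+1})\leq\bigl(f(x_{p_j})-f(x_{p_{j+1}})\bigr)_+ .
$$
The function $f$ is continuous, so this tends to $0$. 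This proves \eqref{eq:cluster-selected-ei}.

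No step is a serious obstacle. The one point requiring care is the index bookkeeping: the earlier cluster point must lie in the conditioning history, and the next cluster point must be the query selected at step $n_j$. The choice $n_j=p_{j+1}-1$ secures both. The argument uses no property of the selection rule, which matches the statement's claim for arbitrary query sequences.
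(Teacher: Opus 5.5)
Your proof is correct and follows essentially the same route as the paper. You take a convergent subsequence $x_{p_j}\to\bar x$, bound $s_{n_j}(x_{n_j+1})\leq\lVert k(\cdot,x_{p_{j+1}})-k(\cdot,x_{p_j})\rVert_{\Hk}$ and $I_{n_j}(x_{n_j+1})\leq(f(x_{p_j})-f(x_{p_{j+1}}))_+$, and then apply the upper bound in Lemma~\ref{lem:ei-comparison}; your choice $n_j=p_{j+1}-1$ is the paper's $n_j=\varphi_j-1$ with a shifted index. One piece of bookkeeping is missing: you should discard finitely many terms so that $n_j\geq n_0$, which the paper does explicitly because Lemma~\ref{lem:ei-comparison} (through $B_0$) is stated only for $n\geq n_0$.
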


\begin{proof}
By compactness, there are $\bar x\in\X$ and strictly increasing indices
$(\varphi_j)$ such that $x_{\varphi_j}\to\bar x$.  After
discarding finitely many terms and reindexing the subsequence, we may assume
$\varphi_1-1\geq n_0$ and put $n_j:=\varphi_j-1$.  For $j\geq2$,
$x_{\varphi_{j-1}}$ has already been observed at time $n_j$.  Hence
\begin{align*}
  s_{n_j}(x_{n_j+1})
  &\leq
  \bigl\|k(\cdot,x_{\varphi_j})
       -k(\cdot,x_{\varphi_{j-1}})\bigr\|_{\Hk}
  \longrightarrow0,\\
  I_{n_j}(x_{n_j+1})
  &\leq
  \bigl(f(x_{\varphi_{j-1}})-f(x_{\varphi_j})\bigr)_+
  \longrightarrow0.
\end{align*}
The first inequality holds because
$k(\cdot,x_{\varphi_{j-1}})\in V_{n_j}$.  The second uses
$m_{n_j}\leq f(x_{\varphi_{j-1}})$.  Both right-hand sides tend to zero
because $x_{\varphi_j}\to\bar x$, the map
$x\mapsto k(\cdot,x)$ is continuous, and $f$ is continuous.  The upper bound
in~\eqref{eq:ei-comparison} then gives~\eqref{eq:cluster-selected-ei}.
\end{proof}

\paragraph{Consequence for consistency.}
For a fixed-prior EI trajectory satisfying~\eqref{eq:approx-ei}, the lower
bound in~\eqref{eq:ei-comparison} at a minimizer $x^\star$ and
\eqref{eq:approx-ei} give
$$
  \eta\kappa_a r_n
  \leq \EI_n(x_{n+1}).
$$
Along the subsequence from Lemma~\ref{lem:cluster-selected-ei}, the right-hand
side tends to zero.  Since $\eta\kappa_a>0$, $r_{n_j}\to0$.  The
nonnegative sequence $(r_n)$ is nonincreasing, so $r_n\to0$.

\paragraph{Density under the NEB property.}

Because $s_n(x)=P_{(x_1,\ldots,x_n)}(x)$, the density proof applies the NEB
property to the original kernel $k$.

\begin{proof}[Proof of Theorem~\ref{thm:neb-density}]
Suppose that the query points are not dense.  Then there is an
$x\in\X\setminus\overline{\{x_i,\,i\geq1\}}$.  The NEB property gives
$s_n(x)\not\to0$.  Since the spaces
$V_n$ are nested, the normalized posterior standard deviations $s_n(x)$ are
nonincreasing in $n$.  Hence
$$
  0<\inf_{j\geq n_0}s_j(x)\leq s_n(x)\leq1,
  \qquad n\geq n_0.
$$
The RKHS error estimate~\eqref{eq:rkhs-error} bounds $\mu_n(x)$ uniformly in
$n$, and continuity of $f$ on the compact set $\X$ bounds $m_n$.  Hence the
pairs
$$
  \bigl(m_n-\mu_n(x),s_n(x)\bigr)
$$
lie in a compact subset of $\mathbb R\times(0,1]$.  With
$u=m_n-\mu_n(x)$ and $s=s_n(x)$, the positive-$s$ branch of
\eqref{eq:ei-closed-form} is the function
$$
  (u,s)\longmapsto \sigma s\,
  \tau\left(\frac{u}{\sigma s}\right).
$$
This map is continuous and strictly positive, so it has a
positive minimum on the compact subset containing these pairs.  Consequently
$\inf_{n\geq n_0}\EI_n(x)>0$.  The selection condition
\eqref{eq:approx-ei} then implies that $\EI_n(x_{n+1})$ is bounded away from
zero, contradicting
Lemma~\ref{lem:cluster-selected-ei}.
\end{proof}

\section{Using the normalized posterior standard deviation at a minimizer}
\label{app:optimizer-uncertainty}

\begin{lemma}[Refined one-step regret bound]
\label{lem:optimizer-uncertainty}
Let $x^\star$ be a minimizer of $f$.  For every $n\geq n_0$, if the selected
point $x_{n+1}$ satisfies~\eqref{eq:approx-ei}, then
\begin{equation}
  r_{n+1}
  \leq
  \min\left\{
    q r_n,\,
    (1-\eta)r_n+\eta B_0 s_n(x^\star)
  \right\}
  +C s_n(x_{n+1}).
  \label{eq:optimizer-uncertainty-recursion}
\end{equation}
\end{lemma}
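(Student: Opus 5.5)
The plan is to mirror the proof of Proposition~\ref{prop:one-step-regret}: chain the weak-EI condition at the minimizer with the upper EI bound at the selected point. Each of the two lower bounds in \eqref{eq:ei-comparison} will give one branch of the minimum.

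First, at the selected point, the upper bound in \eqref{eq:ei-comparison} together with \eqref{eq:improvement-regret} gives
$$
  \EI_n(x_{n+1})\leq I_n(x_{n+1})+\sigma\tau(a)s_n(x_{n+1})
  =r_n-r_{n+1}+C s_n(x_{n+1}).
$$
By \eqref{eq:approx-ei}, $\EI_n(x_{n+1})\geq\eta\,\EI_n(x^\star)$. Rearranging, it then suffices to show
$$
  r_n-\eta\,\EI_n(x^\star)\leq\min\bigl\{q r_n,\,(1-\eta)r_n+\eta B_0 s_n(x^\star)\bigr\},
$$
which is equivalent to
$$
  \eta\,\EI_n(x^\star)\geq\max\bigl\{\eta\kappa_a r_n,\ \eta(r_n-B_0s_n(x^\star))\bigr\}.
$$

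Next, apply the lower bound in \eqref{eq:ei-comparison} at $x^\star$, using $I_n(x^\star)=r_n$ from \eqref{eq:improvement-regret}. This gives
$$
  \EI_n(x^\star)\geq\max\{r_n-B_0s_n(x^\star),\ \kappa_a r_n\}.
$$
Multiplying by $\eta$ yields exactly the required inequality. The first branch gives $r_{n+1}\leq(1-\eta\kappa_a)r_n+Cs_n(x_{n+1})=qr_n+Cs_n(x_{n+1})$. The second gives $r_{n+1}\leq(1-\eta)r_n+\eta B_0s_n(x^\star)+Cs_n(x_{n+1})$. Taking the smaller of the two bounds proves \eqref{eq:optimizer-uncertainty-recursion}.

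The argument is entirely routine, so I do not expect a real obstacle. The only point needing care is the case $s_n(x^\star)=0$ or $s_n(x_{n+1})=0$. Lemma~\ref{lem:ei-comparison} already covers these, since it holds for every $x\in\X$ including the degenerate case $\EI_n=I_n$, so no separate treatment is needed.
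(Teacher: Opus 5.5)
Your proposal is correct and follows essentially the same route as the paper's proof: both lower bounds in \eqref{eq:ei-comparison} at $x^\star$ (using $I_n(x^\star)=r_n$), combined via \eqref{eq:approx-ei} with the additive upper bound at $x_{n+1}$ (using $I_n(x_{n+1})=r_n-r_{n+1}$), followed by rearrangement into the two branches. Your remark that the degenerate cases $s_n=0$ are already covered by Lemma~\ref{lem:ei-comparison} is also accurate.
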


\begin{proof}
Since $I_n(x^\star)=r_n$, the two lower bounds in
\eqref{eq:ei-comparison} give
$$
  \EI_n(x^\star)
  \geq
  \max\{r_n-B_0s_n(x^\star),\kappa_a r_n\}.
$$
At the selected point,
$I_n(x_{n+1})=r_n-r_{n+1}$.  The upper bound in
\eqref{eq:ei-comparison} and the selection condition
\eqref{eq:approx-ei} therefore give
$$
  \eta\max\{r_n-B_0s_n(x^\star),\kappa_a r_n\}
  \leq r_n-r_{n+1}+Cs_n(x_{n+1}).
$$
Rearranging gives
$$
  r_{n+1}
  \leq
  \min\left\{
    (1-\eta)r_n+\eta B_0s_n(x^\star),\,
    (1-\eta\kappa_a)r_n
  \right\}
  +Cs_n(x_{n+1}).
$$
Since $q=1-\eta\kappa_a$, this is
\eqref{eq:optimizer-uncertainty-recursion}.
\end{proof}

The proof of Theorem~\ref{thm:order-statistic-transfer} uses only the
$q r_n$ branch.  Using the second branch in that argument would require
additional control of $s_n(x^\star)$, whereas
Lemma~\ref{lem:innovation-order-statistics} bounds only the selected-point
standard deviations.  The proof for exact~EI when $d=1$ uses a uniform bound
on the power function to bound $s_n(x^\star)$.

\section{Ranked innovation norms and finite-budget bounds}
\label{app:transfer-refinements}

\subsection{Proof of the ranked innovation-norm bound}
\label{app:ranked-innovation-proof}

\begin{proof}[Proof of Lemma~\ref{lem:innovation-order-statistics}]
Fix $1\leq j\leq N$, and let $\mathcal I_j\subseteq\{1,\ldots,N\}$, with
$|\mathcal I_j|=j$, contain indices of the $j$ largest values among
$\widetilde v_1,\ldots,\widetilde v_N$, with ties resolved arbitrarily.
Enumerate this set in chronological order as
$$
  \mathcal I_j=\{i_1<\cdots<i_j\}.
$$
The chronological enumeration need not arrange the values
$(\widetilde v_{i_h})_{h=1}^j$ in decreasing order.
Put $y_h:=x_{n_0+i_h}$ for $1\leq h\leq j$.  Then
\begin{equation}
  \operatorname{span}\{\Psi_0(x_{n_0+i}):1\leq i<i_h\}
  \supseteq
  \operatorname{span}\{\Psi_0(y_\ell),\,1\leq\ell<h\},
  \label{eq:ranked-predecessor-span-inclusion}
\end{equation}
where the span on the right is $\{0\}$ when $h=1$.
Equations~\eqref{eq:post-initial-innovation-reindexing} and
\eqref{eq:posterior-restart-power}, together
with~\eqref{eq:ranked-predecessor-span-inclusion}, give
\begin{equation}
\begin{aligned}
  \widetilde v_{i_h}
  &=
  \dist_{\Hk}\left(
    \Psi_0(y_h),
    \operatorname{span}\{\Psi_0(x_{n_0+i}):1\leq i<i_h\}
  \right)\\
  &\leq
  \dist_{\Hk}\left(
    \Psi_0(y_h),
    \operatorname{span}\{\Psi_0(y_\ell),\,1\leq\ell<h\}
  \right),
  \qquad 1\leq h\leq j.
\end{aligned}
  \label{eq:ranked-distance-comparison}
\end{equation}
Taking the minimum over $h$ in~\eqref{eq:ranked-distance-comparison} and then
using the definition of $\delta_j$, with $k_0(\cdot,y)=\Psi_0(y)$ and
$\mathcal H_{k_0}$ carrying the norm inherited from $\Hk$, gives
\begin{align*}
  v_{[j]}
  &=\min_{1\leq h\leq j}\widetilde v_{i_h}\\
  &\leq
  \min_{1\leq h\leq j}
  \dist_{\Hk}\left(
    \Psi_0(y_h),
    \operatorname{span}\{\Psi_0(y_\ell),\,1\leq\ell<h\}
  \right)\\
  &\leq\delta_j(k_0,\X).
\end{align*}
Lemma~\ref{lem:residual-envelope-domination} completes
\eqref{eq:innovation-order-statistics}.

\end{proof}

\subsection{Proof of the recursion along a subsequence}
\label{app:capped-certificate-proof}

\begin{proof}[Proof of Lemma~\ref{lem:capped-certificate}]
The base case is $u_0\leq U_0$.  Suppose that
$u_{i-1}\leq U_{i-1}$.  The two assumed inequalities at index $i$ give
$$
  u_i
  \leq\min\{u_{i-1},qu_{i-1}+Cv_i\}
  =F_{v_i}(u_{i-1}).
$$
Because $q\geq0$, the map $F_{v_i}$ is nondecreasing.  Hence
$$
  u_i\leq F_{v_i}(u_{i-1})
      \leq F_{v_i}(U_{i-1})=U_i,
$$
which proves $u_i\leq U_i$ for every $i$.

Let $S\subseteq\{1,\ldots,N\}$ and set $s:=|S|$.  When $s\geq1$, list its
elements as $S=\{j_1<\cdots<j_s\}$.  Define
$$
  A(S):=q^sU_0+C\sum_{h=1}^sq^{s-h}v_{j_h},
$$
where the sum is zero when $S=\varnothing$.  Thus
$A(\varnothing)=U_0$.
We prove by induction on $i$ that
$$
  U_i=\min_{S\subseteq\{1,\ldots,i\}}A(S).
$$
For $i=0$, the only subset is $\varnothing$, so the right-hand side is
$A(\varnothing)=U_0$.  Suppose that the claim holds at index $i-1$, and
partition the subsets of $\{1,\ldots,i\}$ according to whether they contain
$i$.  For every $S'\subseteq\{1,\ldots,i-1\}$, the definition gives
$$
  A(S'\cup\{i\})=qA(S')+Cv_i.
$$
Since $q\geq0$, the induction hypothesis therefore yields
\begin{align*}
  \min_{S\subseteq\{1,\ldots,i\}}A(S)
  &=\min\left\{
      \min_{S'\subseteq\{1,\ldots,i-1\}}A(S'),
      \min_{S'\subseteq\{1,\ldots,i-1\}}A(S'\cup\{i\})
    \right\}\\
  &=\min\{U_{i-1},qU_{i-1}+Cv_i\}
   =U_i.
\end{align*}
Taking $i=N$ proves~\eqref{eq:pathwise-subset-certificate}.
\end{proof}

\subsection{Proof of the finite-budget bound}
\label{app:ranked-transfer-proof}

\begin{proof}[Proof of Theorem~\ref{thm:order-statistic-transfer}]
As in Lemma~\ref{lem:innovation-order-statistics}, reindex the post-initial
innovation norms from $1$ to $N$ by setting
$$
  \widetilde v_i
  :=v_{n_0+i}
  =s_{n_0+i-1}(x_{n_0+i}),
  \qquad 1\leq i\leq N,
$$
and let $v_{[1]}\geq\cdots\geq v_{[N]}$ be their decreasing rearrangement.
Set $u_i:=r_{n_0+i}$, so that $u_N=r_T$.  The hypotheses of
Theorem~\ref{thm:order-statistic-transfer} give
$$
  u_i\leq u_{i-1},
  \qquad
  u_i\leq q u_{i-1}+C\widetilde v_i,
  \qquad 1\leq i\leq N.
$$
Define the auxiliary sequence $(U_i)_{i=0}^N$ by
\eqref{eq:pathwise-capped-certificate}, starting from
$U_0:=u_0=r_{n_0}$.  The inequalities above verify the hypotheses of
Lemma~\ref{lem:capped-certificate}, which gives $u_N\leq U_N$ and the
minimization formula
\eqref{eq:pathwise-subset-certificate} for $U_N$.

Choose the indices of the $L$ smallest values among
$\widetilde v_1,\ldots,\widetilde v_N$, resolving ties arbitrarily.  Write them
as $1\leq i_1<\cdots<i_L\leq N$ and set
$S_\star:=\{i_1,\ldots,i_L\}$.  The sequence
$(\widetilde v_{i_h})_{h=1}^L$ is in chronological order and, since
$M=N-L+1$, has decreasing rearrangement
$$
  (v_{[M]},v_{[M+1]},\ldots,v_{[N]}).
$$
The minimization in~\eqref{eq:pathwise-subset-certificate} ranges over every
subset $S$, including $S_\star$.  Hence $U_N$ is at most the value obtained
with $S=S_\star$.  Since $|S_\star|=L$, this gives
$$
  r_T=u_N\leq U_N
  \leq q^Lr_{n_0}
  +C\sum_{h=1}^Lq^{L-h}\widetilde v_{i_h}.
$$
Pairing the larger selected values with the larger weights maximizes the sum,
so the largest selected value receives weight $1$, the next largest weight
$q$, and so on.
Applying Lemma~\ref{lem:innovation-order-statistics} to each
$v_{[M+j]}$ then gives
$$
  \sum_{h=1}^Lq^{L-h}\widetilde v_{i_h}
  \leq\sum_{j=0}^{L-1}q^jv_{[M+j]}
  \leq\sum_{j=0}^{L-1}q^j\delta_{M+j}(k_0,\X).
$$
This proves~\eqref{eq:order-transfer-refined}.  Since the separation radii
are nonincreasing,
$$
  \sum_{j=0}^{L-1}q^j\delta_{M+j}(k_0,\X)
  \leq
  \delta_M(k_0,\X)\sum_{j=0}^{L-1}q^j
  =\frac{1-q^L}{1-q}\delta_M(k_0,\X),
$$
which proves~\eqref{eq:order-transfer-coarse}.
\end{proof}

\section{Kolmogorov widths, Gram determinants, and separation radii}
\label{app:width-details}

\subsection{Operator form of the Eckart--Young theorem}
\label{app:finite-domain-eckart-young}

\begin{lemma}[Eckart--Young identity]
\label{lem:finite-domain-eckart-young}
Let $\mathcal H$ be a real Hilbert space, let $m\geq1$ be an integer, and let
$S:\mathbb R^m\to\mathcal H$ be linear.  Denote the eigenvalues of $S^*S$,
counted with multiplicity, by
$\lambda_1\geq\cdots\geq\lambda_m\geq0$.  For a linear map
$A:\mathbb R^m\to\mathcal H$, its squared Hilbert--Schmidt norm is
$$
  \lVert A\rVert_{\mathrm{HS}}^2
  :=\sum_{\ell=1}^m\lVert Ae_\ell\rVert_{\mathcal H}^2,
$$
where $(e_\ell)_{\ell=1}^m$ is the standard basis of $\mathbb R^m$.  For every
integer $0\leq i\leq m$,
\begin{equation}
  \inf_{\substack{A:\mathbb R^m\to\mathcal H\ \mathrm{linear}\\
                  \operatorname{rank}(A)\leq i}}
    \lVert S-A\rVert_{\mathrm{HS}}^2
  =\sum_{\ell>i}\lambda_\ell.
  \label{eq:finite-domain-eckart-young}
\end{equation}
The infimum is attained by truncating a singular-value decomposition of $S$
after $i$ terms.
\end{lemma}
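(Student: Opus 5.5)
The plan is to reduce the operator statement to the classical finite-dimensional Eckart--Young--Mirsky theorem by restricting everything to the finite-dimensional range of $S$ plus the range of a competitor $A$. First I would take a singular-value decomposition of $S$. Let $w_1,\ldots,w_m$ be an orthonormal eigenbasis of $\mathbb R^m$ for $S^*S$ with $S^*Sw_\ell=\lambda_\ell w_\ell$. For $\lambda_\ell>0$, put $g_\ell:=\lambda_\ell^{-1/2}Sw_\ell$. These vectors are orthonormal in $\mathcal H$, and $S=\sum_{\lambda_\ell>0}\sqrt{\lambda_\ell}\,g_\ell\otimes w_\ell$. Since the Hilbert--Schmidt norm does not depend on the orthonormal basis of $\mathbb R^m$ used to compute it, $\lVert S\rVert_{\mathrm{HS}}^2=\operatorname{tr}(S^*S)$.

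For attainment, the truncation $A_i:=\sum_{\ell\leq i,\lambda_\ell>0}\sqrt{\lambda_\ell}\,g_\ell\otimes w_\ell$ has rank at most $i$. Evaluating $\lVert S-A_i\rVert_{\mathrm{HS}}^2$ in the basis $(w_\ell)$ gives exactly $\sum_{\ell>i}\lambda_\ell$, so the infimum is at most this value.

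For the lower bound, fix a linear map $A$ of rank at most $i$. Let $P$ be the orthogonal projector onto the finite-dimensional subspace $\operatorname{Range}(A)$, so that $\dim\operatorname{Range}(P)\leq i$. Since $(I-P)A=0$, the Pythagorean identity applied columnwise gives
$$
\lVert S-A\rVert_{\mathrm{HS}}^2\geq\lVert(I-P)(S-A)\rVert_{\mathrm{HS}}^2=\lVert(I-P)S\rVert_{\mathrm{HS}}^2=\operatorname{tr}(S^*S)-\operatorname{tr}(S^*PS).
\tag{$*$}
$$
It therefore remains to show that $\operatorname{tr}(S^*PS)\leq\sum_{\ell\leq i}\lambda_\ell$ for every orthogonal projector $P$ of rank at most $i$.

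I expect this trace inequality to be the only real step; it is Ky Fan's maximum principle. To prove it, write $P=\sum_{r\leq i}p_r\otimes p_r$ with $(p_r)$ orthonormal. Then $\operatorname{tr}(S^*PS)=\sum_r\lVert S^*p_r\rVert^2=\operatorname{tr}(Q\,SS^*)$, where $Q$ is the projector onto $\operatorname{span}\{p_r\}$. Since $SS^*$ has finite rank with nonzero eigenvalues $\lambda_\ell$, everything can be restricted to the finite-dimensional space spanned by the $g_\ell$ and the $p_r$. On that space, Ky Fan's inequality for the symmetric matrix $SS^*$ gives the bound directly. Alternatively, one can write $\operatorname{tr}(QSS^*)=\sum_\ell\lambda_\ell c_\ell$ with $c_\ell=\lVert Qg_\ell\rVert^2\in[0,1]$ and $\sum_\ell c_\ell\leq i$; this linear sum is maximized by putting $c_\ell=1$ on the $i$ largest eigenvalues.

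Combining the trace bound with $(*)$ yields $\lVert S-A\rVert_{\mathrm{HS}}^2\geq\sum_{\ell>i}\lambda_\ell$, which together with the truncation $A_i$ proves \eqref{eq:finite-domain-eckart-young}.
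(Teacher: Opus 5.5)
Your proof is correct, but it takes a different route from the paper's.

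The paper projects the competitor onto $\operatorname{ran}S$. Since $Se_\ell\in\operatorname{ran}S$, one has $\lVert S-A\rVert_{\mathrm{HS}}^2=\lVert S-\Pi_{\mathcal S}A\rVert_{\mathrm{HS}}^2+\lVert(I-\Pi_{\mathcal S})A\rVert_{\mathrm{HS}}^2$ with $\operatorname{rank}(\Pi_{\mathcal S}A)\leq\operatorname{rank}A$. So the infimum may be restricted to operators into the finite-dimensional space $\operatorname{ran}S$. After choosing orthonormal bases, the paper reads off both the identity and its attainment from the matrix Eckart--Young theorem, which it cites rather than proves.

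You instead project onto $\operatorname{ran}A$. This reduces the lower bound to $\lVert(I-P)S\rVert_{\mathrm{HS}}^2=\operatorname{tr}(S^*S)-\operatorname{tr}(S^*PS)$. You then prove the Ky Fan bound $\operatorname{tr}(S^*PS)\leq\sum_{\ell\leq i}\lambda_\ell$ directly, using weights $c_\ell=\lVert Pg_\ell\rVert^2\in[0,1]$ with $\sum_\ell c_\ell\leq\operatorname{rank}P\leq i$ by Bessel's inequality. The linear-programming step you leave implicit is the one-liner $\sum_\ell\lambda_\ell c_\ell-\sum_{\ell\leq i}\lambda_\ell\leq\lambda_i\bigl(\sum_\ell c_\ell-i\bigr)\leq0$ for $i\geq1$. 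Attainment comes from your explicit SVD truncation.

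What each approach buys: your argument is self-contained, since it effectively proves Eckart--Young via Ky Fan's principle, and it works intrinsically in $\mathcal H$ without coordinates. The paper's version is shorter because its only operator-theoretic content is the reduction to $\operatorname{ran}S$. The price is reliance on the matrix theorem and on identifying Hilbert--Schmidt with Frobenius norms in chosen bases. As a minor point, your $Q$ coincides with $P$, so the extra notation is unnecessary.
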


\begin{proof}
If $S=0$, the conclusion is immediate.  Assume henceforth that $S\neq0$.
Let $\mathcal S:=\operatorname{ran}S$, which is finite-dimensional and
therefore closed, and let $\Pi_{\mathcal S}$ be its orthogonal projector.  For
every linear operator $A:\mathbb R^m\to\mathcal H$, orthogonality of
$\mathcal S$ and $\mathcal S^\perp$ gives
\begin{equation}
  \lVert S-A\rVert_{\mathrm{HS}}^2
  =
  \lVert S-\Pi_{\mathcal S}A\rVert_{\mathrm{HS}}^2
  +
  \lVert(I-\Pi_{\mathcal S})A\rVert_{\mathrm{HS}}^2.
  \label{eq:eckart-young-range-projection}
\end{equation}
Moreover,
$\operatorname{rank}(\Pi_{\mathcal S}A)\leq\operatorname{rank}(A)$.  Hence the
infimum in~\eqref{eq:finite-domain-eckart-young} is unchanged if it is
restricted to operators whose range lies in $\mathcal S$.

If $B\in\mathbb R^{p\times m}$, with $p\leq m$, let
$\lambda_1(B^\top B)\geq\cdots\geq\lambda_m(B^\top B)\geq0$ be the
eigenvalues of $B^\top B$.  With $\lVert\cdot\rVert_{\mathrm F}$ denoting the
Frobenius norm, the matrix Eckart--Young theorem
\citep{eckartYoung1936} gives
\begin{equation}
  \min_{\substack{\widetilde B\in\mathbb R^{p\times m}\\
                   \operatorname{rank}(\widetilde B)\leq i}}
    \lVert B-\widetilde B\rVert_{\mathrm F}^2
  =\sum_{\ell>i}\lambda_\ell(B^\top B),
  \qquad 0\leq i\leq m.
  \label{eq:matrix-eckart-young}
\end{equation}
If $B=U\Sigma V^\top$ is a singular-value decomposition, a minimizer is
$U\Sigma_iV^\top$, where $\Sigma_i$ retains the
$\min\{i,p\}$ largest singular values and replaces the others by zero.

Put $p:=\dim\mathcal S\leq m$, and choose orthonormal bases of $\mathbb R^m$
and $\mathcal S$.  In these bases, operators from $\mathbb R^m$ to
$\mathcal S$ are represented by $p\times m$ matrices, and the
Hilbert--Schmidt norm agrees with the Frobenius norm.  If $B$ is the matrix
representing $S$, then $B^\top B$
represents $S^*S$ and has eigenvalues
$\lambda_1,\ldots,\lambda_m$.  Applying
\eqref{eq:matrix-eckart-young} to $B$ gives
\eqref{eq:finite-domain-eckart-young} and shows that the truncated
singular-value decomposition attains the infimum.
\end{proof}

\subsection{Determinant bound from Kolmogorov widths}
\label{app:width-product-proof}

\begin{proof}[Proof of Lemma~\ref{lem:width-product}]
Fix $\mathbf{x}=(x_1,\ldots,x_m)\in\mathcal D^m$, and let
$S_{\mathbf{x}}:\mathbb R^m\to\mathcal H_k$ be the synthesis operator
$$
  S_{\mathbf{x}}c:=\sum_{\ell=1}^m c_\ell k(\cdot,x_\ell).
$$
Write
$\lambda_1(\mathbf{x})\geq\cdots\geq\lambda_m(\mathbf{x})\geq0$ for the
eigenvalues of
$K_{\mathbf{x}}=S_{\mathbf{x}}^*S_{\mathbf{x}}$.  Given an integer
$0\leq i<m$ and $\varepsilon>0$, choose, by
\eqref{eq:kernel-width}, a subspace $V\subset\mathcal H_k$ with
$\dim V\leq i$ whose uniform approximation error is less than
$d_i(k,\mathcal D)+\varepsilon$, and let
$\Pi_V$ be the orthogonal projector onto~$V$.  Since
$\Pi_VS_{\mathbf{x}}$ has rank at most $i$,
Lemma~\ref{lem:finite-domain-eckart-young} gives
\begin{equation}
  \sum_{\ell>i}\lambda_\ell(\mathbf{x})
  \leq
  \lVert S_{\mathbf{x}}-\Pi_VS_{\mathbf{x}}\rVert_{\mathrm{HS}}^2
  =
  \sum_{\ell=1}^m
  \dist_{\mathcal H_k}\bigl(k(\cdot,x_\ell),V\bigr)^2
  \leq m\{d_i(k,\mathcal D)+\varepsilon\}^2.
  \label{eq:width-eigenvalue-tail}
\end{equation}
Letting $\varepsilon\downarrow0$ and using the nonincreasing order of the
eigenvalues gives, for all integers $0\leq i<j\leq m$,
\begin{equation}
  (j-i)\lambda_j(\mathbf{x})
  \leq\sum_{\ell=i+1}^j\lambda_\ell(\mathbf{x})
  \leq m d_i(k,\mathcal D)^2.
  \label{eq:eigenvalue-width-bridge}
\end{equation}
Moreover, the determinant is the product of these eigenvalues, so
$$
  \det(K_{\mathbf{x}})^{1/(2m)}
  =\biggl(\prod_{j=1}^m\lambda_j(\mathbf{x})\biggr)^{1/(2m)}.
$$
Substitute $i=i_j$ into the preceding eigenvalue bound and take square roots
for each $j$.  Multiplying the resulting inequalities over $j=1,\ldots,m$
and taking the $m$th root gives
$$
  \det(K_{\mathbf{x}})^{1/(2m)}
  \leq
  \biggl\{
    \prod_{j=1}^m
    \sqrt{\frac{m}{j-i_j}}\,d_{i_j}(k,\mathcal D)
  \biggr\}^{1/m}.
$$
Taking the supremum over $\mathbf{x}\in\mathcal D^m$ and using
\eqref{eq:gram-volume-envelope} proves
\eqref{eq:rank-profile-width-product}.  Taking
$i_j=\lfloor j/2\rfloor$ gives
$$
  \lambda_j(\mathbf{x})^{1/2}
  \leq\sqrt{\frac{2m}{j}}\,
       d_{\lfloor j/2\rfloor}(k,\mathcal D).
$$
Multiplying these inequalities and using $m!\geq(m/e)^m$ proves
\eqref{eq:width-product}.
The inequalities remain valid when an eigenvalue or a width is zero.  In
particular, $K_{\mathbf{x}}$ need not be invertible.
\end{proof}

\subsection{From width decay to separation-radius decay}
\label{app:width-envelopes-proof}

\begin{proof}[Proof of Corollary~\ref{cor:width-envelopes}]
Substitute \eqref{eq:polynomial-width-decay} into
\eqref{eq:width-product} for $m\geq2$.  Since
$d_0(k,\mathcal D)\leq1$ and
$\lfloor j/2\rfloor\geq j/3$ for $j\geq2$, we obtain
\begin{equation}
  \delta_m(k,\mathcal D)
  \leq \sqrt{2e}\,
    A^{(m-1)/m}3^\alpha(m!)^{-\alpha/m}
  \leq C_{\mathrm w}m^{-\alpha}.
  \label{eq:polynomial-width-proof-bound}
\end{equation}
The bound $m!\geq(m/e)^m$ allows $C_{\mathrm w}$ to be chosen independently
of $m$.  The case $m=1$ follows from
$\delta_1(k,\mathcal D)=d_0(k,\mathcal D)\leq1$, after increasing
$C_{\mathrm w}$ if necessary.

To prove \eqref{eq:stretched-exponential-separation-decay}, set
$i_j=j-1$ in \eqref{eq:rank-profile-width-product}.  Then
\begin{equation}
  \delta_m(k,\mathcal D)
  \leq
  \sqrt m
  \left\{\prod_{i=0}^{m-1}d_i(k,\mathcal D)\right\}^{1/m}.
  \label{eq:stretched-width-product-bound}
\end{equation}
If $\delta_m(k,\mathcal D)=0$, the inequality in
\eqref{eq:stretched-exponential-separation-decay} holds at that $m$.
Otherwise, the
product in \eqref{eq:stretched-width-product-bound} is positive, so
$d_i(k,\mathcal D)>0$ for every $0\leq i<m$ and logarithms may be taken.
Since $d_0(k,\mathcal D)\leq1$,
\eqref{eq:stretched-exponential-width-decay} gives
\begin{equation}
  \log\delta_m(k,\mathcal D)
  \leq
  \frac12\log m
  +\frac{m-1}{m}\log A
  -\frac{b}{m}\sum_{i=1}^{m-1}i^\gamma\log(ei).
  \label{eq:stretched-width-log-bound}
\end{equation}
Moreover, an integral comparison gives
\begin{equation}
  \sum_{i=1}^{m-1}i^\gamma\log(ei)
  =
  \frac{m^{1+\gamma}\log m}{1+\gamma}
  +O(m^{1+\gamma}),
  \label{eq:stretched-width-sum-expansion}
\end{equation}
and hence
\begin{equation}
  \frac1m\sum_{i=1}^{m-1}i^\gamma\log(ei)
  =
  \left\{\frac1{1+\gamma}
  +O\left(\frac1{\log m}\right)\right\}m^\gamma\log m.
  \label{eq:stretched-width-average-asymptotic}
\end{equation}
Let $0<b'<b/(1+\gamma)$, and choose
$\widetilde b$ with $b'<\widetilde b<b/(1+\gamma)$.
Substituting \eqref{eq:stretched-width-average-asymptotic} into
\eqref{eq:stretched-width-log-bound} gives, for all sufficiently large~$m$
such that
$\delta_m(k,\mathcal D)>0$,
$$
  \log\delta_m(k,\mathcal D)
  \leq-\widetilde b\,m^\gamma\log m.
$$
For all sufficiently large~$m$,
$\widetilde b\log m\geq b'\log(em)$.  Hence
$\delta_m(k,\mathcal D)\leq\exp\{-b'm^\gamma\log(em)\}$ for those $m$.
Since
$\delta_m(k,\mathcal D)\leq
\delta_1(k,\mathcal D)=d_0(k,\mathcal D)\leq1$,
increasing $A'$ if necessary extends
\eqref{eq:stretched-exponential-separation-decay} to the finitely many
smaller values of $m$.
\end{proof}

\section{Separation-radius decay and regret rates}
\label{app:kernel-estimates}

This appendix proves Propositions~\ref{prop:polynomial-transfer}
and~\ref{prop:log-enhanced-transfer} and the Mat\'ern and
squared-exponential separation-radius estimates used in
Section~\ref{sec:kernels}.  The bounds valid in every dimension use
Cartesian-grid approximations.  The sharper one-dimensional
squared-exponential argument instead combines the conditional-variance
estimate proved by \citet[Theorem~2]{yarotsky2013} with
\eqref{eq:optimizer-uncertainty-recursion}.

\subsection{Regret from polynomial decay of the separation radii}
\label{app:polynomial-transfer-proof}

\begin{proof}[Proof of Proposition~\ref{prop:polynomial-transfer}]
By \eqref{eq:regret-monotonicity},
\eqref{eq:selected-point-innovation}, and the one-step regret bound
\eqref{eq:one-step}, the simple-regret sequence and the selected-point
innovation norms satisfy the hypotheses of
Theorem~\ref{thm:order-statistic-transfer}.  If $q=0$, take $L=1$, so
$M=N$.  Theorem~\ref{thm:order-statistic-transfer} then gives
$r_T\leq C\delta_N(k_0,\X)$, and the result follows
from~\eqref{eq:polynomial-envelope}.  Suppose $0<q<1$.  With
$N=T-n_0$, choose
$$
  L:=
  \left\lceil
    \frac{\alpha+1}{-\log q}\log(eN)
  \right\rceil,
  \qquad
  M:=N-L+1.
$$
For all sufficiently large~$N$, this choice satisfies $1\leq L\leq N$.
Moreover, $L=O(\log N)$,~$M=N-O(\log N)$, and
$q^L\leq(eN)^{-\alpha-1}$.  Theorem~\ref{thm:order-statistic-transfer}
and~\eqref{eq:polynomial-envelope} give
$$
  r_T
  \leq q^Lr_{n_0}
  +\frac{CA}{1-q}M^{-\alpha}\{\log(eM)\}^{\beta}.
$$
The bound $q^L\leq(eN)^{-\alpha-1}$ gives
$q^Lr_{n_0}=O(N^{-\alpha-1})$.  Since $M=N-L+1=N-O(\log N)$, we have
$M\sim N$, and therefore
$$
  M^{-\alpha}\{\log(eM)\}^{\beta}
  =O\bigl(N^{-\alpha}(\log N)^\beta\bigr).
$$
Combining these estimates,
$$
  r_T=O\bigl(N^{-\alpha}(\log N)^\beta\bigr),
$$
which proves~\eqref{eq:polynomial-rate}.
For every objective function $f$ satisfying~\eqref{eq:rkhs-ball},
Remark~\ref{rem:ei-constants} and~\eqref{eq:selected-point-innovation} give
$$
  r_{n+1}\leq q_Br_n+C_Bv_{n+1},
$$
where $0\leq q_B<1$ and $C_B>0$ are independent of $f$.
Moreover, the reproducing property, \eqref{eq:normalization},
and~\eqref{eq:rkhs-ball} give $r_{n_0}\leq2B$.
The residual kernel $k_0$ depends only on the kernel and the initial
query points.  Thus \eqref{eq:polynomial-envelope}, including its starting rank, is
independent of $f$.  If $q_B=0$, the preceding argument applies with $L=1$.
If $0<q_B<1$, use
$$
  L=
  \left\lceil
    \frac{\alpha+1}{-\log q_B}\log(eN)
  \right\rceil.
$$
Together with $r_{n_0}\leq2B$, these choices give
\eqref{eq:polynomial-rate} with an implied constant and a starting index
that do not depend on the choice of $f\in\Hk$ with
$\lVert f\rVert_{\Hk}\leq B$.
\end{proof}

\subsection{Mat\'ern separation-radius decay}
\label{app:matern-separation-radius}

\begin{lemma}[Mat\'ern power functions and widths]
\label{lem:matern-widths}
Let $k_\nu$ be a stationary isotropic Mat\'ern kernel on $\R^d$, normalized
to have unit diagonal, with fixed lengthscale and smoothness~$\nu>0$.
Let $\mathcal Q\subset\R^d$ be a bounded open cube and set
$k_{\nu,\mathcal Q}:=k_\nu|_{\mathcal Q\times\mathcal Q}$.  If
$\mathcal Y\subset\mathcal Q$ is finite and nonempty, define its fill
distance and power function by
\begin{equation}
  h_{\mathcal Y,\mathcal Q}
  :=\sup_{x\in\mathcal Q}\min_{y\in\mathcal Y}\lVert x-y\rVert,
  \qquad
  P_{\mathcal Y}^{\mathcal Q}(x):=
  \dist_{\mathcal H_{k_{\nu,\mathcal Q}}}
  \left(
    k_{\nu,\mathcal Q}(\cdot,x),
    \operatorname{span}
    \{k_{\nu,\mathcal Q}(\cdot,y),\,y\in\mathcal Y\}
  \right).
  \label{eq:matern-cube-power-definitions}
\end{equation}
There are constants $C_{\mathcal Q},h_0>0$ such that, for every finite
nonempty $\mathcal Y\subset\mathcal Q$,
\begin{equation}
  \sup_{x\in\mathcal Q}P_{\mathcal Y}^{\mathcal Q}(x)
  \leq C_{\mathcal Q} h_{\mathcal Y,\mathcal Q}^{\nu}
  \qquad\text{whenever }h_{\mathcal Y,\mathcal Q}\leq h_0.
  \label{eq:matern-cube-power}
\end{equation}
Consequently, for every nonempty compact $\X\subset\R^d$, there is
$A_\X<\infty$ such that
\begin{equation}
  d_m(k_\nu,\X)\leq A_\X m^{-\nu/d},
  \qquad m\geq1.
  \label{eq:matern-width-appendix}
\end{equation}
The constant $A_\X$ depends only on the kernel parameters and on the cube
chosen to contain~$\X$.  No regularity of $\partial\X$ is required.
\end{lemma}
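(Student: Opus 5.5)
The power-function estimate \eqref{eq:matern-cube-power} comes from \citet[Theorem~3.2]{schabackWendland2002}, or equivalently \citet[Theorem~11.11]{wendland2005}. The width bound \eqref{eq:matern-width-appendix} is then obtained from it with a grid construction and a restriction argument.

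\textbf{Step 1: spectral density and norm equivalence.} The Matérn kernel $k_\nu$ is translation invariant. Its Fourier transform satisfies $\widehat{k_\nu}(\omega)\asymp(1+\lVert\omega\rVert^2)^{-\nu-d/2}$, so its native space on $\R^d$ is norm-equivalent to the Sobolev space $H^{\nu+d/2}(\R^d)$.

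\textbf{Step 2: the cube estimate.} The open cube $\mathcal Q$ is bounded and satisfies an interior cone condition with some angle and radius. I will quote the power-function bound for kernels with algebraically decaying Fourier transform. It gives $P_{\mathcal Y}^{\mathcal Q}(x)\le C_{\mathcal Q}h_{\mathcal Y,\mathcal Q}^{\tau-d/2}$ for $\tau=\nu+d/2$ whenever $h_{\mathcal Y,\mathcal Q}\le h_0$. Here $h_0$ and $C_{\mathcal Q}$ depend only on the cone parameters, the diameter of $\mathcal Q$, and the kernel parameters.

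This is the step I expect to be the main obstacle. The classical theorem is often stated for integer or restricted smoothness, or with the power function computed for the kernel on $\R^d$ rather than on $\mathcal Q$. Two points need care.
\begin{itemize}
\item Restriction from $\R^d$ to $\mathcal Q$. The power function of the restricted kernel equals the unrestricted one, because the power function depends only on kernel values on $\mathcal Q$.
\item Fractional exponents. The local-polynomial-reproduction argument handles fractional exponents $\nu$ through the Taylor/Fourier route of \citet{schabackWendland2002}. Non-integer $\tau$ is covered there, so I will rely on it.
\end{itemize}

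\textbf{Step 3: from the cube estimate to widths.} Choose $\mathcal Q\supset\X$. For $m\ge1$, let $p=\lfloor m^{1/d}\rfloor\ge1$ and take the Cartesian grid $\mathcal Y$ of $p^d\le m$ cell midpoints in $\mathcal Q$. Its fill distance is at most $\sqrt d\,\ell/(2p)\le c\,m^{-1/d}$, where $\ell$ is the side length of $\mathcal Q$.

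Define $V:=\{g|_\X: g\in\operatorname{span}\{k_{\nu,\mathcal Q}(\cdot,y),\,y\in\mathcal Y\}\}$, so $\dim V\le m$. The restriction map $R:\mathcal H_{k_{\nu,\mathcal Q}}\to\mathcal H_{k_\nu|_\X}$ is a contraction that sends $k_{\nu,\mathcal Q}(\cdot,x)$ to $k_\nu|_\X(\cdot,x)$ for $x\in\X$. This is the standard restriction theorem for RKHSs, with norm given by the minimal extension. Hence
\[
\dist\bigl(k_\nu|_\X(\cdot,x),V\bigr)\le P^{\mathcal Q}_{\mathcal Y}(x)\le C_{\mathcal Q}c^\nu m^{-\nu/d}.
\]
This holds for all $m$ with $c\,m^{-1/d}\le h_0$. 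For the finitely many smaller $m$, use $d_m\le d_0\le1$ and enlarge the constant to get $A_\X$. No property of $\X$ beyond $\X\subset\mathcal Q$ is used, so no regularity of $\partial\X$ is required.
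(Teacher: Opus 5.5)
Your proposal is correct and follows essentially the same route as the paper. The cube estimate is quoted from \citet[Theorem~3.2]{schabackWendland2002} with spectral exponent $s_\infty=2\nu$. The width bound then comes from a midpoint grid with $\lfloor m^{1/d}\rfloor^d\le m$ points, the contractive restriction map into $\mathcal H_{k_\nu|_{\X}}$, and $d_m(k_\nu,\X)\le d_0(k_\nu,\X)=1$ for the finitely many small $m$.

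The differences are minor. The Sobolev norm equivalence in your Step~1 is not needed. What the cited theorem actually requires, in its order-zero case, is strict positive definiteness. The paper states this explicitly, deriving it from the everywhere-positive Mat\'ern spectral density, and your two-sided spectral bound already implies it.
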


\begin{proof}
Write $k_\nu(x,y)=\Phi_\nu(x-y)$.  There are constants $c_0,c_1>0$,
depending only on the kernel parameters, such that the Mat\'ern spectral
density is
\begin{equation}
  \widehat\Phi_\nu(\omega)
  =
  c_0
  \bigl(c_1+\lVert\omega\rVert^2\bigr)^{-(\nu+d/2)}.
  \label{eq:matern-spectral-density}
\end{equation}
The spectral density is everywhere positive, so $\Phi_\nu$ is strictly
positive definite \citep[Corollary~6.9]{wendland2005}.  As
$\lVert\omega\rVert\to\infty$,
\begin{equation}
  \widehat\Phi_\nu(\omega)
  \asymp
  \lVert\omega\rVert^{-d-2\nu}.
  \label{eq:matern-spectral-tail}
\end{equation}
To bound $P_{\mathcal Y}^{\mathcal Q}$, we apply the power-function estimate
in \citet[Theorem~3.2]{schabackWendland2002}.  The theorem assumes
$\widehat\Phi(\omega)\asymp
\lVert\omega\rVert^{-d-s_\infty}$, and comparison with
\eqref{eq:matern-spectral-tail} gives $s_\infty=2\nu$.  We use the theorem
with $m=0$, where $m$ denotes the order of conditional positive
definiteness.  For $m=0$, no polynomial moment conditions are imposed, and
the condition reduces to strict positive definiteness, established above for
$\Phi_\nu$.  A bounded open cube satisfies the required interior cone
condition.  The power function in the theorem equals
the RKHS residual norm in \eqref{eq:matern-cube-power-definitions}.  Hence
the theorem gives
$$
  P_{\mathcal Y}^{\mathcal Q}(x)
  \leq C_{\mathcal Q} h_{\mathcal Y,\mathcal Q}^{s_\infty/2}
  =C_{\mathcal Q} h_{\mathcal Y,\mathcal Q}^{\nu}
$$
for every $x\in\mathcal Q$ once $h_{\mathcal Y,\mathcal Q}\leq h_0$, proving
\eqref{eq:matern-cube-power}.

Now fix a nonempty compact $\X\subset\R^d$ and choose a bounded open cube
$\mathcal Q$ of side length~$a_{\mathcal Q}>0$ containing $\X$.  For each
integer $n\geq1$, partition $\mathcal Q$ into $n^d$ congruent cubes and let
$\mathcal Y_n$ be their centers.
Then
\begin{equation}
  |\mathcal Y_n|=n^d,
  \qquad
  h_{\mathcal Y_n,\mathcal Q}\leq\frac{a_{\mathcal Q}\sqrt d}{2n}.
  \label{eq:matern-grid}
\end{equation}
Given $m\geq1$, put $n_m:=\lfloor m^{1/d}\rfloor$ and
$\mathcal Y'_m:=\mathcal Y_{n_m}$.  Since $n_m^d\leq m$ and
$n_m\geq m^{1/d}/2$, the space
$$
  V_m^{\mathcal Q}:=
  \operatorname{span}
  \{k_{\nu,\mathcal Q}(\cdot,y),\,y\in\mathcal Y'_m\}
$$
has dimension at most $m$, while
$h_{\mathcal Y'_m,\mathcal Q}\leq
a_{\mathcal Q}\sqrt d\,m^{-1/d}$.  Hence, for all sufficiently
large~$m$,~\eqref{eq:matern-cube-power}, applied with
$\mathcal Y=\mathcal Y'_m$, gives
\begin{equation}
  \sup_{x\in\mathcal Q}
  \dist_{\mathcal H_{k_{\nu,\mathcal Q}}}
  \bigl(k_{\nu,\mathcal Q}(\cdot,x),V_m^{\mathcal Q}\bigr)
  \leq C_{\mathcal Q}(a_{\mathcal Q}\sqrt d)^\nu m^{-\nu/d}.
  \label{eq:matern-cube-width}
\end{equation}

Let $k_{\nu,\X}:=k_\nu|_{\X\times\X}$, and let
$$
  R_{\mathcal Q,\X}:\mathcal H_{k_{\nu,\mathcal Q}}
  \longrightarrow\mathcal H_{k_{\nu,\X}},
  \qquad
  R_{\mathcal Q,\X}g:=g|_\X.
$$
By the restriction theorem
\citep[Section~5]{aronszajn1950},
$$
  \lVert R_{\mathcal Q,\X}g\rVert_{\mathcal H_{k_{\nu,\X}}}
  \leq
  \lVert g\rVert_{\mathcal H_{k_{\nu,\mathcal Q}}},
  \qquad g\in\mathcal H_{k_{\nu,\mathcal Q}}.
$$
Moreover,
$R_{\mathcal Q,\X}k_{\nu,\mathcal Q}(\cdot,x)=k_{\nu,\X}(\cdot,x)$
for $x\in\X$.
The space $W_m:=R_{\mathcal Q,\X}V_m^{\mathcal Q}$ has dimension at most
$m$.  The grid points in $\mathcal Y'_m$ need not belong to $\X$.  Their
restricted kernel
translates belong to $\mathcal H_{k_{\nu,\X}}$ and span $W_m$, which is
admissible in \eqref{eq:kernel-width}.  The interior cone condition is
imposed on the auxiliary cube $\mathcal Q$, while $\X$ may have empty
interior or an irregular boundary.

For every $x\in\X$,
\begin{align*}
  \dist_{\mathcal H_{k_{\nu,\X}}}
  \bigl(k_{\nu,\X}(\cdot,x),W_m\bigr)
  &=
  \inf_{v\in V_m^{\mathcal Q}}
  \left\lVert
    R_{\mathcal Q,\X}\bigl(k_{\nu,\mathcal Q}(\cdot,x)-v\bigr)
  \right\rVert_{\mathcal H_{k_{\nu,\X}}}\\
  &\leq
  \inf_{v\in V_m^{\mathcal Q}}
  \left\lVert k_{\nu,\mathcal Q}(\cdot,x)-v
  \right\rVert_{\mathcal H_{k_{\nu,\mathcal Q}}}\\
  &=
  \dist_{\mathcal H_{k_{\nu,\mathcal Q}}}
  \bigl(k_{\nu,\mathcal Q}(\cdot,x),V_m^{\mathcal Q}\bigr).
\end{align*}

Taking the supremum over $x\in\X$ and using
\eqref{eq:matern-cube-width} proves
\eqref{eq:matern-width-appendix} for all sufficiently large~$m$.
Since $k_\nu(x,x)=1$, one has
$d_m(k_\nu,\X)\leq d_0(k_\nu,\X)=1$.  Increasing $A_\X$ if necessary gives
\eqref{eq:matern-width-appendix} for the finitely many smaller ranks.

\end{proof}

\begin{proof}[Proof of Proposition~\ref{prop:matern-separation-radius}]
Lemma~\ref{lem:matern-widths} gives
\eqref{eq:matern-width-appendix}.  Since $k_\nu(x,x)=1$,
the polynomial case of Corollary~\ref{cor:width-envelopes}, with
$\alpha=\nu/d$, then yields~\eqref{eq:matern-innovation}.
\end{proof}

\subsection{Simple-regret rate for EI with a Mat\'ern kernel}
\label{app:matern-rate-proof}

\begin{proof}[Proof of Theorem~\ref{thm:matern-rate}]
By Lemma~\ref{lem:residual-envelope-domination},
$\delta_m(k_0,\X)\leq\delta_m(k_\nu,\X)$, so
$\delta_m(k_0,\X)=O(m^{-\nu/d})$.
Proposition~\ref{prop:polynomial-transfer}, with $\alpha=\nu/d$, $\beta=0$,
and $q_B,C_B$, then gives~\eqref{eq:matern-rate}.  The constants $q_B,C_B$
depend on $B,\sigma,\eta$.  The implied constant and the rank from which
\eqref{eq:matern-innovation} holds depend only on the fixed kernel and domain.
Since $r_{n_0}\leq2B$, the implied constant and starting index
in~\eqref{eq:matern-rate} do not depend on the choice of
$f\in\Hk$ with $\lVert f\rVert_{\Hk}\leq B$.
\end{proof}

\subsection{Regret from exponential bounds on the separation radii}
\label{app:log-enhanced-transfer-proof}

\begin{lemma}[Choice of $L$ in the finite-budget bound]
\label{lem:log-enhanced-choice}
Assume~\eqref{eq:log-enhanced-envelope} and fix $q\in[0,1)$.
Recall from Proposition~\ref{prop:log-enhanced-transfer} that
\begin{equation}
  \psi_d(N)=\min\{N,N^{1/d}\log(eN)\}.
  \label{eq:log-enhanced-scale}
\end{equation}
There are constants $\widetilde A,\widetilde b>0$ and an integer
$N_1\geq1$ such that, for every integer $N\geq N_1$, the choices
\begin{equation}
  L=
  \begin{cases}
    1, & q=0,\\
    \lfloor N/2\rfloor, & 0<q<1,\ d=1,\\
    \lceil N^{1/d}\log(eN)\rceil, & 0<q<1,\ d\geq2,
  \end{cases}
  \qquad
  M:=N-L+1,
  \label{eq:log-enhanced-indices}
\end{equation}
satisfy $1\leq L\leq N$ and
\begin{equation}
  q^L+\delta_M(k_0,\X)
  \leq
  \widetilde A\exp\{-\widetilde b\psi_d(N)\}.
  \label{eq:log-enhanced-index-choice}
\end{equation}
The constants and $N_1$ depend only on $A,b,d,q$ and the rank
from which~\eqref{eq:log-enhanced-envelope} holds.
\end{lemma}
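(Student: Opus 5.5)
The plan is a case analysis following the three branches of \eqref{eq:log-enhanced-indices}. In each branch I first check $1\leq L\leq N$ for large $N$, and then bound $q^L$ and $\delta_M(k_0,\X)$ separately by a multiple of $\exp\{-c\,\psi_d(N)\}$. Let $m_0$ be the rank from which \eqref{eq:log-enhanced-envelope} holds. Since $\delta_m(k_0,\X)\leq1$, the envelope can be used whenever $M\geq\max\{m_0,2\}$, so $\log M\geq\log2>0$. In the end I take $\widetilde b$ to be the minimum of the rates obtained in the three branches and $\widetilde A$ large enough, with $N_1$ chosen so that all the ``for large $N$'' conditions hold. Constants that depend on $q$ only through $-\log q$ are allowed, as the statement says.

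\emph{Case $q=0$.} Here $q^L=0$ and $M=N$. Thus $\delta_N(k_0,\X)\leq A\exp\{-bN^{1/d}\log N\}$. For $N\geq e$ we have $\log N\geq\tfrac12\log(eN)$, so $N^{1/d}\log N\geq\tfrac12N^{1/d}\log(eN)\geq\tfrac12\psi_d(N)$. This case is immediate.

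\emph{Case $0<q<1$, $d=1$.} Here $\psi_1(N)=N$, because $N\log(eN)\geq N$. With $L=\lfloor N/2\rfloor\geq N/3$ for $N\geq2$, we get $q^L\leq\exp\{-(-\log q)N/3\}$. Also $M=N-L+1\geq N/2$, so $\delta_M\leq A\exp\{-b(N/2)\log(N/2)\}\leq A\exp\{-bN/2\}$ once $\log(N/2)\geq1$. Both terms are then $O(e^{-cN})$.

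\emph{Case $0<q<1$, $d\geq2$.} Take $L=\lceil N^{1/d}\log(eN)\rceil$. Since $N^{1/d}\log(eN)=o(N)$ for $d\geq2$, we have $L\leq N/2$ for large $N$, so $1\leq L\leq N$ and $M\geq N/2$. For large $N$, $\psi_d(N)=N^{1/d}\log(eN)$. Then $q^L\leq\exp\{-(-\log q)\psi_d(N)\}$. For the width term,
$$M^{1/d}\log M\geq 2^{-1/d}N^{1/d}\log(N/2)\geq 2^{-1/d}\cdot\tfrac12N^{1/d}\log(eN)$$
for large $N$, because $\log(N/2)/\log(eN)\to1$. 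This gives $\delta_M\leq A\exp\{-b2^{-1-1/d}\psi_d(N)\}$.

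The only real obstacle is the bookkeeping in the last case. One must confirm that the choice of $L$ is small enough relative to $N$ that $M$ stays comparable to $N$. One must also confirm that $\log(eN)$ versus $\log M$ only costs a constant factor in the exponent. Both follow from elementary limits, and $N_1$ is chosen to cover them together with $M\geq\max\{m_0,2\}$.
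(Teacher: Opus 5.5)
Your proposal is correct and follows essentially the same three-branch case analysis as the paper's proof, with the same choice of exponent constants, for example $\min\{|\log q|,\,b\,2^{-1-1/d}\}$ when $0<q<1$ and $d\geq2$. The only cosmetic difference is that you handle $q=0$ uniformly in $d$ via $\psi_d(N)\leq N^{1/d}\log(eN)$, whereas the paper treats $d=1$ and $d\geq2$ separately.
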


\begin{proof}
If $q=0$, \eqref{eq:log-enhanced-indices} gives $L=1$ and $M=N$.
Thus $q^L=0$, and the left-hand side of
\eqref{eq:log-enhanced-index-choice} is
$\delta_N(k_0,\X)$.  When $d=1$,
\eqref{eq:log-enhanced-scale} gives $\psi_1(N)=N$, and, for all sufficiently
large $N$,
$$
  \delta_N(k_0,\X)
  \leq A\exp\{-bN\log N\}
  \leq A\exp\{-bN\}
  =A\exp\{-b\psi_1(N)\}.
$$
When $d\geq2$, for all sufficiently large $N$,
\eqref{eq:log-enhanced-scale} gives
$\psi_d(N)=N^{1/d}\log(eN)$, and
$\log N\geq\frac12\log(eN)$.  Hence
$$
  \delta_N(k_0,\X)
  \leq A\exp\{-bN^{1/d}\log N\}
  \leq A\exp\{-\tfrac b2\psi_d(N)\}.
$$
This proves~\eqref{eq:log-enhanced-index-choice} when $q=0$.

Suppose $0<q<1$.  When $d=1$, \eqref{eq:log-enhanced-indices} gives
$L=\lfloor N/2\rfloor$ and $M=N-L+1$.  For all sufficiently large $N$,
$$
  L\geq\frac N3,
  \qquad M\geq\frac N2,
  \qquad \log M\geq\frac12\log(eN),
$$
and hence
$$
  q^L\leq e^{-|\log q|N/3},
  \qquad
  \delta_M(k_0,\X)
  \leq A\exp\{-\tfrac b4N\log(eN)\}.
$$
Since $\psi_1(N)=N$ and $\log(eN)\geq1$, setting
$$
  c_1:=\min\{|\log q|/3,b/4\}>0
$$
gives
$$
  q^L+\delta_M(k_0,\X)
  \leq(1+A)\exp\{-c_1\psi_1(N)\}.
$$

When $d\geq2$, \eqref{eq:log-enhanced-indices} gives
$L=\lceil N^{1/d}\log(eN)\rceil$ and $M=N-L+1$.
For all sufficiently large $N$, this choice satisfies $1\leq L\leq N$.
Moreover, $M\geq N/2$ and
$\log M\geq\frac12\log(eN)$.  Therefore
$$
  q^L\leq\exp\{-|\log q|N^{1/d}\log(eN)\},
  \qquad
  M^{1/d}\log M
  \geq2^{-1-1/d}N^{1/d}\log(eN).
$$
By~\eqref{eq:log-enhanced-envelope} and the identity
$\psi_d(N)=N^{1/d}\log(eN)$, setting
$$
  c_2:=\min\{|\log q|,b\,2^{-1-1/d}\}>0
$$
gives
$$
  q^L+\delta_M(k_0,\X)
  \leq(1+A)\exp\{-c_2\psi_d(N)\}.
$$
These estimates prove
\eqref{eq:log-enhanced-index-choice}.
\end{proof}

\begin{proof}[Proof of Proposition~\ref{prop:log-enhanced-transfer}]
By \eqref{eq:regret-monotonicity}, \eqref{eq:selected-point-innovation}, and
the one-step regret bound \eqref{eq:one-step}, the simple-regret sequence and
selected-point innovation norms satisfy the hypotheses of
Theorem~\ref{thm:order-statistic-transfer}.
For the values of $L$ and $M$ in~\eqref{eq:log-enhanced-indices},
\eqref{eq:order-transfer-coarse} gives
$$
  \begin{aligned}
  r_T
  &\leq
  r_{n_0}q^L+\frac{C}{1-q}\delta_M(k_0,\X)\\
  &\leq
  \left(r_{n_0}+\frac{C}{1-q}\right)
  \widetilde A\exp\{-\widetilde b\psi_d(N)\}.
  \end{aligned}
$$
This proves~\eqref{eq:log-enhanced-rate}.

Remark~\ref{rem:ei-constants} gives
$q\leq q_B<1$ and $C\leq C_B$ for every $f\in\Hk$ with
$\lVert f\rVert_{\Hk}\leq B$.
Since $r_n\geq0$ and $s_n(x_{n+1})\geq0$, the one-step regret bound
remains valid with $q_B,C_B$ in place of $q,C$.  Moreover,
\eqref{eq:normalization} and~\eqref{eq:rkhs-ball} give
$r_{n_0}\leq2B$.  For the fixed initial design,
\eqref{eq:log-enhanced-envelope}, including its starting rank, is independent
of $f$.  Applying Lemma~\ref{lem:log-enhanced-choice} with $q=q_B$ and
\eqref{eq:order-transfer-coarse} with $q_B,C_B$ gives constants and a
starting index independent of $f$.
\end{proof}

\subsection{Squared-exponential separation-radius decay}
\label{app:se-separation-radius}

\begin{lemma}[Squared-exponential power function on a cube]
\label{lem:se-cube-power}
Let $\mathcal Q\subset\R^d$ be a closed cube of positive side length, fix
$\varrho>0$,
and let
$$
  k_{\mathcal Q}(x,y):=
  \exp\biggl(-\frac{\lVert x-y\rVert^2}{2\varrho^2}\biggr),
  \qquad x,y\in\mathcal Q.
$$
Define, for every finite nonempty $\mathcal Y\subset\mathcal Q$,
\begin{equation}
  h_{\mathcal Y,\mathcal Q}
  :=\sup_{x\in\mathcal Q}\min_{y\in\mathcal Y}\lVert x-y\rVert,
  \qquad
  V_{\mathcal Y}^{\mathcal Q}
  :=\operatorname{span}\{k_{\mathcal Q}(\cdot,y),\,y\in\mathcal Y\}.
  \label{eq:se-cube-power-definitions}
\end{equation}
There are constants $c_0>0$ and $h_0\in(0,1)$, depending only on
$d,\varrho,\mathcal Q$, such that
\begin{equation}
  \sup_{x\in\mathcal Q}
  \dist_{\mathcal H_{k_{\mathcal Q}}}
  \bigl(k_{\mathcal Q}(\cdot,x),V_{\mathcal Y}^{\mathcal Q}\bigr)
  \leq
  \exp\left\{-c_0
    \frac{|\log h_{\mathcal Y,\mathcal Q}|}
         {h_{\mathcal Y,\mathcal Q}}\right\},
  \label{eq:se-cube-power}
\end{equation}
whenever $h_{\mathcal Y,\mathcal Q}\leq h_0$.
\end{lemma}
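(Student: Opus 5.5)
The bound \eqref{eq:se-cube-power} is a direct consequence of the Gaussian power-function estimate in \citet[Theorem~11.22]{wendland2005}, together with the identification of the power function with the RKHS residual norm. I would proceed in three steps.

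First, identify the quantity on the left of \eqref{eq:se-cube-power}. For finite nonempty $\mathcal Y\subset\mathcal Q$, the distance $\dist_{\mathcal H_{k_{\mathcal Q}}}(k_{\mathcal Q}(\cdot,x),V_{\mathcal Y}^{\mathcal Q})$ is exactly Wendland's power function $P_{\Phi,\mathcal Y}(x)$ for $\Phi(z)=\exp(-\lVert z\rVert^2/(2\varrho^2))$. This follows from \eqref{eq:posterior-formulas} and \eqref{eq:power-function} applied with the kernel $k_{\mathcal Q}$. The Gaussian is strictly positive definite because its Fourier transform is positive, so the theorem applies with no polynomial reproduction. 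The cube $\mathcal Q$ is compact and satisfies an interior cone condition, so it meets the domain hypotheses of Theorem~11.22.

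Second, invoke Theorem~11.22 together with Equation~(11.11). These give constants $c>0$ and $h_1>0$, depending only on $d$, $\varrho$ and $\mathcal Q$, such that $\lVert P_{\Phi,\mathcal Y}\rVert_{L^\infty(\mathcal Q)}^2\leq\exp\{-c|\log h|/h\}$ whenever $h=h_{\mathcal Y,\mathcal Q}\leq h_1$. Wendland states this estimate for the squared power function in the form $\exp(c\log h/h)$, which is the same thing because $\log h<0$ for $h<1$. I expect the main obstacle to be this bookkeeping. One must check that the scaling $\alpha=1/(2\varrho^2)$ of the Gaussian only affects the constants. One must also check that Wendland's fill distance is computed over the domain $\Omega=\mathcal Q$, exactly as $h_{\mathcal Y,\mathcal Q}$ is defined here. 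Finally, one must track whether the bound is stated for $P$ or for $P^2$.

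Third, take square roots, which gives \eqref{eq:se-cube-power} with $c_0=c/2$. Then set $h_0:=\min\{h_1,1/2\}\in(0,1)$, so that $|\log h|$ is positive and the estimate is meaningful on the whole range $h\leq h_0$. The supremum over $x\in\mathcal Q$ is already covered, since the theorem bounds the $L^\infty(\mathcal Q)$ norm. If some constant prefactor $A_0$ appears in the cited form, I would absorb it by shrinking $c_0$ and $h_0$. This works because $|\log h|/h\to\infty$ as $h\downarrow0$, so for small $h$ one has $A_0e^{-c|\log h|/h}\leq e^{-(c/2)|\log h|/h}$.
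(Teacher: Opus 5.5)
Your proposal is correct and takes essentially the same route as the paper: identify the RKHS distance with the power function, apply Wendland's Theorem~11.22 with Equation~(11.11) on the cube, and absorb constants. The paper differs only cosmetically. It uses the theorem as a uniform interpolation-error bound $\sup_{x\in\mathcal Q}|u(x)-I_{\mathcal Y}u(x)|\leq\exp\{-c_0|\log h_{\mathcal Y,\mathcal Q}|/h_{\mathcal Y,\mathcal Q}\}\lVert u\rVert_{\mathcal H_{k_{\mathcal Q}}}$, converts it through $\dist_{\mathcal H_{k_{\mathcal Q}}}(k_{\mathcal Q}(\cdot,x),V_{\mathcal Y}^{\mathcal Q})=\sup_{\lVert u\rVert\leq1}|u(x)-I_{\mathcal Y}u(x)|$, and makes your ``scaling only affects constants'' check explicit by verifying $|g^{(j)}(t)|\leq(2\varrho^2)^{-j}$ for $g(t)=e^{-t/(2\varrho^2)}$.
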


\begin{proof}
The spectral density of the squared-exponential kernel on $\R^d$ is positive
everywhere, so its restriction $k_{\mathcal Q}$ is strictly positive
definite \citep[Corollary~6.9]{wendland2005}.  Hence, for every
$u\in\mathcal H_{k_{\mathcal Q}}$, there is a unique
$I_{\mathcal Y}u\in V_{\mathcal Y}^{\mathcal Q}$ that agrees with
$u$ on $\mathcal Y$.

To apply \citet[Theorem~11.22]{wendland2005} to the squared-exponential
kernel, take the auxiliary function
$$
  g(t):=e^{-t/(2\varrho^2)}
  \quad(t\geq0),
$$
because $k_{\mathcal Q}(x,y)=g(\lVert x-y\rVert^2)$.
For every integer $j\geq0$,
$$
  |g^{(j)}(t)|
  =(2\varrho^2)^{-j}e^{-t/(2\varrho^2)}
  \leq(2\varrho^2)^{-j}
  \qquad(t\geq0).
$$
Thus the derivative hypothesis in Theorem~11.22 is satisfied.  The theorem
and Equation~(11.11) give constants $c_0>0$ and $h_0\in(0,1)$, depending
only on $d,\varrho,\mathcal Q$, for which every
$u\in\mathcal H_{k_{\mathcal Q}}$ satisfies
\begin{equation}
  \sup_{x\in\mathcal Q}|u(x)-I_{\mathcal Y}u(x)|
  \leq
  \exp\left\{-c_0
    \frac{|\log h_{\mathcal Y,\mathcal Q}|}
         {h_{\mathcal Y,\mathcal Q}}\right\}
  \lVert u\rVert_{\mathcal H_{k_{\mathcal Q}}},
  \label{eq:se-wendland-error}
\end{equation}
whenever $h_{\mathcal Y,\mathcal Q}\leq h_0$.
For $x\in\mathcal Q$, the power-function identity gives
\begin{equation}
  \dist_{\mathcal H_{k_{\mathcal Q}}}
  \bigl(k_{\mathcal Q}(\cdot,x),V_{\mathcal Y}^{\mathcal Q}\bigr)
  =
  \sup_{\lVert u\rVert_{\mathcal H_{k_{\mathcal Q}}}\leq1}
  |u(x)-I_{\mathcal Y}u(x)|.
  \label{eq:se-power-function-identity}
\end{equation}
Taking the supremum over $x\in\mathcal Q$ in
\eqref{eq:se-power-function-identity} and applying
\eqref{eq:se-wendland-error} proves~\eqref{eq:se-cube-power}.
\end{proof}

\begin{lemma}[Squared-exponential Kolmogorov widths]
\label{lem:se-widths}
Let $\X\subset\R^d$ be nonempty and compact, and fix $\varrho>0$.  Let
$k:\X\times\X\to\R$ be the restriction of the squared-exponential kernel
$$
  (x,y)\longmapsto
  \exp\left(-\frac{\lVert x-y\rVert^2}{2\varrho^2}\right).
$$
There are constants $A_{\mathrm w},b_{\mathrm w}>0$ such that
\begin{equation}
  d_m(k,\X)
  \leq
  A_{\mathrm w}
  \exp\{-b_{\mathrm w}m^{1/d}\log(em)\},
  \qquad m\geq1.
  \label{eq:se-width-appendix}
\end{equation}
The constants depend only on $d$, $\varrho$, and the cube chosen to contain
$\X$.
\end{lemma}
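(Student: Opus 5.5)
The plan follows the proof of Lemma~\ref{lem:matern-widths}, with Lemma~\ref{lem:se-cube-power} in place of the Mat\'ern power-function estimate. First I fix a closed cube $\mathcal Q$ of positive side length $a_{\mathcal Q}$ that contains $\X$. For each $m\geq1$, put $n_m:=\lfloor m^{1/d}\rfloor$ and let $\mathcal Y'_m\subset\mathcal Q$ be the set of centers of the $n_m^d$ congruent subcubes. Then $|\mathcal Y'_m|\leq m$, and, since $n_m\geq m^{1/d}/2$,
$$
  h_m:=h_{\mathcal Y'_m,\mathcal Q}\leq \frac{a_{\mathcal Q}\sqrt d}{2n_m}\leq a_{\mathcal Q}\sqrt d\,m^{-1/d}.
$$
Once $m$ is large enough that $h_m\leq h_0$, Lemma~\ref{lem:se-cube-power} gives a uniform bound on $\mathcal Q$ for the distance from $k_{\mathcal Q}(\cdot,x)$ to $V^{\mathcal Q}_{\mathcal Y'_m}$, and this subspace has dimension at most $m$.

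Next I transfer this bound to $\X$. The argument is the same restriction step as in the Mat\'ern case. By Aronszajn's restriction theorem, $R:g\mapsto g|_\X$ is a contraction from $\mathcal H_{k_{\mathcal Q}}$ to $\mathcal H_k$, and it maps $k_{\mathcal Q}(\cdot,x)$ to $k(\cdot,x)$ for $x\in\X$. Therefore $W_m:=R\,V^{\mathcal Q}_{\mathcal Y'_m}$ has dimension at most $m$ and satisfies
$$
  \sup_{x\in\X}\dist_{\mathcal H_k}\bigl(k(\cdot,x),W_m\bigr)\leq \exp\{-c_0|\log h_m|/h_m\}.
$$
This bound is then an upper bound for $d_m(k,\X)$.

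The main step is converting this into the form $\exp\{-b_{\mathrm w}m^{1/d}\log(em)\}$. The function $h\mapsto|\log h|/h$ is decreasing on $(0,1)$. Write $c:=a_{\mathcal Q}\sqrt d$. For large $m$ we have $c\,m^{-1/d}<1$, and then
$$
  \frac{|\log h_m|}{h_m}\geq \frac{\frac1d\log m-\log c}{c\,m^{-1/d}}.
$$
For $m$ large, $\frac1d\log m-\log c\geq \frac1{2d}\log m$ and $\log m\geq\frac12\log(em)$. This gives
$$
  \frac{|\log h_m|}{h_m}\geq \frac{1}{4dc}\,m^{1/d}\log(em),
$$
so I take $b_{\mathrm w}:=c_0/(4dc)$. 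If $c\leq1$, then $-\log c\geq 0$ and the same estimate holds directly. For the finitely many remaining $m$, I use $d_m(k,\X)\leq d_0(k,\X)=1$ and enlarge $A_{\mathrm w}$. All the constants depend only on $d$, $\varrho$ and $\mathcal Q$, as the statement requires. The only real care is the monotonicity substitution and the threshold beyond which $h_m\leq h_0$.
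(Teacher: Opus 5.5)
Your proposal is correct and follows essentially the same route as the paper: a grid of $\lfloor m^{1/d}\rfloor^d$ subcube centers in a closed cube containing $\X$, Lemma~\ref{lem:se-cube-power}, contractive restriction to $\X$, and $d_m(k,\X)\leq d_0(k,\X)=1$ for the finitely many small ranks. Your explicit choice $b_{\mathrm w}=c_0/(4d\,a_{\mathcal Q}\sqrt d)$, obtained from the monotonicity of $h\mapsto|\log h|/h$, fills in the step that the paper states only as the existence of $b_1$ in \eqref{eq:se-grid-exponent}.
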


\begin{proof}
Choose a closed cube $\mathcal Q\subset\R^d$ of side length
$a_{\mathcal Q}>0$ containing $\X$, and define
$$
  k_{\mathcal Q}(x,y):=
  \exp\biggl(-\frac{\lVert x-y\rVert^2}{2\varrho^2}\biggr),
  \qquad x,y\in\mathcal Q.
$$
Let $c_0>0$ and $h_0\in(0,1)$ be the constants given by
Lemma~\ref{lem:se-cube-power} for this cube.
Given $m\geq1$, write $n_m:=\lfloor m^{1/d}\rfloor$ and denote by
$\mathcal Y_m$ the centers of the $n_m^d$ congruent subcubes obtained by
partitioning $\mathcal Q$.  Define
$$
  V_{\mathcal Y_m}^{\mathcal Q}
  :=
  \operatorname{span}\{k_{\mathcal Q}(\cdot,y),\,y\in\mathcal Y_m\}.
$$
Then $|\mathcal Y_m|=n_m^d\leq m$, $n_m\geq m^{1/d}/2$, and the fill
distance of $\mathcal Y_m$ in $\mathcal Q$ satisfies
$$
  h_{\mathcal Y_m,\mathcal Q}
  \leq\frac{a_{\mathcal Q}\sqrt d}{2n_m}.
$$
The map $h\mapsto|\log h|/h$ is decreasing on $(0,1)$.  Hence the bounds on
$n_m$ and $h_{\mathcal Y_m,\mathcal Q}$ imply that there is $b_1>0$,
depending only on $d,\varrho,\mathcal Q$, such that
\begin{equation}
  h_{\mathcal Y_m,\mathcal Q}\leq h_0,
  \qquad
  c_0\frac{|\log h_{\mathcal Y_m,\mathcal Q}|}
           {h_{\mathcal Y_m,\mathcal Q}}
  \geq b_1m^{1/d}\log(em),
  \label{eq:se-grid-exponent}
\end{equation}
for all sufficiently large $m$.  Since
$\dim V_{\mathcal Y_m}^{\mathcal Q}\leq m$, applying
Lemma~\ref{lem:se-cube-power} with
$\mathcal Y=\mathcal Y_m$ and using~\eqref{eq:se-grid-exponent} gives
\begin{equation}
  d_m(k_{\mathcal Q},\mathcal Q)
  \leq
  \sup_{x\in\mathcal Q}
  \dist_{\mathcal H_{k_{\mathcal Q}}}
  \bigl(k_{\mathcal Q}(\cdot,x),V_{\mathcal Y_m}^{\mathcal Q}\bigr)
  \leq
  \exp\{-b_1m^{1/d}\log(em)\}.
  \label{eq:se-cube-width}
\end{equation}

As in the proof of Lemma~\ref{lem:matern-widths}, restriction from
$\mathcal Q$ to $\X$ is contractive and does not increase subspace
dimensions.  Therefore,
$$
  d_m(k,\X)\leq d_m(k_{\mathcal Q},\mathcal Q).
$$
Thus~\eqref{eq:se-width-appendix} holds with $b_{\mathrm w}=b_1$ for all
sufficiently large $m$.
Since $k(x,x)=1$, one has $d_m(k,\X)\leq d_0(k,\X)=1$.  Increasing
$A_{\mathrm w}$ if necessary gives~\eqref{eq:se-width-appendix} for the
finitely many smaller ranks.
\end{proof}

\begin{proposition}[One-dimensional power-function bound]
\label{prop:yarotsky-power}
Let $\X\subset\R$ be nonempty and compact, and fix $\varrho>0$.  Let
$k:\X\times\X\to\R$ be given by
$$
  k(x,y)=\exp\left(-\frac{(x-y)^2}{2\varrho^2}\right).
$$
For every $0<b<1/2$, there is a constant $A_b\geq1$, depending only on
$b$, $\varrho$, and $\X$, such that, for every integer $K\geq1$ and every
collection of pairwise distinct points $x_1,\ldots,x_K\in\X$,
\begin{equation}
  \sup_{x\in\X}P_{(x_1,\ldots,x_K)}(x)
  \leq A_b\exp\{-bK\log(eK)\}.
  \label{eq:yarotsky-power-bound}
\end{equation}
Moreover, with $\log 0=-\infty$,
$$
  \limsup_{m\to\infty}
  \frac{\log\delta_m(k,\X)}{m\log m}
  \leq-\frac12.
$$
\end{proposition}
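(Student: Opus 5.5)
The plan is to derive \eqref{eq:yarotsky-power-bound} from the one-dimensional conditional-variance estimate of \citet[Theorem~2]{yarotsky2013}. Then I deduce the limsup statement by applying that bound to the last innovation of each tuple.

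\textbf{Step 1: the uniform power-function bound.} By \eqref{eq:power-function} and the Lo\`eve isometry, $P_{(x_1,\ldots,x_K)}(x)^2$ equals the conditional variance $\operatorname{Var}[Y(x)\mid Y(x_1),\ldots,Y(x_K)]$ of a centered Gaussian process $Y$ with covariance $k$. For the Gaussian kernel in one dimension, Yarotsky's theorem bounds this conditional variance, uniformly over $x$ in a bounded interval and over every set of $K$ distinct conditioning points in it. The bound has the form $\exp\{-K\log K+O(K)\}$, with leading coefficient exactly $1$ in the exponent. I first enclose $\X$ in a compact interval $J$ and apply the theorem there. The points $x_i\in\X\subset J$ are distinct, and the supremum over $\X$ is at most the supremum over $J$. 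Taking square roots gives $P\le\exp\{-\tfrac12K\log K+O(K)\}$. For fixed $b<1/2$, the gap $(\tfrac12-b)K\log K$ absorbs both the $O(K)$ term and the difference between $\log K$ and $\log(eK)$ once $K$ is large. For the finitely many small $K$, I use $P\le\sqrt{k(x,x)}=1$ and enlarge $A_b\geq1$. The main obstacle is checking Yarotsky's hypotheses and the precise form of his constant. In particular, I must confirm that his bound is uniform in the configuration and that its dependence on $\varrho$ and $J$ enters only the $O(K)$ term. If his statement is for $\varrho=1$ on a fixed interval, I first rescale $x\mapsto x/\varrho$. That changes the interval length but not the leading coefficient.

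\textbf{Step 2: the separation radii.} Fix $m\geq2$ and a tuple $(x_1,\ldots,x_m)\in\X^m$. If two entries coincide, say $x_j=x_i$ with $i<j$, then $P_{(x_1,\ldots,x_{j-1})}(x_j)=0$, so the minimum in \eqref{eq:delta} is zero. Otherwise all entries are distinct. Then
$$
  \min_jP_{(x_1,\ldots,x_{j-1})}(x_j)\le P_{(x_1,\ldots,x_{m-1})}(x_m)\le A_b\exp\{-b(m-1)\log(e(m-1))\}.
$$
Taking the supremum gives $\log\delta_m(k,\X)\le\log A_b-b(m-1)\log(e(m-1))$, which covers the case $\delta_m=0$ through $\log 0=-\infty$. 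Dividing by $m\log m$ and letting $m\to\infty$ yields a limsup at most $-b$. Letting $b\uparrow1/2$ gives the claim.
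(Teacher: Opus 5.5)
Your proposal is correct and follows the paper's proof essentially step for step. You rescale $\X$ into a fixed interval, invoke Yarotsky's Theorem~2 for a conditional-variance bound $\exp\{-K\log K+O(K)\}$, take square roots and absorb the $O(K)$ term when $b<1/2$, and then bound the smallest successive innovation of a distinct tuple by its last one with $K=m-1$, letting $b\uparrow 1/2$.

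What you leave to ``checking the hypotheses'' the paper makes explicit. Yarotsky's bound carries a factor $\prod_i|\widetilde x-\widetilde x_i|^2\leq 4^K$ after rescaling to $[-1,1]$, and the kernel rescaling contributes a constant factor $\sqrt{\gamma/\pi}$. The bound also requires $x$ to differ from the conditioning points, and the remaining case is trivial because the power function vanishes there.
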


\begin{proof}
Choose $c_{\X}\in\R$ and $a_{\X}>0$ such that
$\X\subset[c_{\X}-a_{\X},c_{\X}+a_{\X}]$, and set
\begin{equation}
  \widetilde x:=\frac{x-c_{\X}}{a_{\X}},\qquad
  \widetilde x_i:=\frac{x_i-c_{\X}}{a_{\X}},\qquad
  \gamma:=\frac{\varrho^2}{2a_{\X}^2}.
  \label{eq:yarotsky-rescaling}
\end{equation}
All the rescaled points lie in $[-1,1]$.  With the Fourier convention of
\citet{yarotsky2013}, Equation~(16) with $a=\gamma$ and spectral exponent
$2$ gives the spectral density
$$
  \widehat g_\gamma(t):=\exp(-\gamma t^2).
$$
Its inverse Fourier transform is
$$
  g_\gamma(h)
  =\int_{\R}\exp(-\gamma t^2)e^{ith}\,dt
  =\sqrt{\frac{\pi}{\gamma}}\,
    \exp\left(-\frac{h^2}{4\gamma}\right),
  \qquad h\in\R.
$$
Since $4\gamma a_{\X}^2=2\varrho^2$, the rescaling in
\eqref{eq:yarotsky-rescaling} gives
$$
  k(x,y)=\sqrt{\frac{\gamma}{\pi}}\,
  g_\gamma\left(
    \frac{x-c_{\X}}{a_{\X}}-\frac{y-c_{\X}}{a_{\X}}
  \right).
$$
Given a finite tuple $\mathbf y$ of real numbers, write
$P_{\mathbf y}^{g_\gamma}$ for the power function associated with the kernel
$(s,t)\mapsto g_\gamma(s-t)$.  Its square is the conditional variance of a
centered Gaussian process with this covariance function after conditioning
at the points in $\mathbf y$.
Conditional variances scale linearly with the covariance function.  Thus
\begin{equation}
  \bigl(P_{(x_1,\ldots,x_K)}(x)\bigr)^2
  =\sqrt{\frac{\gamma}{\pi}}\,
    \bigl(
      P_{(\widetilde x_1,\ldots,\widetilde x_K)}^{g_\gamma}
      (\widetilde x)
    \bigr)^2.
  \label{eq:yarotsky-kernel-scaling}
\end{equation}

For all sufficiently large $K$ and all distinct points
$\widetilde x,\widetilde x_1,\ldots,\widetilde x_K$,
\citet[Theorem~2 and Equation~(14)]{yarotsky2013}, applied to
$\widehat g_\gamma(t)=\exp(-\gamma t^2)$, gives
\begin{equation}
  \bigl(
    P_{(\widetilde x_1,\ldots,\widetilde x_K)}^{g_\gamma}
    (\widetilde x)
  \bigr)^2
  \leq
  \exp\{F_\gamma(K)+2K\}
  \prod_{i=1}^K|\widetilde x-\widetilde x_i|^2,
  \label{eq:yarotsky-conditional-variance}
\end{equation}
where
\begin{equation}
  F_\gamma(K)
  :=
  \frac{2K+1}{2}
  \left\{
    \log\left(\frac{2K+1}{2\gamma}\right)
    -2\log K-1
  \right\}
  =-K\log K+O_\gamma(K).
  \label{eq:yarotsky-f-asymptotic}
\end{equation}
Here $O_\gamma(K)$ denotes a quantity whose absolute value is bounded by
$C_\gamma K$, where $C_\gamma$ may depend on $\gamma$ but not on $K$.
Since $|\widetilde x-\widetilde x_i|\leq2$, using
\eqref{eq:yarotsky-f-asymptotic} in
\eqref{eq:yarotsky-kernel-scaling} and
\eqref{eq:yarotsky-conditional-variance} gives
\begin{equation}
  \bigl(P_{(x_1,\ldots,x_K)}(x)\bigr)^2
  \leq
  \sqrt{\frac{\gamma}{\pi}}\,
  \exp\{F_\gamma(K)+2K+K\log4\}
  =\exp\{-K\log K+O_\gamma(K)\}.
  \label{eq:yarotsky-power-asymptotic}
\end{equation}
If $x$ is one of the points $x_i$, then the power function is zero.  The
bound~\eqref{eq:yarotsky-power-asymptotic} therefore holds for every
$x\in\X$.  Taking square roots shows that, for every $0<b<1/2$,
\eqref{eq:yarotsky-power-bound} holds with $A_b=1$ for all sufficiently
large $K$.  Since
$P_{(x_1,\ldots,x_K)}(x)\leq\sqrt{k(x,x)}=1$, choosing $A_b\geq1$ large
enough also covers the finitely many smaller values of $K$.

Now let $(x_1,\ldots,x_m)\in\X^m$.  If the tuple contains a repeated point,
the innovation norm at the first repetition is zero.  If the points are
distinct and $m\geq2$, the minimum of the successive innovation norms is at
most the final one, and~\eqref{eq:yarotsky-power-bound} gives
\begin{equation}
  \min_{1\leq j\leq m}
  P_{(x_1,\ldots,x_{j-1})}(x_j)
  \leq
  A_b\exp\{-b(m-1)\log(e(m-1))\}.
  \label{eq:yarotsky-innovation-bound}
\end{equation}
Taking the supremum over ordered tuples and including the repeated-point case
gives, for $m\geq2$,
$$
  \delta_m(k,\X)
  \leq
  A_b\exp\{-b(m-1)\log(e(m-1))\}.
$$
Dividing logarithms by $m\log m$ and then letting $b\uparrow1/2$ proves the
second assertion.
\end{proof}

\begin{proof}[Proof of Proposition~\ref{prop:se-separation-radius}]
Lemma~\ref{lem:se-widths} gives~\eqref{eq:se-width-appendix}.  Since
$k(x,x)=1$, part~2 of
Corollary~\ref{cor:width-envelopes}, with $\gamma=1/d$, shows that, for every
$0<b_\delta<b_{\mathrm w}/(1+1/d)$, there is $A_\delta>0$ such that
$$
  \delta_m(k,\X)
  \leq
  A_\delta
  \exp\{-b_\delta m^{1/d}\log(em)\},
  \qquad m\geq1.
$$
Here $A_\delta$ depends only on $d$, $\varrho$, the cube chosen to contain
$\X$, and the chosen $b_\delta$.  Choosing any admissible $b_\delta$
proves~\eqref{eq:se-innovation}.  When $d=1$, the
second assertion of Proposition~\ref{prop:yarotsky-power} gives
\eqref{eq:se-one-dimensional-leading-constant}.
\end{proof}

\subsection{Simple-regret rates for EI with a squared-exponential kernel}
\label{app:se-rate-proof}

\begin{proof}[Proof for weak-EI policies]
By Lemma~\ref{lem:residual-envelope-domination},
$$
  \delta_m(k_0,\X)\leq\delta_m(k,\X).
$$
Thus \eqref{eq:log-enhanced-envelope} holds for $k_0$, with constants and a
threshold rank independent of $f$.
Proposition~\ref{prop:log-enhanced-transfer} then gives
\eqref{eq:se-rate}.  The same constants and starting index apply to every
$f\in\Hk$ with $\lVert f\rVert_{\Hk}\leq B$.
\end{proof}

\begin{proof}[Proof for exact~EI when $d=1$]
The Euclidean spectral density of the squared-exponential kernel is positive
everywhere, so the kernel is strictly positive definite
\citep[Corollary~6.9]{wendland2005}.  At a previously queried point $y$,
interpolation gives
$s_n(y)=0$ and
$\mu_n(y)=f(y)$, so $\EI_n(y)=0$.  At an unqueried point $x$, strict
positive definiteness gives $s_n(x)>0$, and~\eqref{eq:ei-closed-form} gives
$\EI_n(x)>0$.  The exact~EI policy therefore selects an unqueried point
whenever one exists.  If every point of $\X$ has been queried by time $T-1$,
then $r_T=0$, so \eqref{eq:se-one-dimensional-exact-rate} holds.  Otherwise,
every post-initial point selected before $x_T$ is new.

Let
$$
  D_0:=\bigl|\{x_1,\ldots,x_{n_0}\}\bigr|,
  \qquad
  S_{T-1}:=\{x_1,\ldots,x_{T-1}\}.
$$
Since the $N-1$ post-initial points selected before $x_T$ are new,
$$
  J_N:=|S_{T-1}|=D_0+N-1\geq N.
$$
Repeated points do not change the span of the evaluation representers,
so $s_{T-1}$ is the power function associated with the $J_N$ distinct points
in $S_{T-1}$.
At time $T-1$, equation~\eqref{eq:optimizer-uncertainty-recursion} with
$\eta=1$, together with $B_0\leq B$ and $C\leq C_B$ from
Remark~\ref{rem:ei-constants}, gives
$$
  \begin{aligned}
    r_T
    &\leq
    \min\left\{
      q r_{T-1},\,
      B_0s_{T-1}(x^\star)
    \right\}
    +Cs_{T-1}(x_T)\\
    &\leq
    B_0s_{T-1}(x^\star)+Cs_{T-1}(x_T)\\
    &\leq
    (B+C_B)\sup_{x\in\X}s_{T-1}(x).
  \end{aligned}
$$
Proposition~\ref{prop:yarotsky-power} therefore gives, for every
$0<b''<1/2$,
$$
  r_T
  \leq
  (B+C_B)A_{b''}
  \exp\{-b''J_N\log(eJ_N)\}.
$$
Since $J_N\geq N$, \eqref{eq:se-one-dimensional-exact-rate} follows with
$A''=(B+C_B)A_{b''}$.
\end{proof}

\section{Measurable policies and minimax bounds}
\label{app:minimax-details}

\subsection{Measurable exact~EI policies}
\label{app:measurable-exact-ei}

\begin{proposition}[Measurable exact~EI policies]
\label{prop:measurable-exact-ei}
There is a measurable exact~EI policy $\underline X=(X_n)_{n\geq1}$ whose
first $n_0$ decisions are the fixed initial design and such that, for every
$f\in\Hk$,
$$
  \EI_n\bigl(X_{n+1}(f)\bigr)
  =
  \max_{x\in\X}\EI_n(x),
  \qquad n\geq n_0.
$$
\end{proposition}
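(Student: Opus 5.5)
The plan is to construct the policy one step at a time with a measurable selection theorem applied to a Carathéodory function. For $n\geq n_0$, let $H_n:=(\X\times\mathbb R)^n$ be the history space with its Borel $\sigma$-algebra. Define $G_n:H_n\times\X\to[0,\infty)$ by evaluating the closed form \eqref{eq:ei-closed-form} at the quantities computed from the history $h=((y_i,w_i))_{i=1}^n$:
$$
  m(h)=\min_i w_i,\qquad \mu_h(x)=\mathbf k_h(x)^\top K_h^\dagger\mathbf w,\qquad s_h^2(x)=k(x,x)-\mathbf k_h(x)^\top K_h^\dagger\mathbf k_h(x).
$$
When $h$ is the true history $((x_i,f(x_i)))_{i\leq n}$, these quantities coincide with $m_n$, $\mu_n$ and $s_n$ by \eqref{eq:posterior-formulas}. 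Hence $G_n(h,\cdot)=\EI_n$ along every trajectory generated by some $f\in\Hk$.

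I will then check two properties of $G_n$. First, $x\mapsto G_n(h,x)$ is continuous for each fixed $h$; this is the continuity argument already given after \eqref{eq:ei-closed-form}. Second, $h\mapsto G_n(h,x)$ is Borel for each fixed $x$. For this I use the fact that the Moore--Penrose inverse is a Borel function of the matrix entries, since $A^\dagger=\lim_{\varepsilon\downarrow0}(A^\top A+\varepsilon I)^{-1}A^\top$ is a pointwise limit of continuous maps. The maps $\min$, $\tau$ and the piecewise formula are all Borel, and $s_h^2$ should be replaced by its positive part to absorb rounding. Together these show that $G_n$ is a Carathéodory function on $H_n\times\X$, with $\X$ a compact metric space. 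By the measurable maximum theorem (Kuratowski--Ryll-Nardzewski, e.g.\ Aliprantis--Border, Theorem~18.19), there is a Borel map $\xi_n:H_n\to\X$ such that
$$
  G_n(h,\xi_n(h))=\max_{x\in\X}G_n(h,x)\qquad\text{for every }h\in H_n.
$$

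The policy is then defined as follows. Let $X_1,\ldots,X_{n_0}$ be the initial design, viewed as constant (hence Borel) maps of the history. For $n\geq n_0$, set $X_{n+1}:=\xi_n$. Each $X_j$ is Borel on $(\X\times\mathbb R)^{j-1}$, so $\underline X\in\mathcal A_\infty$. Substituting the true history gives $\EI_n(X_{n+1}(f))=\max_{x\in\X}\EI_n(x)$ for every $f\in\Hk$ and every $n\geq n_0$.

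The main obstacle is the Borel measurability in $h$. The generalized inverse and the degenerate branch $s=0$ make $G_n$ discontinuous in $h$, so a continuity argument is not available. I will handle this through the limit representation of $K_h^\dagger$ and by writing $G_n$ as a sum of indicator-weighted Borel pieces, namely $\mathbf 1\{s_h(x)>0\}$ times the $\tau$-branch plus $\mathbf 1\{s_h(x)=0\}(m-\mu_h(x))_+$. The maximum theorem only needs joint measurability through the Carathéodory property, which is exactly what these two checks provide.
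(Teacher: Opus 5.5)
Your proposal is correct and follows essentially the same route as the paper: you extend EI to all of $H_n\times\X$ through the Moore--Penrose formulas, obtain Borel measurability in the history from the limit representation of $K^\dagger$ together with continuity in $x$, and then take a Borel selection of maximizers. The paper first obtains joint measurability via the Carath\'eodory theorem and then applies Brown--Purves, rather than the Aliprantis--Border measurable maximum theorem. Your indicator split at $s=0$ is unnecessary, because the closed form is jointly continuous in $(m_n-\mu_n(x),s_n(x))$, but neither difference matters.
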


\begin{proof}
For each $n\geq n_0$, we construct an extension of EI to the whole history
space that is Borel measurable in the history and continuous in the candidate
point.  A measurable selection of EI maximizers is a Borel decision rule that
assigns an EI-maximizing candidate point to every history.  The
measurable-selection theorem then gives such rules.  Together with the fixed
initial decisions, they define the required policy.

\emph{Measurability of the posterior quantities.}
Let $H_n:=(\X\times\mathbb R)^n$ for each $n\geq n_0$.  Both $H_n$ and
$\X$ are complete separable
metric spaces.  A decision rule after $n$ evaluations is a Borel map from
$H_n$ to $\X$, even though some histories in $H_n$ cannot occur when the
objective function belongs to $\Hk$ and its values are observed exactly.  We
therefore extend the posterior formulas to all of $H_n\times\X$.

Given $h=((x_1,z_1),\ldots,(x_n,z_n))\in H_n$, put
$\mathbf z_n:=(z_1,\ldots,z_n)^\top$.  The Gram matrix
$K_n=[k(x_i,x_j)]_{i,j=1}^n$ and the best observed value
$m_n=\min_{1\leq i\leq n}z_i$ are continuous functions of the history in
$H_n$.  The map
$(h,x)\mapsto\mathbf k_n(x):=(k(x_i,x))_{i=1}^n$
is continuous on $H_n\times\X$.  For a real matrix $A$ of fixed size,
$$
  A^\dagger
  =
  \lim_{\gamma\downarrow0}
  \bigl(A^\top A+\gamma I\bigr)^{-1}A^\top.
$$
For each $\gamma>0$, the map
$A\mapsto(A^\top A+\gamma I)^{-1}A^\top$ is continuous.  Hence the
Moore--Penrose inverse is Borel measurable as a pointwise limit of continuous
matrix maps.

We use the formulas involving $K_n^\dagger$ in
\eqref{eq:posterior-formulas} to define $\mu_n$ and $s_n$ on
$H_n\times\X$.
For each fixed $x$, $\mu_n(x)$ and $s_n(x)$ are Borel measurable functions of
the history, and for each fixed history they are continuous in $x$.
Thus these formulas define Borel extensions of the posterior mean and
normalized posterior standard deviation.  They agree with the posterior
quantities when the history consists of exact observations of an objective
function $f\in\Hk$.

\emph{Measurability of EI.}
At each history $h\in H_n$, use \eqref{eq:ei-closed-form}, with
$m_n$, $\mu_n$, and $s_n$ given by the extensions above, to define
$x\mapsto\EI_n(x)$.  This agrees with posterior EI on histories arising from
exact observations.  The continuity stated after
\eqref{eq:ei-closed-form} makes $\EI_n$ Borel measurable in the history for
each fixed $x$ and continuous in $x$ for each fixed history.  By the
Carath\'eodory measurability theorem, separability of $\X$ then makes
$(h,x)\mapsto\EI_n(x)$ jointly Borel measurable on $H_n\times\X$.

\emph{Measurable selection of EI maximizers.}
Compactness of $\X$ ensures that $\EI_n$ attains its maximum for every
history.  Applying \citet[Corollary~1]{brownPurves1973} to $-\EI_n$ gives a
Borel map from $H_n$ to $\X$ whose value at each history maximizes $\EI_n$.

\emph{Construction of the policy.}
Together with the fixed initial decisions, these maps define the required
policy $\underline X$.
\end{proof}

\subsection{Deterministic minimax bounds for the squared-exponential kernel}
\label{app:se-minimax-proof}

\paragraph{Covering numbers of a squared-exponential RKHS on the unit cube.}
Given a kernel $k$ on $\R^d$ and a nonempty set
$\mathcal D\subset\R^d$, write $\mathcal H_{k,\mathcal D}$ for the RKHS of
$k|_{\mathcal D\times\mathcal D}$.  For a Hilbert space $\mathcal H$ and
$B\geq0$, write
$$
  \mathcal B_B(\mathcal H)
  :=
  \{h\in\mathcal H:\lVert h\rVert_{\mathcal H}\leq B\}.
$$
When $\mathcal D$ is compact, write $\mathcal C(\mathcal D)$ for the space
of real-valued continuous functions on $\mathcal D$, equipped with the sup
norm.  For $d\geq1$ and $\varrho>0$, let $k_\varrho$ be the
squared-exponential kernel on $\R^d$ written in the form
$$
  k_\varrho(u,v)
  :=
  \exp\biggl(-\frac{\lVert u-v\rVert^2}{2\varrho^2}\biggr),
  \qquad u,v\in\R^d.
$$
Then \citet[Theorem~3]{kuhn2011} proves that the metric entropy
(the logarithm of the sup-norm covering number) of
$\mathcal B_1(\mathcal H_{k_\varrho,[0,1]^d})$ at radius $\varepsilon$ is
of order
$$
  \frac{\{\log(1/\varepsilon)\}^{d+1}}
       {\{\log\log(1/\varepsilon)\}^{d}}
$$
as $\varepsilon\downarrow0$.  We use only the lower estimate.

The next lemma transfers that lower bound to the radius-$B$ ball of
$\mathcal H_{k_\varrho,\X}$.

\begin{lemma}[Metric entropy of the squared-exponential RKHS ball]
\label{lem:se-entropy-lower}
Let $\X\subset\R^d$ be compact with nonempty interior, and fix
$\varrho,B>0$.
Let
$\mathcal N(\mathcal B_B(\mathcal H_{k_\varrho,\X}),\varepsilon,
\lVert\cdot\rVert_\infty)$ be the sup-norm covering number of this ball on
$\X$, and set
$L_\varepsilon:=\log(B/\varepsilon)$.  There are constants $c>0$ and
$\varepsilon_0\in(0,B/(4e))$ such that, for
$0<\varepsilon\leq\varepsilon_0$,
\begin{equation}
  \log\mathcal N(\mathcal B_B(\mathcal H_{k_\varrho,\X}),4\varepsilon,
                  \lVert\cdot\rVert_\infty)
  \geq
  c\,
  \frac{L_\varepsilon^{d+1}}{(\log L_\varepsilon)^d}.
  \label{eq:se-entropy-lower}
\end{equation}
\end{lemma}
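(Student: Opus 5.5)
Since $\X$ has nonempty interior, I would fix a closed cube $\mathcal Q_0=c+a[0,1]^d\subset\X$ with $a>0$, and transfer Kühn's lower estimate for $\mathcal B_1(\mathcal H_{k_{\varrho'},[0,1]^d})$ to $\mathcal B_B(\mathcal H_{k_\varrho,\X})$ in three steps: affine rescaling, extension from the cube to $\X$, and scaling of radius and tolerance. The map $T:u\mapsto c+au$ sends $[0,1]^d$ onto $\mathcal Q_0$. Since $\lVert Tu-Tv\rVert=a\lVert u-v\rVert$, one has $k_\varrho(Tu,Tv)=k_{\varrho/a}(u,v)$. Composition $g\mapsto g\circ T$ is therefore an isometric isomorphism from $\mathcal H_{k_\varrho,\mathcal Q_0}$ onto $\mathcal H_{k_{\varrho/a},[0,1]^d}$, because composition with a bijection carries one reproducing kernel to the other. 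It is also an isometry for the sup norms over $\mathcal Q_0$ and $[0,1]^d$. So the covering numbers of the unit balls agree, and \citet[Theorem~3]{kuhn2011} with $\varrho'=\varrho/a$ gives, for small $\varepsilon'$,
$$\log\mathcal N(\mathcal B_1(\mathcal H_{k_\varrho,\mathcal Q_0}),\varepsilon',\lVert\cdot\rVert_{\infty,\mathcal Q_0})\geq c'\frac{\{\log(1/\varepsilon')\}^{d+1}}{\{\log\log(1/\varepsilon')\}^{d}}.$$

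Next I would pass from $\mathcal Q_0$ to $\X$ using the restriction theorem of \citet[Section~5]{aronszajn1950}. Restriction $R:\mathcal H_{k_\varrho,\X}\to\mathcal H_{k_\varrho,\mathcal Q_0}$ is a metric surjection: it maps the closed unit ball of $\mathcal H_{k_\varrho,\X}$ onto the closed unit ball of $\mathcal H_{k_\varrho,\mathcal Q_0}$, since every $g$ in the smaller space has an extension of the same norm (the minimal-norm extension). Restriction also does not increase sup norms: $\lVert Rf_1-Rf_2\rVert_{\infty,\mathcal Q_0}\leq\lVert f_1-f_2\rVert_{\infty,\X}$. Hence an $\varepsilon'$-net of $\mathcal B_1(\mathcal H_{k_\varrho,\X})$ in $\mathcal C(\X)$ restricts to an $\varepsilon'$-net of $\mathcal B_1(\mathcal H_{k_\varrho,\mathcal Q_0})$ in $\mathcal C(\mathcal Q_0)$. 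The centers must be taken in $\mathcal C(\X)$; this is harmless, because the covering numbers are taken in the ambient sup-norm space, or one can use packing/covering comparisons at the cost of a factor $2$ in the radius. So the entropy lower bound on $\mathcal Q_0$ transfers to $\X$.

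Finally, scaling by $B$ gives $\mathcal N(\mathcal B_B,4\varepsilon)=\mathcal N(\mathcal B_1,4\varepsilon/B)$. I would put $\varepsilon'=4\varepsilon/B$, so that $\log(1/\varepsilon')=L_\varepsilon-\log4$. For $\varepsilon\leq\varepsilon_0$ with $\varepsilon_0$ small, $L_\varepsilon-\log 4\geq L_\varepsilon/2$ and $\log\log(1/\varepsilon')\leq\log L_\varepsilon$. The condition $\varepsilon_0<B/(4e)$ keeps $L_\varepsilon>1+\log4$, so $\log L_\varepsilon>0$ is well defined. These inequalities give \eqref{eq:se-entropy-lower} with $c=c'2^{-(d+1)}$. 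The only delicate step is the covering-center issue in the restriction argument. I expect it to be routine once the covering number is interpreted in $\mathcal C(\X)$, where the factor-$2$ packing comparison is absorbed by decreasing $\varepsilon_0$ and $c$.
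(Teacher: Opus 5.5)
Your proposal is correct and takes essentially the same route as the paper. You restrict from $\X$ to a cube inside it via minimal-norm extension, so that the radius-$B$ ball maps onto the radius-$B$ ball and sup-norm covers restrict. You then rescale by $B$, transport to $[0,1]^d$ by the similarity with $\varrho'=\varrho/a$, and apply K\"uhn's lower entropy estimate with $\log(1/\varepsilon')=L_\varepsilon-\log 4$. The only differences are the order of these steps and your more explicit tracking of the constant $c=c'2^{-(d+1)}$ and of why $\varepsilon_0<B/(4e)$ makes $\log L_\varepsilon$ positive.
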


\begin{proof}
Choose a closed cube
$$
  \mathcal Q=b_{\mathcal Q}+a_{\mathcal Q}[0,1]^d
  \subset\operatorname{int}(\X),
  \qquad a_{\mathcal Q}>0.
$$
Write $\lVert\cdot\rVert_{\infty,\mathcal Q}$ for the sup norm on
$\mathcal Q$.

The RKHS extension and restriction theorems
\citep[Theorems~10.46--10.47]{wendland2005} show that restriction from $\X$
to $\mathcal Q$ maps $\mathcal B_B(\mathcal H_{k_\varrho,\X})$ onto
$\mathcal B_B(\mathcal H_{k_\varrho,\mathcal Q})$.  Restricting a sup-norm
cover on $\X$ therefore gives a cover on $\mathcal Q$, so
\begin{equation}
  \mathcal N(\mathcal B_B(\mathcal H_{k_\varrho,\mathcal Q}),\varepsilon,
             \lVert\cdot\rVert_{\infty,\mathcal Q})
  \leq
  \mathcal N(\mathcal B_B(\mathcal H_{k_\varrho,\X}),\varepsilon,
             \lVert\cdot\rVert_\infty).
  \label{eq:se-cover-restriction}
\end{equation}

By homogeneity,
\begin{equation}
  \mathcal N(\mathcal B_B(\mathcal H_{k_\varrho,\mathcal Q}),4\varepsilon,
             \lVert\cdot\rVert_{\infty,\mathcal Q})
  =
  \mathcal N(\mathcal B_1(\mathcal H_{k_\varrho,\mathcal Q}),4\varepsilon/B,
             \lVert\cdot\rVert_{\infty,\mathcal Q}).
  \label{eq:se-cover-rescaling}
\end{equation}

The similarity
$$
  \vartheta_{\mathcal Q}(x)
  :=\frac{x-b_{\mathcal Q}}{a_{\mathcal Q}}
$$
maps $\mathcal Q$ onto $[0,1]^d$.  Put
$\varrho':=\varrho/a_{\mathcal Q}$.  Then
$$
  k_\varrho\bigl(\vartheta_{\mathcal Q}^{-1}(u),
                  \vartheta_{\mathcal Q}^{-1}(v)\bigr)
  =
  \exp\biggl(
    -\frac{\lVert u-v\rVert^2}{2(\varrho')^2}
  \biggr)
  =
  k_{\varrho'}(u,v).
$$
The metric entropy estimate in \citet[Theorem~3]{kuhn2011} therefore
applies to $\mathcal H_{k_{\varrho'},[0,1]^d}$.
Composition with $\vartheta_{\mathcal Q}^{-1}$ is an isometric isomorphism
from $\mathcal H_{k_\varrho,\mathcal Q}$ onto
$\mathcal H_{k_{\varrho'},[0,1]^d}$ and preserves the sup norm.
It also maps $\mathcal C(\mathcal Q)$ isometrically onto
$\mathcal C([0,1]^d)$, so arbitrary centers of covers are transported by
composition with
$\vartheta_{\mathcal Q}^{-1}$.  Taking
$\varepsilon':=4\varepsilon/B$, the covering number on the
right-hand side of \eqref{eq:se-cover-rescaling} is therefore the covering
number at radius $\varepsilon'$ of
$\mathcal B_1(\mathcal H_{k_{\varrho'},[0,1]^d})$.  Since
$\log(1/\varepsilon')=L_\varepsilon-\log4$,
\citet[Theorem~3]{kuhn2011} bounds the logarithm of this covering number
below by a positive constant multiple of
$L_\varepsilon^{d+1}/(\log L_\varepsilon)^d$ for all sufficiently small
$\varepsilon$.  Choose $\varepsilon_0\in(0,B/(4e))$ so that this bound holds
whenever $0<\varepsilon\leq\varepsilon_0$.  Combining this estimate with
\eqref{eq:se-cover-restriction}--\eqref{eq:se-cover-rescaling} proves
\eqref{eq:se-entropy-lower}.

\end{proof}

\paragraph{From metric entropy to a lower bound on the number of evaluations.}
Let $\mathcal H$ be an RKHS on $\R^d$ whose reproducing kernel is continuous
on $\R^d\times\R^d$ and bounded above by one on $\X\times\X$, where $\X$ is
nonempty and compact.  Fix $B>0$, and write
$$
  \mathcal B_B(\mathcal H)|_\X
  :=
  \{f|_\X:f\in\mathcal B_B(\mathcal H)\}.
$$
\citet[Theorem~5.1]{xu2024} give constants
$c_0,\varepsilon_1>0$ such that, for every $T\geq1$ and
$0<\varepsilon\leq\varepsilon_1$, any algorithm
$\underline X\in\mathcal A_\infty$ satisfying
$$
  \sup_{g\in\mathcal B_B(\mathcal H)|_\X}
  \biggl\{
    \min_{1\leq j\leq T}g\bigl(X_j(g)\bigr)
    -
    \min_{x\in\X}g(x)
  \biggr\}
  \leq
  \varepsilon
$$
must satisfy
\begin{equation}
  T
  \geq
  c_0
  \frac{
    \log\mathcal N(\mathcal B_B(\mathcal H)|_\X,4\varepsilon,
                   \lVert\cdot\rVert_\infty)
  }{
    \log(B/\varepsilon)
  }.
  \label{eq:xu-query-lower}
\end{equation}
Here $c_0$ can be chosen independently of $\mathcal H$, $B$, and $\X$,
whereas the threshold $\varepsilon_1$ may depend on them.

\begin{lemma}[Squared-exponential evaluation lower bound]
\label{lem:se-query-lower}
Under the assumptions and notation of
Lemma~\ref{lem:se-entropy-lower}, there are constants
$c',c''>0$ and $\varepsilon_*\in(0,\varepsilon_0]$ such that, for every
$T\geq1$ and $0<\varepsilon\leq\varepsilon_*$, any
$\underline X_T\in\mathcal A_T$ whose best observed value has worst-case
simple regret at most $\varepsilon$ over
$\mathcal B_B(\mathcal H_{k_\varrho,\X})$ satisfies
\begin{equation}
  T
  \geq
  c'\,
  \frac{
    \log\mathcal N(\mathcal B_B(\mathcal H_{k_\varrho,\X}),4\varepsilon,
                   \lVert\cdot\rVert_\infty)
  }{
    L_\varepsilon
  }
  \geq
  c''
  \biggl(
    \frac{L_\varepsilon}{\log L_\varepsilon}
  \biggr)^d.
  \label{eq:se-query-lower}
\end{equation}
\end{lemma}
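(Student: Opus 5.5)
The lemma follows by combining the query lower bound \eqref{eq:xu-query-lower} of \citet[Theorem~5.1]{xu2024} with the entropy estimate of Lemma~\ref{lem:se-entropy-lower}. Two technical points need handling: turning a finite-horizon algorithm into an infinite one, and matching the function class used by Xu et al.

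\textbf{Step 1: the ambient RKHS.} I take $\mathcal H:=\mathcal H_{k_\varrho,\R^d}$. Its kernel is continuous on $\R^d\times\R^d$ and equals one on the diagonal. By the restriction theorem \citep[Theorems~10.46--10.47]{wendland2005}, restriction maps $\mathcal B_B(\mathcal H)$ onto $\mathcal B_B(\mathcal H_{k_\varrho,\X})$, so
\[
  \mathcal B_B(\mathcal H)|_\X=\mathcal B_B(\mathcal H_{k_\varrho,\X}).
\]
Hence the worst-case regret hypothesis and the covering numbers in \eqref{eq:xu-query-lower} are exactly those appearing in \eqref{eq:se-query-lower}.

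\textbf{Step 2: extending to an infinite algorithm.} Given $\underline X_T\in\mathcal A_T$, I extend it to some $\underline X\in\mathcal A_\infty$, for instance by setting $X_j$ equal to a constant point of $\X$ for $j>T$. This extension is Borel and does not change the first $T$ queries. Therefore the best observed value after $T$ evaluations still has worst-case simple regret at most $\varepsilon$.

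\textbf{Step 3: first inequality.} Applying \eqref{eq:xu-query-lower} gives, for $0<\varepsilon\le\varepsilon_1$,
\[
  T\ge c_0\,\frac{\log\mathcal N\bigl(\mathcal B_B(\mathcal H_{k_\varrho,\X}),4\varepsilon,\lVert\cdot\rVert_\infty\bigr)}{L_\varepsilon}.
\]
This is the first inequality of \eqref{eq:se-query-lower} with $c':=c_0$. I set $\varepsilon_*:=\min\{\varepsilon_0,\varepsilon_1\}$.

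\textbf{Step 4: second inequality.} Inserting \eqref{eq:se-entropy-lower} gives
\[
  T\ge c_0c\,\frac{L_\varepsilon^{d}}{(\log L_\varepsilon)^{d}}.
\]
This is the second inequality with $c'':=c_0c$. Since $\varepsilon_0<B/(4e)$, we have $L_\varepsilon>\log(4e)>1$, so $\log L_\varepsilon>0$ and the expression is well defined and positive.

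The main point requiring care is Step 1: the cited theorem is stated for restrictions of a fixed RKHS on $\R^d$, and we need its class to coincide with the intrinsic ball on $\X$. The onto property of restriction between the two Hilbert spaces resolves this, because it makes both the regret hypothesis and the covering numbers identical. A secondary point is the dependence of the threshold $\varepsilon_1$ on $\mathcal H$, $B$ and $\X$. These are all fixed here, so $\varepsilon_*$ depends only on $B$, $\varrho$ and $\X$, as required.
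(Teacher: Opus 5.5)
Your proposal is correct and follows the paper's proof essentially step for step. Like the paper, you identify $\mathcal B_B(\mathcal H_{k_\varrho,\X})$ with $\mathcal B_B(\mathcal H_{k_\varrho,\R^d})|_\X$ via the restriction theorem and extend $\underline X_T$ to $\mathcal A_\infty$ by constant queries. You then apply \eqref{eq:xu-query-lower} with $c'=c_0$ and $\varepsilon_*=\min\{\varepsilon_0,\varepsilon_1\}$, and insert \eqref{eq:se-entropy-lower} to get $c''=c_0c$; your added check that $\log L_\varepsilon>0$ is a harmless extra.
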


\begin{proof}
The RKHS restriction theorem gives
\begin{equation}
  \mathcal B_B(\mathcal H_{k_\varrho,\X})
  =
  \biggl\{
    \widetilde f|_\X:
    \widetilde f\in\mathcal H_{k_\varrho,\R^d},\
    \lVert\widetilde f\rVert_{\mathcal H_{k_\varrho,\R^d}}\leq B
  \biggr\}
  =
  \mathcal B_B(\mathcal H_{k_\varrho,\R^d})|_\X.
  \label{eq:se-rkhs-ball-restriction}
\end{equation}
Extend $\underline X_T$ to an element of $\mathcal A_\infty$ by querying a
fixed point at every subsequent step.  This leaves the first $T$ query
points and the best value observed among them unchanged.  The kernel
$k_\varrho$ is continuous and bounded above by one on $\X\times\X$.  Hence
both \eqref{eq:xu-query-lower} and
Lemma~\ref{lem:se-entropy-lower} apply when
$0<\varepsilon\leq\varepsilon_*$, where
$$
  \varepsilon_*:=\min\{\varepsilon_0,\varepsilon_1\}.
$$
Taking $c':=c_0$ and $c'':=c'c$ proves both inequalities in
\eqref{eq:se-query-lower}.

\end{proof}

\begin{proof}[Proof of Theorem~\ref{thm:se-minimax}]
\emph{Lower bound.}
Let $\varrho$ be the lengthscale of $k$.  Then
$\Hk=\mathcal H_{k_\varrho,\X}$, so the ball in
Lemma~\ref{lem:se-entropy-lower} is the ball used in
Theorem~\ref{thm:se-minimax}.  Let $c''$ be as in
Lemma~\ref{lem:se-query-lower}, and let $\varepsilon_*$ be its threshold.
Choose $C_1>0$ such that
$$
  c''\biggl(\frac{C_1}{3}\biggr)^d>2
$$
and set
$$
  \varepsilon_N
  :=
  B\exp\{-C_1 N^{1/d}\log(eN)\}.
$$
Then
$$
  L_{\varepsilon_N}
  =
  C_1 N^{1/d}\log(eN),
  \qquad
  \log L_{\varepsilon_N}
  \leq
  3\log(eN)
$$
for all sufficiently large $N$.

Fix a deterministic method with query rules
$\underline X_N=(X_1,\ldots,X_N)$ and Borel recommendation
$\widehat X_N$.  Append $\widehat X_N$ as the $(N+1)$st query.  The resulting
algorithm $\underline X'_{N+1}\in\mathcal A_{N+1}$ is given by
$$
  X'_j:=X_j\quad(1\leq j\leq N),
  \qquad
  X'_{N+1}:=\widehat X_N.
$$
For every $f\in\mathcal B_B(\mathcal H_{k_\varrho,\X})$,
\begin{equation}
  \min_{1\leq j\leq N+1}f\bigl(X'_j(f)\bigr)-\min_{x\in\X}f(x)
  \leq
  f\bigl(\widehat x_N(f)\bigr)-\min_{x\in\X}f(x).
  \label{eq:recommendation-as-query}
\end{equation}
Since $\varepsilon_N\to0$, one has
$\varepsilon_N\leq\varepsilon_*$ for all sufficiently large $N$.
If the method had worst-case loss at most
$\varepsilon_N$, Lemma~\ref{lem:se-query-lower}, applied with $T=N+1$,
would give
$$
  N+1
  \geq
  c''
  \biggl(
    \frac{L_{\varepsilon_N}}
         {\log L_{\varepsilon_N}}
  \biggr)^d
  \geq
  c''\biggl(\frac{C_1}{3}\biggr)^dN
  >
  2N
$$
for all sufficiently large $N$.  This contradicts $N+1\leq2N$.
Thus every deterministic method has worst-case loss greater
than $\varepsilon_N$.  Taking the infimum over the methods gives
$$
  \mathcal R_N^{\mathrm{det}}(B;k,\X)
  \geq
  B\exp\{-C_1 N^{1/d}\log(eN)\},
$$
which is the lower bound in~\eqref{eq:se-minimax-scale} with $c_1=B$.

\emph{Upper bound.}
Consider the method that uses the first $N$ decision rules of
$\underline X^{\mathrm{EI}}$ and recommends the earliest query point at
which the best observed value is attained.  Its worst-case loss is
$R_N^{\mathrm{EI}}(B;k,\X)$, so
$$
  \mathcal R_N^{\mathrm{det}}(B;k,\X)
  \leq
  R_N^{\mathrm{EI}}(B;k,\X).
$$
Theorem~\ref{thm:se-rate}, applied with post-initial budget $N-n_0$, gives
$$
  R_N^{\mathrm{EI}}(B;k,\X)
  \leq
  C_2\exp\{-c_2N^{1/d}\log(eN)\}
$$
for some $C_2,c_2>0$, using \eqref{eq:se-rate} when $d\geq2$ and
\eqref{eq:se-one-dimensional-exact-rate} when $d=1$.  Since $n_0$ does not
depend on $N$, it changes only the constants.  This proves the upper bound in
\eqref{eq:se-minimax-scale}.
\end{proof}

\section{Known continuous prior means}
\label{app:known-means}

\paragraph{Setting and posterior formulas.}

Let $\mu_0\in\mathcal C(\X)$ be a known prior mean, and consider the fixed prior
$\operatorname{GP}(\mu_0,\sigma^2k)$.  Let $f_0:=f-\mu_0$ and assume
$$
  f_0\in\Hk,
  \qquad
  \lVert f_0\rVert_{\Hk}\leq B.
$$
The finite-budget bounds in Theorem~\ref{thm:order-statistic-transfer} and
the upper bounds in Theorems~\ref{thm:matern-rate} and~\ref{thm:se-rate}
hold along every EI trajectory based on this prior and
satisfying~\eqref{eq:approx-ei}.  The sharper one-dimensional
squared-exponential rate requires exact maximization of EI.

For $n\geq n_0$, let
$$
  \mathbf f_{0,n}
  :=
  \mathbf z_n
  -\bigl(\mu_0(x_1),\ldots,\mu_0(x_n)\bigr)^\top
  =\mathcal E_nf_0.
$$
Then
$$
  \begin{aligned}
  \mu_n(x)
  &=\mu_0(x)+\mathbf k_n(x)^\top K_n^\dagger\mathbf f_{0,n}
    =\mu_0(x)+(\Pi_nf_0)(x),
    \qquad x\in\X,\\
  f-\mu_n
  &=(I-\Pi_n)f_0.
\end{aligned}
$$
The Moore--Penrose expression remains valid with repeated query points because
$\mathbf f_{0,n}\in\operatorname{Range}(K_n)$.  For a given sequence of query
points, $\mu_0$ affects the posterior mean but not the posterior covariance or
$s_n$, although it may change the points selected by EI.  The definitions of
$m_n$ and $\EI_n$ remain unchanged.
Continuity of $k$ and $\mu_0$ implies the continuity of
$f,\mu_n,s_n$, and $\EI_n$.  Compactness of $\X$ then gives a minimizer of
$f$ and a maximizer of $\EI_n$.

\paragraph{EI comparison and finite-budget bounds.}

After the initial design,
$\widehat f_{0,n_0}:=\Pi_{n_0}f_0$ is the minimum-norm interpolant of the
centered initial data.  Since
$\lVert\widehat f_{0,n_0}\rVert_{\Hk}\leq
\lVert f_0\rVert_{\Hk}\leq B$, define
$$
  B_0^{(f_0)}
  :=
  \left(B^2-\lVert\widehat f_{0,n_0}\rVert_{\Hk}^2\right)^{1/2}.
$$

Since $V_{n_0}\subseteq V_n$, orthogonal projection and the definition of
$B_0^{(f_0)}$ give, for $n\geq n_0$,
$$
  \lVert(I-\Pi_n)f_0\rVert_{\Hk}
  \leq\lVert(I-\Pi_{n_0})f_0\rVert_{\Hk}
  \leq B_0^{(f_0)}.
$$
The reproducing property, the orthogonality of $(I-\Pi_n)f_0$ to $V_n$, and
the Cauchy--Schwarz inequality then give
\begin{equation}
  |f(x)-\mu_n(x)|
  =\bigl|[(I-\Pi_n)f_0](x)\bigr|
  \leq B_0^{(f_0)}s_n(x),
  \qquad x\in\X.
  \label{eq:known-mean-interpolation-error}
\end{equation}

With the constant $\eta\in(0,1]$ from~\eqref{eq:approx-ei}, define
$$
  a:=\frac{B_0^{(f_0)}}{\sigma},
  \qquad
  q:=1-\eta\kappa_a\in[0,1),
  \qquad
  C:=\sigma\tau(a)>0,
$$
where $\tau(z)=z\Phi(z)+\phi(z)$ and $\kappa_a$ is defined by
\eqref{eq:ei-comparison-constant}.  The proof of
Lemma~\ref{lem:ei-comparison}, with
\eqref{eq:known-mean-interpolation-error} in place of
\eqref{eq:rkhs-error}, gives, for every $n\geq n_0$ and $x\in\X$,
\begin{equation}
  \max\bigl\{
    I_n(x)-B_0^{(f_0)}s_n(x),\,
    \kappa_a I_n(x)
  \bigr\}
  \leq\EI_n(x)
  \leq I_n(x)+Cs_n(x).
  \label{eq:known-mean-ei-comparison}
\end{equation}
Consider an EI trajectory satisfying~\eqref{eq:approx-ei}, and let
$x^\star$ be a global minimizer.  Since
$I_n(x^\star)=r_n$ and
$I_n(x_{n+1})=r_n-r_{n+1}$, the proof of
Lemma~\ref{lem:optimizer-uncertainty}, with $B_0^{(f_0)}$ in place of $B_0$,
gives
\begin{equation}
  r_{n+1}
  \leq
  \min\left\{
    q r_n,\,
    (1-\eta)r_n+\eta B_0^{(f_0)}s_n(x^\star)
  \right\}
  +Cs_n(x_{n+1}).
  \label{eq:known-mean-optimizer-uncertainty-recursion}
\end{equation}
In particular,
\begin{equation}
  r_{n+1}
  \leq q r_n+Cs_n(x_{n+1}).
  \label{eq:known-mean-one-step}
\end{equation}
By~\eqref{eq:selected-point-innovation},
$s_n(x_{n+1})=v_{n+1}$, so
Theorem~\ref{thm:order-statistic-transfer} gives the finite-budget bounds.
Since $k_0$ is continuous,
Lemma~\ref{lem:delta-vanishes} and
Corollary~\ref{cor:qualitative-consistency} also give $r_n\to0$.

\paragraph{Kernel-specific rates.}

Lemma~\ref{lem:residual-envelope-domination} transfers the separation-radius
estimates in Section~\ref{sec:kernels} to $k_0$, yielding the Mat\'ern and
squared-exponential rates under~\eqref{eq:approx-ei}.  For the fixed prior
mean $\mu_0$, these rates hold uniformly over
$$
  \mu_0+\{h\in\Hk,\,\lVert h\rVert_{\Hk}\leq B\}.
$$
Indeed, $B_0^{(f_0)}\leq B$, so Remark~\ref{rem:ei-constants} gives
$q\leq q_B<1$ and $C\leq C_B$.  The diagonal normalization
\eqref{eq:normalization} gives $\lVert f_0\rVert_\infty\leq B$, and hence
$$
  r_{n_0}
  \leq
  2B+\sup_{x\in\X}\mu_0(x)-\inf_{x\in\X}\mu_0(x).
$$

\paragraph{Exact one-dimensional squared-exponential rate.}

When $d=1$, $k$ is squared-exponential, and EI is maximized exactly, the rate
in~\eqref{eq:se-one-dimensional-exact-rate} also holds for a known continuous
prior mean.  Interpolation gives $\EI_n(x)=0$ at queried points, whereas strict
positive definiteness makes the predictive distribution nondegenerate and
$\EI_n(x)>0$ at unqueried points.  Thus exact EI selects a new point whenever
one remains.

For $T>n_0$, let $N:=T-n_0$.  If every point of $\X$ has been queried by
time $T-1$, then $r_T=0$.  Otherwise, the first $T-1$ queries contain at least
$N$ distinct points.  With $\eta=1$, the branch
$(1-\eta)r_{T-1}+\eta B_0^{(f_0)}s_{T-1}(x^\star)$ in
\eqref{eq:known-mean-optimizer-uncertainty-recursion}, together with the bounds
$B_0^{(f_0)}\leq B$ and $C\leq C_B$, gives
$$
  r_T
  \leq B s_{T-1}(x^\star)+C_B s_{T-1}(x_T)
  \leq(B+C_B)\sup_{x\in\X}s_{T-1}(x).
$$
Removing repetitions leaves $s_{T-1}$ unchanged.  The power function based on
any $N$ of the distinct points is an upper bound for $s_{T-1}$.
Proposition~\ref{prop:yarotsky-power} therefore gives, for every
$0<b''<1/2$,
$$
  r_T
  \leq
  (B+C_B)A_{b''}\exp\{-b''N\log(eN)\},
$$
which proves~\eqref{eq:se-one-dimensional-exact-rate} uniformly on the
translated ball.

\begingroup
\raggedright
\bibliographystyle{abbrvnat}
\bibliography{refs}
\endgroup

\end{document}